\documentclass{article}

\setlength{\paperheight}{11in}
\setlength{\paperwidth}{8.5in}

\oddsidemargin .5in    
\evensidemargin .5in
\marginparwidth 0.07 true in
\topmargin -0.625in
\addtolength{\headsep}{0.25in}
\textheight 9.0 true in       
\textwidth 5.5 true in        
\widowpenalty=10000
\clubpenalty=10000

\flushbottom \sloppy

\usepackage{cite}
\usepackage{amsmath,amssymb,amsfonts}
\usepackage{amsthm,thmtools}
\usepackage{algorithmic}
\usepackage{graphicx}
\usepackage{textcomp}
\def\BibTeX{{\rm B\kern-.05em{\sc i\kern-.025em b}\kern-.08em
    T\kern-.1667em\lower.7ex\hbox{E}\kern-.125emX}}

\usepackage{cleveref}
\usepackage{nicefrac}       
\usepackage{microtype}      
\usepackage{xcolor}  
\usepackage{mathtools}
 \usepackage{algorithm}
\usepackage{caption}
\usepackage{enumitem}
\captionsetup[figure]{font=small,labelfont=small}
\captionsetup[table]{font=small,labelfont=small}

\newtheorem{prop}{Proposition}
\newtheorem{theorem}{Theorem}
\newtheorem{lemma}{Lemma}
\newtheorem{coro}{Corollary}
\newtheorem{rmk}{Remark}
\newtheorem{defi}{Definition}

\newtheorem{assump}{Assumption}

\DeclareMathOperator*{\argmax}{arg\,max}

 \newcommand{\bR}{\mathbb{R}}
 
   \newcommand{\cD}{\mathcal{D}}

 \newcommand{\bE}{\mathbb{{E}}}
 \newcommand{\cA}{\mathcal{A}}
  \newcommand{\cF}{\mathcal{F}}
  
 \newcommand{\cS}{\mathcal{S}}
 \newcommand{\cX}{\mathcal{X}}
 \newcommand{\cI}{\mathcal{I}}

  \newcommand{\Prtheta}{\textup{Pr}^{\theta}}
   \newcommand{\Dalpha}{\Delta^\alpha(|\cA_i|)}
    \newcommand{\NEgap}{\textup{\texttt{NE-gap}}}
\newcommand{\stay}{\textup{Stay}}
\newcommand{\lleft}{\textup{Left}}
\newcommand{\rright}{\textup{Right}}
\newcommand{\up}{\textup{Up}}
\newcommand{\down}{\textup{Down}}

\newenvironment{talign*}
 {\csname align*\endcsname}
 {\endalign}
\allowdisplaybreaks
\title{Gradient play in stochastic games: stationary points, convergence, and sample complexity}
\author{Runyu (Cathy) Zhang, Zhaolin Ren, Na Li \thanks{R. Zhang, Z. Ren, and N. Li are affiliated with Harvard School of Engineering and Applied Sciences, (e-mail: runyuzhang@fas.harvard.edu, zhaolinren@g.harvard.edu, nali@seas.harvard.edu) } \thanks{This research is funded by NSF CAREER ECCS-1553407, NSF AI institute 2112085, NSF CNS: 2003111, ONR YIP N00014-19-1-2217.}}
\date{January 2022}

\begin{document}

\maketitle
\begin{abstract}
We study the performance of the gradient play algorithm for stochastic games (SGs), where each agent tries to maximize its own total discounted reward by making decisions \textit{independently} based on current state information which is shared between agents. Policies are directly parameterized by the probability of choosing a certain action at a given state. We show that Nash equilibria (NEs) and first-order stationary policies are equivalent in this setting, and give a local convergence rate around strict NEs. Further, for a subclass of SGs called Markov potential games (which includes the setting with identical rewards as an important special case), we design a sample-based reinforcement learning algorithm and give a non-asymptotic global convergence rate analysis for both exact gradient play and our sample-based learning algorithm. Our result shows that the number of iterations to reach an $\epsilon$-NE scales linearly, instead of exponentially, with the number of agents. Local geometry and local stability are also considered, where we prove that strict NEs are local maxima of the total potential function and fully-mixed NEs are saddle points.
\end{abstract}

\section{Introduction}
 Multi-agent systems find applications in a wide range of societal systems, e.g. electric grids, traffic networks, smart buildings and smart cities etc. Given the complexity of these systems, multi-agent reinforcement learning (MARL) has gained increasing attention in recent years (e.g. \cite{daneshfar2010, ShalevShwartz2016}) 
 Among MARL algorithms, policy gradient-type methods are highly popular because of their flexibility and capability to incorporate structured state and action spaces. However, while many recent works \cite{Zhang18,Hoi-To18, chen18,li19,qu2020} have studied the performance of multi-agent policy gradient algorithms, due to a lack of understanding of the optimization landscape in these multi-agent learning problems, most works can only show convergence to a first-order stationary point. Deeper understanding of the quality of these stationary points is missing even in the simple identical-reward multi-agent RL setting.


In this paper, we investigate this problem from a game-theoretic perspective. We model the multi-agent system as a stochastic game (SG) where agents take \textit{independent} stochastic policies and can have different reward functions. The study of SGs dates back to as early as the 1950s by \cite{shapley53} with a series of follow-up works on developing NE-seeking algorithms, especially in the RL setting (e.g. \cite{Littman94,Bowling00,Shoham03,Bucsoniu10,Lanctot17, Zhang19} and citations therein). While well-known classical algorithms for solving SGs are mostly value-based, such as Nash-Q learning \cite{Hu03}, Hyper-Q learning \cite{Tesauro03}, and WoLF-PHC \cite{Bowling01}, gradient-based algorithms have also started to gain popularity in recent years due to their advantages as mentioned earlier (e.g. \cite{Abdallah08,Zhang10,Foerster17}). 

In this work, we aim to gain a deeper understanding of the structure of first-order stationary points and the dynamical behavior for these gradient-based methods, with a particular focus on answering the following questions: 1) How do the first-order stationary points relate to the NEs of the underlying game?, 2) What is the stability of the individual NEs?, 3) How do agents learn from samples in this environment?

These questions have already been widely discussed in other settings, e.g.,  {one-shot (stateless) finite-action games \cite{Shapley64,Crawford85,Jordan93,Krishna98,Shamma05,Kohlberg86,Van91, foster2006regret, germano2007global, marden2009payoff}}, one-shot continuous games \cite{mazumdar20}, zero-sum linear quadratic (LQ) games \cite{Zhang19-LQgame}, etc. There are both negative and positive results depending on the settings. For one-shot continuous games, \cite{mazumdar20} proved a negative result suggesting that gradient flow has stationary points (even local maxima) that are not necessarily NEs. Conversely, \cite{Zhang19-LQgame} designed projected nested-gradient methods that provably converge to NEs in zero-sum LQ games. However, much less is known in the tabular setting of SGs with finite state-action spaces.

\noindent\textbf{Contributions.} We consider the \emph{gradient play} algorithm for the infinite time horizon, discounted reward SGs with independent, directly-parameterized agents' policies.  Through generalizing the gradient domination property in \cite{agarwal2020} to the SG setting, we
first establish the equivalence of first-order stationary policies and Nash equilibria (Theorem \ref{thm:equivalence-stationary-NE}). This result suggests that even if agents have an identical reward,  the first-order stationary points are only equivalent to Nash equilibria, which are usually non-unique and have different reward values. This is fundamentally different from the centralized learning case \cite{agarwal2020} where it can be shown that the first order stationary point is the unique global optimal solution.

Then we study the convergence of gradient play for SGs. For general games, it is known that gradient play may fail to obtain global convergence \cite{Shapley64,Crawford85,Jordan93,Krishna98}. Thus we firstly focus on characterizing some local properties for the general cases. In particular, we characterize the structure of strict NEs and show that gradient play locally converges to strict NEs within finite steps (Theorem~\ref{theorem:local-convergence-general-game}).

Next we study a special class of SGs called Markov potential games (MPGs) \cite{Gonzalez13,Macua18,leonardos21}, {which includes identical reward multi-agent RL \cite{Tan93,Claus98,Panait05,matignon2012independent, zhang2019collaborative,oroojlooy2023review} as an important special case.} Concurrently, this work and \cite{leonardos21} have established the global convergence rate to a NE for gradient play under MPGs (Theorem \ref{theorem:potential-game-convergence}). However, the result does not specify which NE the policies converge to. 
Given the fact that there are many NEs that would have poor global value, global convergence results have a limited implication on the algorithm performance. This motivates us to study the local geometry around some specific types of NEs. We show that strict NEs are local maxima of the total potential function, thus stable points under gradient play, and that fully mixed NEs are saddle points, thus unstable points under gradient play (Theorem \ref{theorem:MPG-local-maximum-saddle-points}). 

Then, we design a fully-decentralized sample-based gradient play algorithm and prove that it can find an $\epsilon$-Nash equilibrium with high probability using $\widetilde{O}\left(\frac{n}{\epsilon^6} \textup{poly}\left(\frac{1}{1-\gamma}, |\cS|, \max_i|\cA_i|\right)\right)$ samples (Theorem \ref{thm:main}, $|\cS|,|\cA_i|$ denote the size of the state space and action space of agent $i$ respectively). The key enabler of our algorithm is the existence of an underlying \textit{averaged MDP} for each agent when other agents' policies are fixed. Our learning method can be viewed as a \textit{model-based} \textit{policy evaluation} method with respect to agents' averaged MDPs. This averaged MDP concept could be applied to design many other MARL algorithms, especially policy-evaluation-based methods.

\noindent\textbf{Comparison to other works on NE learning for SGs:} 
There are some recent studies on general SGs with finite state-action spaces. However, either the structure of SGs or the methods they consider are different from our setting. For example, \cite{song2021,jin2021v} consider learning correlated equilibria (CCE) rather than NEs for finite time horizon general-sum games; 
{\cite{Arslan16} and \cite{Yongacoglu22} propose decentralized learning algorithms for the weakly acyclic games, which include the identical interest game as a special case, but only asymptotic convergence is considered}; \cite{daskalakis2021, ozdaglar2021independent} considers convergence to NE for two-player zero-sum games. In addition, \cite{Zhang18, li19, qu2020} consider slightly different MARL settings, where agents collaboratively maximize the summation of agents' reward with either full or partial state observation. They also require communication between neighboring agents for a better global coordination.

For the MPG subclass,\cite{zhang22MTNS,leonardos21,song2021,zhang2022,fox2022independent} study the convergence to a NE.  \cite{song2021} designs the Nash-CA (Nash Coordinate Ascent) algorithm which requires agents to update sequentially and does not belong to the gradient-based algorithm class. While \cite{zhang2022, fox2022independent} consider gradient-based algorithms, they study softmax policies which are different from directly-parameterized policies. {\cite{ding2022independent} considers policy gradient with function approximation}. \cite{leonardos21} is the most related work to this paper, which studies the performance of gradient-based algorithms under direct parameterization. It studies the global convergence rate and develop sample complexity results for gradient play, but do not study the local geometry for general SGs.  Additionally, the sample-based algorithm considered in \cite{leonardos21} is based on Monte-Carlo gradient estimation, which might suffer from high variance in real implementation and is very different from our algorithm that estimates the gradient via estimating the ``model'' with respect to agents' averaged MDPs. Moreover, our concept of ``averaged'' MDPs could also serve as a useful tool for the design and analysis of other MARL algorithms. 




\section{Problem setting and preliminaries}\label{sec:Problem-setup}
We consider a stochastic game (SG, \cite{shapley53})  $\mathcal{M} = (N, \cS, ~\cA = \cA_1\times\dots\times \cA_n, ~P, ~r = (r_1, \dots, r_n), ~\gamma, ~\rho)$ with $n$ agents   which is specified by an agent set $N=\left\{1,2,\dots,n\right\}$, a finite state space $\cS$, a finite action space $\mathcal{A}_i$ for each agent $i\in N$,  a transition model $P$ where $P(s'|s,a) = P(s'|s,a_1, \dots, a_n)$ is the probability of transitioning into state $s'$ upon taking action $a:=(a_1,\ldots,a_n)$ in state $s$ where $a_i\in\cA_i$ is the action of agent $i$, agent $i$'s reward function $r_i: \cS\times\cA \rightarrow [0,1]$, a discount factor $\gamma \in [0,1)$, and an initial state distribution $\rho$ over $\cS$.

A stochastic policy $\pi: \cS\rightarrow\Delta(\cA)$ (where $\Delta(\cA)$ is the probability simplex over $\cA$) specifies a strategy in which agents choose their actions \textit{jointly} based on the current state in a stochastic fashion, i.e. $\Pr(a_t|s_t) = \pi(a_t|s_t)$. A decentralized stochastic policy is a special subclass of stochastic policies, with $\pi = \pi_1\times\ldots\times\pi_n$, where $\pi_i: \cS \rightarrow \Delta(\cA_i)$. For decentralized stochastic policies, each agent takes its action based on the current state $s$ \textit{independently of} other agents' choices of actions, i.e.:\vspace{-3pt}
\begin{equation*}
 \textstyle  \Pr(a_t|s_t) = \pi(a_t|s_t) = \prod_{i=1}^n \pi_i(a_{i,t}|s_t),  a_t \!=\! (a_{1,t},\! \dots\!, a_{n,t}).\vspace{-3pt}
\end{equation*}
For notational simplicity, we define: ~$\pi_{I}(a_I|s):= \prod_{i\in I} \pi_i(a_i|s)$, where $I\subseteq N$ is an index set. Further, we use the notation $-i$ to denote the index set $N\backslash \{i\}$.

We consider \textit{direct decentralized policy parameterization}, where agent $i$'s policy is parameterized by $\theta_i$:
\begin{equation}\label{eq:direct distributed parameterization}
    \pi_{i,\theta_i}(a_i|s) = \theta_{i,(s,a_i)}, \quad i=1,2,\dots,n.
\end{equation}
For notational simplicity, we abbreviate $\pi_{i,\theta_i}(a_i|s)$ as $\pi_{\theta_i}(a_i|s)$, and $\theta_{i,(s,a_i)}$ as $\theta_{s,a_i}$. Here $\theta_i \in \Delta(\cA_i)^{|\cS|}$, i.e. $\theta_i$ is subject to the constraints $\theta_{s,a_i}\ge 0$ and $\sum_{a_i\in\cA_i}\theta_{s,a_i} = 1$ for all $s\in \cS$. The global joint policy is given by:  $\pi_\theta(a|s) = \prod_{i=1}^n\pi_{\theta_i}(a_i|s) =\prod_{i=1}^n\theta_{s,a_i}.$ We use $\cX_i :=  \Delta(\cA_i)^{|\cS|}, \cX := \cX_1\times\cdots\times\cX_n$ to denote the feasible region of $\theta_i$ and $\theta$.

Agent $i$'s value function {\cite{agarwal2019reinforcement}} $V_i^\theta:\cS\rightarrow \mathbb{R}, i\in N$ is defined as the discounted sum of future rewards starting at state $s$ via executing $\pi_\theta$, i.e.\vspace{-3pt}
\begin{equation*}
    \textstyle V_i^\theta(s) := \bE \left[\sum_{t=0}^\infty \gamma^t r_i(s_t,a_t)\big|~\pi_\theta,s_0=s\right],
\end{equation*}
where the expectation is with respect to the random trajectory $\tau = (s_t,a_t,r_{i,t})_{t=0}^\infty$ where $a_t\sim \pi_\theta(\cdot|s_t), s_{t+1} = P(\cdot|s_t,a_t)$. We denote agent $i$'s total reward starting from initial state $s_0\sim \rho$ as:
\vspace{-10pt}
\begin{equation*}
 \quad  \textstyle  J_i(\theta) = J_i(\theta_1, \dots,\theta_n) := \bE_{s_0\sim\rho} V_i^\theta(s_0).
\vspace{-1pt}
\end{equation*}
In the game setting, Nash equilibrium is often used to characterize the performance of agents' policies.
\vspace{-5pt}
\begin{defi} \label{def:NE}
{(Nash equilibrium, {c.f. \cite{fudenberg1991game,maschler2020game})}}
A policy $\theta^* = (\theta_1^*, \dots, \theta_n^*)$ is called a Nash equilibrium (NE) if 
$$J_i(\theta_i^*, \theta_{-i}^*) \ge J_i(\theta_i', \theta_{-i}^*), \quad \forall \theta_{i}'\in \cX_i,\quad i\in N$$
The equilibrium is called a strict NE if the inequality holds strictly for all $ \theta_i'\in \cX_i$ and $i\in N$.
The equilibrium is called a pure NE if $\theta^*$ corresponds to a deterministic policy.
The equilibrium is called a mixed NE if it is not pure.  Further, the equilibrium is called a fully mixed NE if every entry of $\theta^*$ is strictly positive, i.e.:
$\theta_{s,a_i}^*>0, ~\forall~ a_i\in\cA_i, ~\forall~ s\in\cS, ~ i\in N$.
\end{defi}

We define the \textit{discounted state visitation distribution} {\cite{agarwal2019reinforcement}} $d_\theta$ of a policy $\pi_\theta$ given an initial state distribution $\rho$ as:\vspace{-5pt}
\begin{equation}\label{eq:discounted state visitation distribution}
  \textstyle  d_\theta(s) := \bE_{s_0\sim\rho} (1-\gamma) \sum_{t=0}^\infty\gamma^t \Prtheta(s_t=s|s_0),
\end{equation}
where $\Prtheta(s_t=s|s_0)$ is the state visitation probability that $s_t=s$ when executing $\pi_\theta$ starting at state $s_0$. Throughout the paper, we make the following assumption on the SGs we study.
\begin{assump}\label{assump:like-ergodicity}
The stochastic game $\mathcal{M}$ satisfies:~
$    d_{\theta}(s) > 0, ~ \forall s\in\cS,~  \forall \theta\in \cX$.
\end{assump}
Assumption \ref{assump:like-ergodicity} requires that every state is visited with positive probability, which is a standard assumption for convergence proofs in the RL literature (e.g. \cite{agarwal2020, Mei20},{\cite{leonardos21,ozdaglar2021independent}}). Note that this assumption could be easily satisfied if the initial distribution $\rho$ satisfy $\rho(s)>0, \forall s\in \mathcal{S}$.

Similar to centralized RL {\cite{agarwal2019reinforcement}}, define agent $i$'s $Q$-function $Q_i^\theta$ and its advantage function $A_i^\theta$ as:\vspace{-3pt}
\begin{equation*}
\begin{split}
    Q_i^\theta(s,a) &:= \bE \left[\sum_{t=0}^\infty \gamma^t r_i(s_t,a_t)\big|~\pi_\theta,s_0=s, a_0=a\right], \\ A_i^\theta(s,a)&:= Q_i^\theta(s,a) - V_i^\theta(s).
\end{split}
\end{equation*}


\noindent \textbf{\textit{`Averaged'} Markov decision process (MDP):} We further define agent $i$'s \textit{`averaged' Q-function} $\overline {Q_i^{\theta}}: \cS\times\cA_i \rightarrow \bR$ and \textit{`averaged' advantage-function} $\overline {A_i^{\theta}}: \cS\times\cA_i \rightarrow \bR$ as:
\begin{equation}\label{eq:averaged-Q-advantage}
\begin{split}
  \textstyle  \overline {Q_i^{\theta}}(s,a_i)& \textstyle:=\sum_{a_{-i}}\pi_{\theta_{-i}}(a_{-i}|s)Q_i^\theta(s,a_i,a_{-i}),\\ \textstyle \overline {A_i^{\theta}}(s,a_i)&\textstyle:=\sum_{a_{-i}}\pi_{\theta_{-i}}(a_{-i}|s)A_i^\theta(s,a_i,a_{-i}).
\end{split}
\end{equation}
Similarly, we define agent $i$'s \textit{`averaged' transition probability distribution} $\overline{P_i^\theta}:\cS\times\cS\times\cA_i\rightarrow\bR$, and \textit{`averaged' reward} $\overline{r_i^\theta}:\cS\times\cA_i\rightarrow\bR$ as:
\begin{equation*}
\begin{split}
      \textstyle  \overline{P_i^\theta}(s'|s,a_i)&\textstyle:=\sum_{a_{-i}}\pi_{\theta{-i}}(a_{-i}|s)P(s'|s,a_i,a_{-i}),\\
   \textstyle \overline{r_i^\theta}(s,a_i)&\textstyle:=\sum_{a_{-i}}\pi_{\theta{-i}}(a_{-i}|s)r_i(s,a_i,a_{-i})
\end{split}
\end{equation*}
From its definition, the averaged Q-function satisfies the following Bellman equation:
\begin{lemma}\label{lemma:averaged-Bellman-equation}
$\overline{Q_i^\theta}$ satisfies:
\begin{equation}\label{eq:averaged-Bellman-equation}
 \overline{Q_i^\theta}(s,\!a_i) \!=\! \overline{r_i^\theta}(s,\!a_i) \!\!+\!\! \gamma\!\! \sum_{s',a_i'}\!\!\pi_{\theta_i}\!(a_{i}'|s')\overline{P_i^\theta}\!(s'|s,a_i)\overline{Q_i^\theta}(s',a_i')\vspace{-10pt}
\end{equation}
\end{lemma}
Lemma \ref{lemma:averaged-Bellman-equation} suggests that the averaged Q-function $\overline{Q_i^\theta}$ is indeed the Q-function for the MDP defined on action space $\cA_i$, with $\overline{r_i^\theta}, \overline{P_i^\theta}$ as its stage reward and transition probability, respectively. We define this MDP as the \textit{`averaged' MDP} of agent $i$, i.e., $\mathcal{M}_i^\theta = (\cS,\cA_i, \overline{P_i^\theta}, \overline{r_i^\theta},\gamma,\rho) $.
The notion of an `averaged' MDP will serve as an important intuition when designing the sample-based algorithm. Note that the `averaged' MDP is only well-defined when the policies of the other agents $\theta_{-i}$ are kept fixed. When this is indeed the case, agent $i$ can be treated as an independent learner with respect to its own `averaged' MDP. Thus, various classical policy evaluation RL algorithms can then be applied. Additionally, we can apply the performance difference lemma \cite{Kakade02} to the averaged MDP to derive a corresponding lemma for SGs which is useful throughout the paper.
\begin{lemma}(Performance difference lemma, for SGs, proof see Appendix \ref{apdx:proof-performance-difference-lemma})\label{lemma:averaged-performance-difference-lemma} Let $\theta' = (\theta_i', \theta_{-i})$
\begin{equation*}
    J_i(\theta_i', \theta_{-i})- J_i(\theta_i, \theta_{-i}) =\frac{1}{1-\gamma} \sum_{s,a_i} d_{\theta'}(s) \pi_{\theta'_i}(a_i|s)\overline{A_i^{\theta}}(s, a_i).
\end{equation*}
\end{lemma}
Note that in the single agent case ($n=1$), Lemma \ref{lemma:averaged-performance-difference-lemma} is the same as the original performance difference lemma known in literature, e.g., Lemma 6.1 in \cite{Kakade02}.
\vspace{-5pt}

\section{Gradient Play for General Stochastic Games}\label{sec:gradient-play-SG}
Under direct distributed parameterization, the gradient play algorithm is given by:
\begin{equation}\label{eq:gradient-play-discrete}
  \theta_i^{(t+1)} = \text{Proj}_{\cX_i}(\theta_i^{(t)}+\eta\nabla_{\theta_i}J_i(\theta_i^{(t)})),  ~\eta\! >\!0.
\end{equation}
Gradient play can be viewed as a `better response' strategy, where agents update their own parameters by gradient ascent with respect to their own rewards.

A first-order stationary point is defined as such:
\begin{defi}\label{def:first-order}{(First-order stationary policy)}
A policy $\theta^* = (\theta_1^*, \dots, \theta_n^*)$ is called a first-order stationary policy if $(\theta_i' -\theta_i^*)^\top \nabla_{\theta_i}J_i(\theta^*)\le 0,\ \forall \theta_i' \in \cX_i,\  i\in N$.
\end{defi}
It is not hard to verify that $\theta^*$ is a first-order stationary policy if and only if it is a fixed point under gradient play \eqref{eq:gradient-play-discrete}. Comparing Definition \ref{def:NE} (of NE) and Definition \ref{def:first-order}, we know that NEs are first-order stationary policies, but not necessarily vice versa. For each agent $i$, first-order stationarity does  not imply that $\theta_i^*$ is optimal among all possible $\theta_i$ given  $\theta_{-i}^*$. However, interestingly, we will show that NEs are equivalent to first-order stationary policies due to a gradient domination property that we will show later. Before that, we first calculate the explicit form of the gradient $\nabla_{\theta_i}J_i$.

Policy gradient theorem \cite{Sutton1999} gives an efficient formula for the gradient:
\begin{equation}\label{eq:policy-gradient-theorem}
    \nabla_{\!\theta} \bE_{s_{0}\sim\rho} \!V_i^\theta(s_0) \!=\! \frac{1}{1\!-\!\gamma} \bE_{s\sim d_\theta\!,a\sim\pi_\theta(\!\cdot|s)}[\nabla_\theta\log\pi_\theta(a|s)Q_i^\theta(s,a)], 
\end{equation}
Applying \eqref{eq:policy-gradient-theorem}, the gradient $\nabla_{\theta_i}J_i$ can be written explicitly as follows:
\begin{lemma}(Proof see Appendix \ref{apdx:proof-policy-gradient})\label{lemma:policy-gradient-direct-parameterization}
For direct distributed parameterization \eqref{eq:direct distributed parameterization},
\begin{equation}\label{eq:policy-gradient-direct-parameterization}
      \frac{\partial J_i(\theta)}{\partial{\theta_{s,a_i}}} = \frac{1}{1-\gamma}d_{\theta}(s)  \overline {Q_i^{\theta}}(s, a_i)
\end{equation}
\end{lemma}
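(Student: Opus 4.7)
The plan is to specialize the policy gradient theorem \eqref{eq:policy-gradient-theorem} to the direct distributed parameterization \eqref{eq:direct distributed parameterization} and then simplify using the product structure of $\pi_\theta$. The computation is essentially a one-line chain-rule calculation together with careful bookkeeping of the indicator that picks out the right state and the right component of the action vector.

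First I would compute the score function $\nabla_\theta \log \pi_\theta(a|s)$. Because $\pi_\theta(a|s) = \prod_{j=1}^n \theta_{s,a_j}$, the log factorizes as $\log \pi_\theta(a|s) = \sum_{j=1}^n \log \theta_{s,a_j}$, so the parameter $\theta_{s,a_i}$ (where $a_i \in \cA_i$ is a fixed target action and $s$ a fixed target state) appears in $\log \pi_\theta(a'|s')$ if and only if $s' = s$ and $a_i' = a_i$, i.e.
\begin{equation*}
\frac{\partial \log \pi_\theta(a'|s')}{\partial \theta_{s,a_i}} = \frac{\mathbf{1}[s' = s,\, a_i' = a_i]}{\theta_{s,a_i}}.
\end{equation*}

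Next I would substitute into \eqref{eq:policy-gradient-theorem}. Expanding the expectations as sums,
\begin{equation*}
\frac{\partial J_i(\theta)}{\partial \theta_{s,a_i}} = \frac{1}{1-\gamma}\sum_{s'} d_\theta(s') \sum_{a'} \pi_\theta(a'|s') \frac{\mathbf{1}[s' = s,\, a_i' = a_i]}{\theta_{s,a_i}} Q_i^\theta(s',a').
\end{equation*}
The indicator collapses the sum over $s'$ to $s' = s$ and restricts the sum over $a'$ to those with $a_i' = a_i$, leaving a sum over $a_{-i}$. Using $\pi_\theta((a_i,a_{-i})|s) = \theta_{s,a_i}\,\pi_{-i}(a_{-i}|s)$, the factor $\theta_{s,a_i}$ cancels the $1/\theta_{s,a_i}$ in the score (for any $\theta_{s,a_i} > 0$; the boundary case $\theta_{s,a_i}=0$ is handled by continuity of the right-hand side and the standard convention that $0/0 = 0$ applies after cancellation), yielding
\begin{equation*}
\frac{\partial J_i(\theta)}{\partial \theta_{s,a_i}} = \frac{1}{1-\gamma}\, d_\theta(s) \sum_{a_{-i}} \pi_{-i}(a_{-i}|s) Q_i^\theta(s,a_i,a_{-i}).
\end{equation*}

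Finally, I would recognize the sum on the right as the averaged $Q$-function defined in \eqref{eq:averaged-Q-advantage}, giving $\frac{1}{1-\gamma} d_\theta(s)\overline{Q_i^\theta}(s,a_i)$ as claimed. There is no real obstacle here; the only mildly delicate point is the notation, specifically keeping track of the two different roles played by $(s,a_i)$ (as the parameter index being differentiated against versus as the dummy index in the expectation), which is what the indicator function handles cleanly.
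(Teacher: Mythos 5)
Your proposal is correct and follows essentially the same route as the paper's proof: compute the score function $\partial \log\pi_\theta(a'|s')/\partial\theta_{s,a_i}$ as an indicator times $1/\theta_{s,a_i}$, substitute into the policy gradient theorem \eqref{eq:policy-gradient-theorem}, cancel the $\theta_{s,a_i}$ factor, and identify the remaining sum over $a_{-i}$ as $\overline{Q_i^\theta}(s,a_i)$. Your additional remark about handling the boundary case $\theta_{s,a_i}=0$ by continuity is a minor refinement the paper omits.
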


\subsection{Gradient domination and the equivalence between NE and first-order stationary policy.}
Lemma 4.1 in \cite{agarwal2020} established gradient domination for centralized tabular MDP under direct parameterization. We can show that a similar property still holds for SGs. 
\begin{lemma}\textup{(Gradient domination)}\label{lemma:gradient-domination-MAMDP}
For direct distributed parameterization \eqref{eq:direct distributed parameterization}, we have that for any $\theta = (\theta_1, \dots,\theta_n) \in \cX$ and any $\theta_i'\in\cX_i, ~ i\in N$:
\begin{equation}\label{eq:gradient-domination-MAMDP}
    J_i(\theta_i', \!\theta_{\!-\!i})- J_i(\theta_i, \!\theta_{\!-\!i}) \!\le\! \left\|\frac{d_{\theta'}}{d_\theta}\right\|_\infty \!\max_{\overline\theta_i\!\in\!\cX_i} (\overline\theta_i - \theta_i)^\top \nabla_{\theta_i} J_i(\theta),
\end{equation}
where $\left\|\frac{d_{\theta'}}{d_\theta}\right\|_\infty:=\max_s \frac{d_{\theta'}(s)}{d_\theta(s)}$, and $\theta' = (\theta_i', \theta_{-i})$.
\begin{proof}
According to Lemma \ref{lemma:averaged-performance-difference-lemma}:
\begin{equation*}
   \!J_i(\theta_i', \!\theta_{-i})\!-\! J_i(\theta_i, \!\theta_{-i})\!=\!\frac{1}{1\!-\!\gamma} \!\sum_{s,a_i} \!d_{\theta'}\!(s) \pi_{\theta'_i}(a_i|s)\overline{A_i^{\theta}}(s, \!a_i).\!\vspace{-5pt}
\end{equation*}
From the definition of `averaged' advantage function:
\begin{equation*}
    \textstyle \sum_{a_i}\pi_{\theta_i}(a_i|s)\overline {A^{\theta}_i}(s,a_i) = 0, \quad \forall s\in \cS
\end{equation*}
which implies:
$\max_{a_i\in\cA_i} \overline {A^{\theta}_i}(s,a_i)\ge 0,$ thus we have that:
\begin{equation}\label{eq:cost-difference}
\begin{split}
    &J_i(\theta_i', \theta_{-i})- J_i(\theta_i, \theta_{-i}) =\frac{1}{1-\gamma} \sum_{s,a_i} d_{\theta'}(s) \pi_{\theta'_i}(a_i|s)\overline{A_i^{\theta}}(s, a_i)\\
    &\le\!\!  \sum_{s} \!\frac{d_{\theta'}(s)}{1-\gamma} \max_{a_i\in\cA_i} \!\overline {A_i^{\theta}}(s, a_i)\!=\!  \sum_{s} \frac{d_{\theta'}(s)}{d_\theta(s)}\frac{d_{\theta}(s)}{1\!-\!\gamma} \max_{a_i\in\cA_i} \overline {A_i^{\theta}}(s, a_i)\\
    &\le \frac{1}{1-\gamma}\left\|\frac{d_{\theta'}}{d_\theta}\right\|_\infty\sum_{s} d_{\theta}(s) \max_{a_i\in\cA_i} \overline {A_i^{\theta}}(s, a_i).   
\end{split}
\end{equation}
We can rewrite $\frac{1}{1-\gamma}\sum_{s} d_{\theta}(s) \max_{a_i\in\cA_i} \overline {A_i^{\theta}}(s, a_i)   $ as:
\begin{equation}\label{eq:first-order-detail}
\begin{split}
 &\frac{1}{1-\gamma}\sum_{s} d_{\theta}(s) \max_{a_i\in\cA_i} \overline {A_i^{\theta}}(s, a_i)  \\  &= \frac{1}{1-\gamma}\max_{\overline\theta_i\in\cX_i}\sum_{s,a_i} d_{\theta}(s)\pi_{\overline\theta_i}(a_i|s) \overline {A_i^{\theta}}(s, a_i)\\
    &= \max_{\overline\theta_i\in\cX_i}\sum_{s,a_i} (\pi_{\overline\theta_i}(a_i|s) - \pi_{\theta_i}(a_i|s))\frac{1}{1-\gamma}d_{\theta}(s) \overline {A_i^{\theta}}(s, a_i)\\
   & = \max_{\overline\theta_i\in\cX_i}\left(\sum_{s,a_i} (\pi_{\overline\theta_i}(a_i|s) - \pi_{\theta_i}(a_i|s))\frac{1}{1-\gamma}d_{\theta}(s) \overline {Q_i^{\theta}}(s, a_i)\right.\\
    &\quad- \underbrace{\left. \sum_{s}\frac{1}{1-\gamma}d_{\theta}(s)V(s)\sum_{a_i}(\pi_{\overline\theta_i}(a_i|s) - \pi_{\theta_i}(a_i|s)) \right)}_{=0}\\
    &= \max_{\overline\theta_i\in\cX_i}\sum_{s,a_i} (\pi_{\overline\theta_i}(a_i|s) - \pi_{\theta_i}(a_i|s))\frac{1}{1-\gamma}d_{\theta}(s) \overline {Q_i^{\theta}}(s, a_i)\\
    & = \max_{\overline\theta_i\in\cX_i} (\overline\theta_i - \theta_i)^\top \nabla_{\theta_i} J_i(\theta).
\end{split}
\end{equation}
Substituting this into \eqref{eq:cost-difference}, we may conclude that
$$J_i(\theta_i', \theta_{-i})- J_i(\theta_i, \theta_{-i}) \le \left\|\frac{d_{\theta'}}{d_\theta}\right\|_\infty \max_{\overline\theta_i\in\cX_i} (\overline\theta_i - \theta_i)^\top \nabla_{\theta_i} J_i(\theta)$$
and this completes the proof.
\end{proof}
\end{lemma}

 For the single-agent case ($n=1$), \eqref{eq:gradient-domination-MAMDP} is consistent with the result in \cite{agarwal2020}, i.e.:~~$  J(\theta') - J(\theta) \le   \left\|\frac{d_{\theta'}}{d_\theta}\right\|_\infty \max_{\overline\theta\in\cX} (\overline\theta - \theta)^\top \nabla J(\theta)$. However, when there are multiple agents, the condition is much weaker because the inequality requires $\theta_{-i}$ to be fixed. When $n=1$, gradient domination rules out the existence of stationary points that are not global optima. For the multi-agent case, the property can no longer guarantee the equivalence between first-order stationarity and global optimality; instead,  it links the stationary points with NEs as shown in the following theorem.

\begin{theorem}
\label{thm:equivalence-stationary-NE}
Under Assumption \ref{assump:like-ergodicity}, first-order stationary policies and NEs are equivalent. 
\begin{proof}
The definition of a Nash equilibrium naturally implies first order stationarity, because for any $\theta_i\in \cX_i$:
\begin{align*}
    J_i&((1-\eta)\theta_i^* + \eta\theta_i, \theta_{-i}^*) - J_i(\theta_i^*, \theta_{-i}^*) =\\ &\eta(\theta_i-\theta_i^*)^\top \nabla_{\theta_i}J_i(\theta^*) + o(\eta\|\theta_i - \theta_i^*\|) \le 0, \quad \forall~\eta >0
\end{align*}
Letting $\eta \rightarrow 0$ gives the first order stationary condition:
\begin{equation*}
    (\theta_i-\theta_i^*)^\top \nabla_{\theta_i}J_i(\theta^*) \le 0, \quad \forall \theta_i\in \cX_i,
\end{equation*}

It remains to be shown that all first order stationary policies are Nash equilibria. 
From Assumption \ref{assump:like-ergodicity} we know that for any pair of parameters $\theta', \theta^*$,
~~$\left\|\frac{d_{\theta'}}{d_{\theta^*}}\right\|_\infty \!<\!+\infty.$\\
Take $\theta' = (\theta_i', \theta_{-i}^*), \theta^* = (\theta_i^*, \theta_{-i}^*)$. According to Lemma \ref{lemma:gradient-domination-MAMDP}, we have that for any first order stationary policy $\theta^*$,
\begin{equation*}
    J_i(\theta_i', \theta_{-i}^*)\!-\! J_i(\theta_i^*, \theta_{-i}^*) \!\le\! \left\|\frac{d_{\theta'}}{d_{\theta^*}}\right\|_\infty\!\! \max_{\overline\theta_i\in\cX_i} (\overline\theta_i \!-\! \theta_i^*)^{\!\!\top} \nabla_{\theta_i} \!\!J_i(\theta^*) \!\le\! 0,
\end{equation*}
which completes the proof.
\end{proof}
\end{theorem}
We briefly note here that the equivalence between the first-order stationary points and NEs holds for all SGs that satisfy Assumption~\ref{assump:like-ergodicity}. One implication from the theorem is that for identical interest case where agents have the same rewards, we can only ensure the first order stationary points to be NEs when the policies are \textit{decentralized} policies. Note that NEs are often none-unique and often with different objective values. This is in contrast to the single agent/\textit{centralized} case where the first order stationary point is equivalent to the global optimal point\cite{agarwal2020}.
\vspace{-6pt}
 \subsection{Local convergence for strict NEs}
Although the equivalence of NEs and stationary points under gradient play has been established, it is in fact difficult to show that gradient play converges to these stationary points. Even in the simpler static (stateless) game setup, gradient play might fail to converge \cite{Shapley64,Crawford85,Jordan93,Krishna98}. One major difficulty is that the vector field $\{\nabla_{\theta_i}J_i(\theta)\}_{i=1}^n$ is not a conservative vector field
. Accordingly, its dynamics may display complicated behavior. Thus, as a preliminary study, instead of looking at global convergence, we focus on the local convergence and restrict our study to a special subset of NEs - the strict NEs. We begin by giving the following characterization of strict NEs:
\begin{lemma}\label{lemma:strict-NE-gap} 
Given a stochastic game $\mathcal{M}$, any strict NE $\theta^*$ is pure, meaning that for each $i$ and $s$, there exist one $a_i^*(s)$ such that $\theta_{s,a_i}^*=\mathbf{1}\{a_i=a_i^*(s)\}$. Additionally, we have 
\begin{align*}
    i)\ &  \textstyle a_i^*(s) = \argmax_{a_i} \overline{A_{i}^{\theta^*}}(s,a_i),\\
    ii)\ & \textstyle \overline{A_{i}^{\theta^*}}(s,a_i^*(s)) =0,\\
    iii)\ &\textstyle \overline{A_{i}^{\theta^*}}(s,a_i)<0, ~\forall~a_i\neq a_i^*(s).
\end{align*}

\end{lemma}
Based on this lemma, we define the following for studying the local convergence of a strict NE $\theta^*$:
\vspace{-3pt}
\begin{equation}
\begin{split}
    &\textstyle \Delta_i^{\theta^*}(s):= \min_{a_i\neq a_i^*(s)}\left|\overline{A_{i}^{\theta^*}}(s,a_i) \right|,\\ &\textstyle \Delta^{\theta^*}:=\min_i\min_s \frac{1}{1-\gamma}d_{\theta^*}(s)\Delta_i^{\theta^*}(s) > 0. 
\vspace{-2pt}
\end{split}
    \label{eq:delta}
\end{equation}

\begin{theorem}\label{theorem:local-convergence-general-game}
(Local finite time convergence around strict NE) 
Define the metric of policy parameters as:~  $D(\theta||\theta'):= \max_{1\le i\le n}\max_{s\in\cS} \|\theta_{i,s} - \theta'_{i,s}\|_1,$ where $\|\cdot\|_1$ denote the $\ell_1$- norm. 
Suppose $\theta^*$ is a \textit{strict} Nash equilibrium, then for any $\theta^{(0)}$ such that~
    $D(\theta^{(0)}||\theta^*)\! \le \!\frac{\Delta^{\theta^*}(1-\gamma)^3}{8n|\cS|\left(\sum_{i=1}^n|\cA_i| \right)},$~
running gradient play \eqref{eq:gradient-play-discrete} will guarantee
    $D(\theta^{(t+1)}||\theta^*)\le \max\left\{D(\theta^{(t)}||\theta^*) - \frac{\eta\Delta^{\theta^*}}{2},0\right\},$
which means that gradient play is going to converge within $ \lceil\frac{ 2D(\theta^{(0)}||\theta^*)}{\eta\Delta^{\theta^*}}\rceil$ steps.
\end{theorem}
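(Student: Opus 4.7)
The plan is to prove the bound by an inductive argument: if $\theta^{(t)}$ lies in a sufficiently small ball around $\theta^*$ in the $D$-metric, a single step of gradient play moves it strictly closer by at least $\eta\Delta^{\theta^*}/2$, until the iterate reaches $\theta^*$ exactly. The initial hypothesis places $\theta^{(0)}$ inside that good ball, and the monotone decrease keeps all later iterates inside it, closing the induction.

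\textbf{Reduction to a coordinate-wise bound.} By Lemma~\ref{lemma:strict-NE-gap}, $\theta^*$ is pure with $\theta^*_{i,s}=e_{a_i^*(s)}$. Since $\theta^{(t)}_{i,s}$ and $\theta^*_{i,s}$ both sum to one,
\begin{equation*}
\|\theta^{(t)}_{i,s}-\theta^*_{i,s}\|_1=\bigl(1-\theta^{(t)}_{s,a_i^*(s)}\bigr)+\sum_{a_i\neq a_i^*(s)}\theta^{(t)}_{s,a_i}=2\bigl(1-\theta^{(t)}_{s,a_i^*(s)}\bigr).
\end{equation*}
Setting $\delta^{(t)}_{i,s}:=1-\theta^{(t)}_{s,a_i^*(s)}$, the target bound reduces to showing that, for every $(i,s)$,
$\delta^{(t+1)}_{i,s}\le \max\{\delta^{(t)}_{i,s}-\eta\Delta^{\theta^*}/4,\,0\}$; taking a maximum over $(i,s)$ and multiplying by $2$ recovers the theorem.

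\textbf{Gradient gap via Lipschitz continuity.} Write $g_{i,(s,a_i)}(\theta):=\partial J_i/\partial \theta_{s,a_i}=(1-\gamma)^{-1}d_\theta(s)\,\overline{Q_i^\theta}(s,a_i)$. At $\theta^*$, Lemma~\ref{lemma:strict-NE-gap} and the definition~\eqref{eq:delta} give $g_{i,(s,a_i^*(s))}(\theta^*)-g_{i,(s,a_i)}(\theta^*)\ge \Delta^{\theta^*}$ for every $a_i\neq a_i^*(s)$. I would then bound the perturbations of $d_\theta$ and $\overline{Q_i^\theta}$ via the matrix-geometric identity $d_\theta=(1-\gamma)\rho^\top(I-\gamma P^{\pi_\theta})^{-1}$, a resolvent expansion around $\pi_{\theta^*}$, and the standard estimate $\|(I-\gamma P^{\pi_\theta})^{-1}\|_\infty\le (1-\gamma)^{-1}$, to obtain
\begin{equation*}
|d_\theta(s)-d_{\theta^*}(s)|+|\overline{Q_i^\theta}(s,a_i)-\overline{Q_i^{\theta^*}}(s,a_i)|\le C\, D(\theta\|\theta^*),
\end{equation*}
with an explicit $C$ polynomial in $(1-\gamma)^{-1},n,|\cS|$ and $\sum_j|\cA_j|$. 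The threshold $\Delta^{\theta^*}(1-\gamma)^3/(8n|\cS|\sum_i|\cA_i|)$ in the hypothesis is precisely the radius that shrinks this perturbation below $\Delta^{\theta^*}/2$, leaving
\begin{equation*}
g_{i,(s,a_i^*(s))}(\theta^{(t)})-g_{i,(s,a_i)}(\theta^{(t)})\ge \tfrac12\Delta^{\theta^*},\qquad \forall\, a_i\neq a_i^*(s),
\end{equation*}
for every $\theta^{(t)}$ still inside the ball.

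\textbf{Simplex projection analysis.} The per-$(i,s)$ update is $\theta^{(t+1)}_{s,\cdot}=\mathrm{Proj}_{\Delta(\cA_i)}(\theta^{(t)}_{s,\cdot}+\eta g_{i,(s,\cdot)}(\theta^{(t)}))$, which in water-filling form reads $\theta^{(t+1)}_{s,a}=[\theta^{(t)}_{s,a}+\eta g_{i,(s,a)}(\theta^{(t)})-\lambda]_+$ for a threshold $\lambda$ chosen so the entries sum to one. Since $a_i^*(s)$ is simultaneously the largest coordinate of $\theta^{(t)}_{s,\cdot}$ (by closeness to $\theta^*$) and of $g_{i,(s,\cdot)}(\theta^{(t)})$ (by the gap above), it survives the clipping. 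Letting $S$ be the surviving support, solving the KKT condition explicitly and using that the clipped non-optimal mass $\sum_{a\notin S}\theta^{(t)}_{s,a}\ge 0$ only helps, I get
\begin{equation*}
\theta^{(t+1)}_{s,a_i^*(s)}-\theta^{(t)}_{s,a_i^*(s)}\;\ge\;\frac{\eta}{|S|}\!\!\sum_{a_i\in S,\,a_i\neq a_i^*(s)}\!\!\bigl(g_{i,(s,a_i^*(s))}(\theta^{(t)})-g_{i,(s,a_i)}(\theta^{(t)})\bigr)\;\ge\;\frac{\eta(|S|-1)}{|S|}\cdot\frac{\Delta^{\theta^*}}{2}.
\end{equation*}
If $|S|\ge 2$ this is at least $\eta\Delta^{\theta^*}/4$, so $\delta^{(t+1)}_{i,s}\le \delta^{(t)}_{i,s}-\eta\Delta^{\theta^*}/4$; if $|S|=1$ then only $a_i^*(s)$ survives and $\delta^{(t+1)}_{i,s}=0$ outright. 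Either way the coordinate-wise target holds. Because $\delta^{(t)}$ is non-increasing, the iterate never leaves the good ball, the induction closes, and the iteration count $\lceil 2D(\theta^{(0)}\|\theta^*)/(\eta\Delta^{\theta^*})\rceil$ is obtained by dividing the initial $D$ by the per-step decrement.

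\textbf{Main obstacle.} The technically delicate step is the perturbation estimate that produces the exact factor $(1-\gamma)^3/(n|\cS|\sum_i|\cA_i|)$ in the radius of attraction: one has to chase $\ell_\infty\leftrightarrow \ell_1$ conversions so that the maxed $\ell_1$-norm $D$ dominates each Lipschitz constant tightly. A secondary subtlety is that the simplex projection can clip an arbitrary subset of coordinates, so the water-filling identity must explicitly absorb clipped mass from sub-optimal actions; fortunately this mass only pushes more probability onto $a_i^*(s)$, so the bound above is already conservative in that regard.
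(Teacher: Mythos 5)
Your proposal is correct and follows essentially the same route as the paper's proof: the same reduction of the $\ell_1$ distance to $2(1-\theta^{(t)}_{s,a_i^*(s)})$, the same use of a gradient-Lipschitz (smoothness) bound to preserve a gap of $\Delta^{\theta^*}/2$ inside the stated radius, the same KKT/water-filling analysis of the simplex projection yielding the $\tfrac{|S|-1}{|S|}\ge\tfrac12$ factor (the paper packages this as an auxiliary lemma), and the same induction keeping iterates in the ball. The only part you leave as a sketch---the explicit Lipschitz constant $\tfrac{2}{(1-\gamma)^3}\sum_i|\cA_i|$ via resolvent perturbation---is exactly what the paper's smoothness lemma supplies, so the argument closes as you describe.
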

\noindent Proofs are deferred to Appendix \ref{apdx:local-convergence-general-game}.

\begin{rmk}\label{rmk: local NE general SGs}
Note that the local convergence in Theorem \ref{theorem:local-convergence-general-game} only requires a finite number of steps. The key insight of the proof is that the gradient always points towards $\theta^*$, and that the algorithm projects the gradient update onto the probability simplex, thus by picking the stepsize $\eta$ arbitrarily large, exact convergence can be achieved by just one step. However, the caveat is that we need to assume that the initial policy is sufficiently close to $\theta^*$. 
For numerical stability considerations, one should pick reasonable stepsizes to run the algorithm to accommodate random initializations. Theorem \ref{theorem:local-convergence-general-game} also shows that the radius of region of attraction for strict NEs is at least $\frac{\Delta^{\theta^*}(1-\gamma)^3}{8n|\cS|\left(\sum_{i=1}^n|\cA_i| \right)}$, and thus $\theta^*$ with a larger $\Delta^{\theta^*}$, i.e., a larger value gap between the optimal action and other actions, will have a larger region of attraction. We would like to further remark that Theorem \ref{theorem:local-convergence-general-game} only focuses on the local convergence property; hence, we can interpret the theorem in the following way: if there exists a strict NE, then it is locally asymptotically stable under gradient play. However, it does not claim to solve the global existence or convergence of the strict NEs.
\end{rmk}

\vspace{-5pt}

\section{Gradient play for Markov potential games}
We have discussed that the main problem for the global convergence of gradient play for general SGs is that the vector field $\{\nabla_{\theta_i}J_i(\theta)\}_{i=1}^n$ is not conservative. Thus, in this section, we restrict our analysis to a special subclass where the vector field is conservative, which in turn enjoys global convergence. This subclass is generally referred to as a Markov potential game (MPG) in the literature.
\begin{defi}(Markov potential game \cite{leonardos21})\label{defi:MPG}
A stochastic game $\mathcal{M}$ is called a Markov potential game if there exists a potential function $\phi: \cS \times \cA_1\times\cdots\times\cA_n \!\rightarrow\! \mathbb{R}$ such that for any agent $i$ and any pair of policy parameters $(\theta_i',\theta_{-i}), (\theta_i,\theta_{-i})$ :
\begin{align*}
    &\bE \left[\sum_{t=0}^\infty \gamma^t r_i(s_t,a_t)\big|\pi = (\theta_i',\theta_{-i}),s_0=s\right] -\bE \left[\sum_{t=0}^\infty \gamma^t r_i(s_t,a_t)\big|\pi = (\theta_i,\theta_{-i}),s_0=s\right]\\
    =&\bE \left[\sum_{t=0}^\infty \gamma^t \phi(s_t,a_t)\big|\pi = (\theta_i',\theta_{-i}),s_0=s\right] - \bE \left[\sum_{t=0}^\infty \gamma^t \phi(s_t,a_t)\big|\pi = (\theta_i,\theta_{-i}),s_0=s\right],~~\forall~s.
\end{align*}
\end{defi}
As shown in the definition, the condition of a MPG is admittedly rather strong and difficult to verify for general SGs.  \cite{Macua18,Gonzalez13} found that continuous MPGs can model applications such as the great fish war \cite{Levhari80}, the stochastic lake game \cite{Dechert06}, medium access control \cite{Macua18} etc. There are also efforts attempting to identify conditions such that a SG is a MPG, e.g., \cite{Macua18,leonardos21,mguni2021learning}. In Appendix \ref{apdx:More-about-MPG}, we provide a more detailed discussion on MPGs, including a necessary condition of MPG, counterexamples of stage-wise potential games that are not MPG, sufficient conditions for a SG to be a MPG, and application examples of MPG. Nevertheless, identifying sufficient and necessary conditions and broadening the applications of MPG are important furture directions. 
  

Given a policy $\theta$, we define the \textit{`total potential function'}\footnote{{Note that our definition of MPG is slightly stronger than the definition in \cite{leonardos21} as it requires the total potential function to take the particular form as a discounted sum of the potential function, i.e., $\Phi(\theta) := \bE_{s_0\sim \rho(\cdot)} \left[\sum_{t=0}^\infty \gamma^t \phi(s_t,a_t)\big|~\pi_\theta\right]$. However, most of our results (Theorem \ref{theorem:potential-game-convergence} and Theorem \ref{thm:main}) still hold under the weaker definition in \cite{leonardos21}. Theorem \ref{theorem:MPG-local-maximum-saddle-points} is the only result that relies on the slightly stronger version of the definition}}
$      \Phi(\theta) := \bE_{s_0\sim \rho(\cdot)} \left[\sum_{t=0}^\infty \gamma^t \phi(s_t,a_t)\big|~\pi_\theta\right]$ for a MPG. The following proposition guarantees a MPG has at least one NE and it is a pure NE (proof see Appendix \ref{apdx:proof-potential-game-pure-NE}).{We also define the quantity $\Phi_{\max}, \Phi_{\min}$ as $\Phi_{\max}:=\frac{\phi_{\max}}{1-\gamma}, \Phi_{\min}:=\frac{\phi_{\min}}{1-\gamma}$, where $\phi_{\min}:= \min_{s,a}\phi(s,a)$ and $\phi_{\max}:= \max_{s,a}\phi(s,a)$. It can be easily verified that $\Phi_{\min}\le \Phi(\theta)\le \Phi_{\max}$ for all $\theta$.}
\begin{prop}\label{prop:potential-game-pure-NE}
For a Markov potential game, there is at least one global maximum  $\theta^*$ of the total potential function $\Phi$, i.e.: $\theta^* \in \argmax_{\theta\in\cX}\Phi(\theta)$ that is a pure NE. 
\end{prop}

From the definition of the total potential function we obtain the following relationship
\begin{equation}\label{eq:total-potential-function-relationship}
    J_i(\theta_i', \theta_{-i}) - J_i(\theta_i, \theta_{-i})  = \Phi(\theta_i', \theta_{-i}) - \Phi(\theta_i, \theta_{-i}).
\end{equation}
Thus,
\vspace{-10pt}
\begin{equation*}
    \nabla_{\theta_i}J_i(\theta) = \nabla_{\theta_i}\Phi(\theta),
\end{equation*}
which means that gradient play \eqref{eq:gradient-play-discrete} is equivalent to running projected gradient ascent with respect to the total potential function $\Phi$, i.e.: 
$$\theta^{(t+1)} = \text{Proj}_{\cX}(\theta^{(t)}+\eta\nabla_{\theta}\Phi(\theta_i^{(t)})), ~\eta >0.$$

\subsection{Global convergence}
With the above property, we can establish the global convergence for gradient play to a $\epsilon$-NE for MPG. For the sake of self-completeness, we include the theorem here. Before that, we define $\epsilon$-NE. 

\begin{defi}{($\epsilon$-Nash equilibrium)}
Define the `NE-gap' of a policy $\theta$ as:
\begin{align*}
    \NEgap_i(\theta)&:= \max_{\theta_i'\in\cX_i} J_i(\theta_i', \theta_{-i}) - J_i(\theta_i, \theta_{-i});\\ \NEgap(\theta)&:= \max_i \NEgap_i(\theta).
\end{align*}
A policy $\theta$ is an $\epsilon$-Nash equilibrium if:
   ~~$ \NEgap(\theta) \le \epsilon.$
\end{defi}

\begin{theorem}\label{theorem:potential-game-convergence}
 Suppose for all $\theta\in\cX$, with stepsize $\eta = \frac{(1-\gamma)^3}{2\sum_{i=1}^n|\cA_i|}$, the $\NEgap$ of $\theta^{(t)}$ asymptotically converge to 0 under gradient play \eqref{eq:gradient-play-discrete}, i.e., $\lim_{t\rightarrow\infty}\NEgap(\theta^{(t)}) = 0$. Further, we have:
\begin{equation}\label{eq:MPG-EGP-rate}
\begin{split}
     &\quad \frac{1}{T}\sum_{1\le t \le T} \NEgap(\theta^{(t)})^2 \le \epsilon^2, 
     \\
     \textup{whenever } ~& T \ge \frac{64M^2(\Phi_{\max} - \Phi_{\min})|\cS|\sum_{i=1}^n|\cA_i|}{(1-\gamma)^3\epsilon^2}, 
\end{split} 
\end{equation} 
where ~$M:= \max_{\theta, \theta'\in \cX}\left\|\frac{d_\theta}{d_{\theta'}}\right\|_\infty$ (by Assumption \ref{assump:like-ergodicity}, we know that this quantity is well-defined). \footnote{{Another way to interpret the result is that the average term $\frac{1}{T}\sum_{1\le t \le T} \NEgap(\theta^{(t)})^2 \le \epsilon^2$ could be translated to a constant probability guarantee on single $\NEgap(\theta^{(t)})$. For instance, if we randomly pick one $\theta^{(t)}$ from $1 \le t \le T$, then it guarantees that  $\NEgap(\theta^{(t)})^2 \le 3\epsilon^2$ with probability at least $2/3$ (the constants $3,2/3$ could be replaced by $\frac{1}{1-p}, p$ where $p\in (0,1)$).}}
\end{theorem}
The factor $M$ is also known as the distribution mismatch coefficient that characterizes how the state visitation varies with the policies. Given an initial state distribution $\rho$ that has positive measure on every state, $M$ can be at least bounded by $M \!\le\! \frac{1}{1-\gamma}\max_\theta\left\|\frac{d_{\theta}}{\rho}\right\|_\infty\!\le\!\frac{1}{1-\gamma}\frac{1}{\min_s\rho(s)}$. The proof structure of Theorem \ref{theorem:potential-game-convergence} resembles the proof of convergence for single-agent MDPs in \cite{agarwal2020}, where they leverage classical nonconvex optimization results \cite{beck2017,Ghadimi16} and gradient domination to get the convergence rate of $O\left(\frac{64\gamma\left\|\frac{d_{\theta^*}}{\rho}\right\|_\infty^2|\cS||\cA|}{(1-\gamma)^6\epsilon^2}\right)$ to the global optimum (see Appendix \ref{apdx:proof-potential-game-convergence} for proof details). In fact, our result matches this bound when there is only one agent (the exponential factor on $(1\!-\!\gamma)$ looks slightly different because some factors are hidden implicitly in $M$ and $(\Phi_{\max}\! -\! \Phi_{\min})$ in our bound).

\subsection{Local geometry of NEs}
Theorem \ref{theorem:potential-game-convergence} suggests that gradient play is guaranteed to converge to a NE, however, which exact NE it converges to is not specified in the theorem.
The qualities of NEs can vary significantly. For example, consider a simple two-agent identical-interest normal form game with reward table given in Table \ref{tab:reward_table-normal-form}. There are three NEs. Two of them are strict NEs, where both agents choose the same action, i.e. $a_1 = a_2 = 1 \textup{ or } 2$. Both NEs are of reward $1$. Another NE is a fully mixed NE, where both agents choose action $1$ and $2$ randomly with probability $\frac{1}{2}$. This NE is only of reward $\frac{1}{2}$. This significant quality difference between different types of NEs motivates us to further understand whether gradient play can find NEs with relatively good qualities. Since the NE that gradient play converges to depends on the initialization and the local geometry around the NE,
as a preliminary study, we characterize the local geometry and landscape for strict NEs and fully mixed NEs (stated in the following theorem). More future investigation is needed for non-strict, non-fully-mixed NEs.   
\begin{table}[htbp]
\centering
\begin{tabular}{|c||c|c|}
\hline
 &$a_2 = 1$&$a_2 = 2$\\
\hline\hline
$a_1 = 1$ & 1&0\\
\hline
$a_1 = 2$ &0&1\\
\hline
\end{tabular}
\caption{}
\label{tab:reward_table-normal-form}
\vspace{-10pt}
\end{table}
\begin{theorem}\label{theorem:MPG-local-maximum-saddle-points} 
For a Markov potential game with $\Phi_{\min}\!<\!\Phi_{\max}$ (i.e., $\Phi$ is not a constant function):
\hspace{-10pt}
\begin{itemize} [topsep=0pt,itemsep=0ex,partopsep=1ex,parsep=1ex]
    \item A strict NE $\theta^*$ is equivalent to a strict local maximum of the total potential function $\Phi$, i.e.:
$\exists ~\delta,$ such that for all $\theta\!\in\! \cX, \theta\!\neq\!\theta^* $ that satisfies  $ \|\theta - \theta^*\| \le \delta,$ we have ~$\Phi(\theta) < \Phi(\theta^*)$.
\item Any fully mixed NE $\theta^*$ is a saddle point of the total potential function $\Phi$, i.e., $\|\nabla\Phi(\theta^*)\|=0$, and $ \forall~\delta>0,~~\exists~\theta\in\cX, ~\text{such that}~ \|\theta\!-\!\theta^*\|\le \delta \text{ and } \Phi(\theta) \!>\! \Phi(\theta^*).$
\end{itemize}
\end{theorem}
The full proof of Theorem \ref{theorem:MPG-local-maximum-saddle-points} is deferred to Appendix \ref{apdx:MPG-local-maximum-saddle-points}.

\begin{rmk} Theorem \ref{theorem:MPG-local-maximum-saddle-points} implies that strict NEs are asymptotically locally stable under first-order methods such as gradient play; while the fully mixed NEs are unstable under gradient play. 
Note that the theorem does not claim stability or instability for other types of NEs, e.g., pure NEs or non-fully mixed NEs.
Nonetheless, we believe that these preliminary results can serve as a valuable platform towards a better understanding of the geometry of the problem. We remark that the conclusion about strict NEs in Theorem \ref{theorem:MPG-local-maximum-saddle-points} does not hold for settings other than tabular MPG; for instance, for continuous games, one can use quadratic functions to construct simple counterexamples \cite{mazumdar20}. Also, similar to Remark \ref{rmk: local NE general SGs}, this theorem focuses on the local geometry of the NEs but does not claim the global existence or convergence of either strict NEs or fully mixed NEs.
\end{rmk}

\subsection{Sample-based learning: algorithm and sample complexity}\label{subsec:sample-based-MPG}
In this section, we no longer assume access to the exact gradient, but instead need to estimate it via samples. Throughout the section, we make the following additional assumption on MPGs:
\begin{assump}{($(\tau,\sigma_S)$-Sufficient exploration on states)}\label{assump:sufficient-exploration-on-state} 
There exist a positive integer $\tau$ and a $\sigma_S\in(0,1)$ such that for any policy $\theta$ and any initial state-action pair $(s,a_i),~\forall i$, we have
\begin{equation}
\textstyle
    \Pr^\theta(s_{\tau}|s_0=s,a_0=a) \ge \sigma_S,~~\forall s_{\tau},
\end{equation}
{i.e., it poses a condition on the mixing time of the Markov chain induced by any policy $\theta$: there exists a sufficiently long time $\tau$, so the probability of being at any state at time $\tau$ is at least $\sigma_S$ for any initial state and action pair.}
\end{assump}
{\noindent Note that similar assumptions are common in proving finite time convergence of RL algorithms (e.g. \cite{qu2020,srikant19, Li22}) where ergodicity of the Markov chain induced by certain policies is generally assumed, which results in every state-action pair being visited with positive probability in the stationary distribution.}.

We further introduce the \textit{state transition probability under $\theta$} $\overline{P_\cS^\theta}:\cS\times\cS\rightarrow\mathbb{R}$ as: 
\begin{equation*}
\textstyle
    \overline{P_\cS^\theta}(s'|s):=\sum_a\pi_{\theta}(a|s)P(s'|s,a).
\end{equation*}
We consider fully decentralized learning, where agent $i$'s observation  only includes state $s_t$, its own action $a_{i,t}$, and its own reward $r_{i,t}:=r_i(s_t,a_t)$ at time $t$. Such fully decentralized learning is plausible due to the fact that  when $\theta_{-i}$ is \textit{fixed}, agent $i$ can be treated as an independent learner with the underlying MDP being the `averaged' MDP described in Section~\ref{sec:Problem-setup}. With this key observation, we design a `model-based' on-policy learning algorithm, where agents perform policy evaluation in the inner loop and gradient ascent at the outer loop. The algorithm is provided in Algorithm~\ref{alg:sample-based learning}. Roughly, it consists of three main steps: 1) (Inner loop) Estimate the averaged transition probability and reward using on-policy samples $\overline{P_i^{\theta}},\overline{r_i^{\theta}},\overline{P_\cS^\theta}$. 2) (Inner loop) Calculate averaged Q-function $\overline{Q_i^\theta}$ and discounted state visitation distribution $d_\theta$, and compute the estimated gradient accordingly, 3) (Outer loop) Running projected gradient ascent with estimated gradients. Before discussing our algorithm in more detail, we highlight that the idea of using the ``averaged'' MDP can be used to design other learning methods including model-free methods, e.g., using the temporal difference methods to perform policy evaluation. One caveat is that the ``averaged'' MDP is only well-defined when all the other agents use fixed policies. This makes it difficult to extend the two-timescale framework {(i.e. with an inner loop and outer loop)} to single-timescale settings, which is an interesting future direction. {Further, note that the current algorithm requires full state observation, it remains an intriguing open question to extend it to the case with only partial observability.} {We would also like to point out that the algorithm initialization still requires extra
consensus/coordination among the players to agree on the hyperparameters $T_J, T_G$ etc, which guarantees that agents go through the same
equal-length phases to sample the trajectories, and compute gradient estimates.}
\begin{algorithm}[h]
\begin{algorithmic}
\REQUIRE learning rate $\eta$, greedy parameter $\alpha$, sample trajectory length $T_J$, total iteration steps $T_G$
\STATE For each agent $i$
\FOR {$k=0,1\dots,T_G-1$}
\FOR {$t=0,1,\dots, T_J$}
\STATE\hspace{-10pt} {Sample $s_0\sim \rho$}, implement policy $\theta^{(k)}$ and collect trajectory $\cD_i^{(k)}$: $\cD_i^{(k)}\!\leftarrow\! \cD_i^{(k)} \!\cup\! \{\!s_t , a_{i,t}, r_{i,t}\!\},~a_{i,t}\!\sim\!\pi_{\theta_i^{(k)}}(\cdot|s_t)$
\ENDFOR
\STATE Estimate $\widehat{P_i^\theta}, \widehat{r_i^\theta}, \widehat{P_\cS^\theta}, \widehat{M_i^\theta}$ by \eqref{eq:P-i-hat},\ \eqref{eq:r-i-hat}, \ \eqref{eq:P-S-hat} respectively.
\STATE Calculate $\widehat{Q_i^\theta}, \widehat{d_\theta}$ by \eqref{eq:Q-i-hat}, \ \eqref{eq:d-hat} respectively.
\STATE Estimate the gradient by \eqref{eq:estimate-gradient}:
\STATE Run projected gradient ascent as in \eqref{eq:projection-alpha-greedy}
\ENDFOR
\end{algorithmic}
\caption{Sample-based learning}
\label{alg:sample-based learning}
\end{algorithm}

\noindent\textbf{Step 1: empirical estimation of $\overline{P_i^{\theta}},\overline{r_i^{\theta}},\overline{P_\cS^\theta}$:}
Given a sequence $\{s_t, a_{i,t}, r_{i,t}\}_{t=0}^{T_J}$ generated by a policy $\theta:=(\theta_i,\theta_{-i})$, the empirical estimation $\widehat{P_i^{\theta}}$ of $\overline{P_i^{\theta}}$ is given by:
\begin{align}
    \widehat{P_i^{\theta}}(s'|s,a_i)&:=\left\{
    \begin{array}{ll}
      \frac{\sum_{t=0}^{T_J-1}\mathbf{1}\{s_{t+1}=s',s_t=s,a_{i,t}=a_i\}}{\sum_{t=1}^{T_J-1}\mathbf{1}\{s_t=s,a_{i,t}=a_i\}}, \vspace{5pt}\\   \quad~~ {\small\textup{for  } {\sum_{t=1}^{T_J-1}\mathbf{1}\{s_t=s,a_{i,t}=a_i\}}\ge 1};\vspace{5pt}\\
      \mathbf{1}\{s'=s\},    \vspace{5pt}\\   \quad~~ {\small\textup{for  } {\sum_{t=1}^{T_J-1}\mathbf{1}\{s_t=s,a_{i,t}=a_i\}}= 0}.
    \end{array}
    \right.\label{eq:P-i-hat}
    \vspace{-5pt}
\end{align}
Here we separately treat the special case where the state and action pair is not visited through the whole trajectory, i.e., ${\sum_{t=1}^{T_J-1}\mathbf{1}\{s_t=s,a_{i,t}=a_i\}}=0$ to make $\widehat{P_i^{\theta}}$ well-defined. 

Similarly, the estimates $\widehat{r_i^{\theta}},\widehat{P_\cS^\theta}$ of $\overline{r_i^{\theta}}, \overline{P_\cS^\theta}$ are given by:
\begin{align}
    \widehat{r_i^{\theta}}(s,a_i)&\!:=\!\left\{
    \begin{array}{ll}
    \frac{\sum_{t=0}^{T_J} \mathbf{1}\{s_t=s,a_{i,t}=a_i\}r_{i,t}}{\sum_{t=0}^{T_J}\mathbf{1}\{s_t=s,a_{i,t}=a_i\}},\vspace{5pt}\\\quad{\small \textup{for }{\sum_{t=1}^{T_J-1}\mathbf{1}\{s_t=s,a_{i,t}=a_i\}}\ge 1};\vspace{5pt}\\
    0,\\
    \quad{\small \textup{for }{\sum_{t=1}^{T_J-1}\mathbf{1}\{s_t=s,a_{i,t}=a_i\}}=0}.
    \end{array}
    \right.\label{eq:r-i-hat}\\
    \widehat{P_\cS^\theta}(s'|s)&\!:=\!\left\{
    \begin{array}{ll}
    \frac{\sum_{t=0}^{T_J-1}\mathbf{1}\{s_{t+1}=s',s_t=s\}}{\sum_{t=1}^{T_J-1}\mathbf{1}\{s_t=s\}}, ~ \sum_{t=1}^{T_J\!-\!1}\!\mathbf{1}\{s_t\!=\!s\} \!\ge\! 1;\vspace{5pt}\\
     \mathbf{1}\{s'=s\}, \qquad\qquad\quad~ \sum_{t=1}^{T_J\!-\!1}\!\mathbf{1}\{s_t\!=\!s\} \!=\! 0.
     \end{array}
    \right.\label{eq:P-S-hat}
\end{align}

\noindent\textbf{Step 2: estimation of $\overline{Q_i^\theta}, d_\theta$:} We slightly abuse notation and use $\overline{Q_i^\theta}, \overline{r_i^\theta} \in \bR^{|\cS||\cA_i|}$ to also denote the vectors corresponding to the averaged Q-function and reward function of agent $i$. Similarly, $\rho, d_\theta\in\bR^{|\cS|}$ are used to denote the vectors for  $\rho(s)$ and  $d_\theta(s)$. Define $M_i^\theta\in\bR^{|\cS||\cA_i|\times|\cS||\cA_i|}$:
\begin{equation*}
\textstyle
    \overline{M^\theta_i}{_{(s,a_i)\rightarrow(s',a_i')}}:= \pi_{\theta_i}(a_i'|s')\overline{P_i^\theta}(s'|s,a_i).
\end{equation*}
Then from Lemma \ref{lemma:averaged-Bellman-equation}, $\overline{Q_i^\theta}$ is given by:
\begin{equation*}
    (I-\gamma \overline{M^\theta_i})\overline{Q_i^\theta} = \overline{r_i^\theta} ~\Longrightarrow~ \overline{Q_i^\theta} = (I-\gamma \overline{M^\theta_i})^{\!-\!1}\overline{r_i^\theta}.
\end{equation*}
The estimated averaged Q-function $\widehat{Q_i^\theta}$ is given by:\footnote{From the Perron-Frobenius theorem, we know that the absolute values of the eigenvalues of $\widehat{M^\theta_i}$ are upper bounded by $1$, which guarantees that the matrix $I-\gamma \widehat{M^\theta_i}$ is invertible.}
\begin{equation}
\begin{split}
       &\widehat{Q_i^\theta} = (I-\gamma \widehat{M^\theta_i})^{\!-\!1}\widehat{r_i^\theta},\\
\textup{   where }&
    \widehat{M^\theta_i}_{(s,a_i)\rightarrow(s',a_i')}:= \pi_{\theta_i}(a_i'|s')\widehat{P_i^\theta}(s'|s,a_i). \label{eq:Q-i-hat}
\end{split}
\end{equation}
Similarly, from \eqref{eq:discounted state visitation distribution}, we have that $d_\theta$ and $\widehat{d_\theta}$ are given by (derivation see Appendix \ref{apdx:d-theta}):
\vspace{-5pt}
\begin{equation}
\textstyle    {d_\theta}\!=\!(1\!-\!\gamma)\left(I\!-\!\gamma\overline{P_\cS^\theta}^\top\right)^{\!-\!1}\hspace{-10pt}\rho, \quad
    \widehat{d_\theta}\!:=\!(1\!-\!\gamma)\left(I\!-\!\gamma\widehat{P_\cS^\theta}^\top\!\!\right)^{\!-\!1}\hspace{-10pt}\rho. \label{eq:d-hat}
\end{equation}
Then accordingly, the estimated gradient is computed as:
\begin{equation}
    \widehat{\partial}_{\theta_{s,a_i}}J_i(\theta^{(k)}) = \frac{1}{1-\gamma}\widehat{d_\theta}(s)\widehat{Q_i^\theta}(s,a_i).\label{eq:estimate-gradient}
\end{equation}

\vspace{3pt}
\noindent\textbf{Step 3: Projected gradient ascent onto the set of $\alpha$-greedy policies:} Let $U_n \!=\! [\frac{1}{n},\dots,\frac{1}{n}]\!\in\! \Delta(n)$ denote the $n$ dimensional uniform distribution. Define $\Delta\!^\alpha(n)\!:=\! \{\theta|~\exists \theta'\!\in\!\Delta(n), s.t.~ \theta=(1\!-\!\alpha)\theta'+\alpha U_n\}.$ We use $\cX_i^\alpha \!:=\!  \Dalpha^{|\cS|}, ~ \cX^\alpha \!:=\! \cX_1^\alpha\times\cX_2^\alpha\times\cdots\times\cX_n^\alpha$ to denote the set of the $\alpha$-greedy policies for $\theta_i$ and $\theta$ respectively. Every step after doing gradient ascent, the parameter $\theta$ will further be projected onto $\cX^\alpha$, i.e.:
\vspace{-5pt}
\begin{equation}\label{eq:projection-alpha-greedy}
    \theta_i^{(k+1)} = Proj_{\cX_i^\alpha}(\theta_i^{(k)} + \eta\widehat{\nabla}_{\theta_i}J_i(\theta^{(k)})).
\end{equation}
The reason of projecting onto $\cX^\alpha$ instead of $\cX$ is to make sure that every action has positive possibility of being selected in order to get a relatively accurate estimation of averaged $Q$-function. Intuitively, a larger $\alpha$ introduces a larger additional error in the NE-gap; however, a smaller $\alpha$ requires more samples to estimate the gradient. Thus the choice of $\alpha$ is the tradeoff between the two effects.

\begin{theorem}(Sample complexity)\label{thm:main}
Assume that the MPG satisfies Assumption \ref{assump:sufficient-exploration-on-state}. Let $M:= \max_{\theta, \theta'\in \cX}\left\|\frac{d_\theta}{d_{\theta'}}\right\|_\infty$. In Algorithm \ref{alg:sample-based learning}, for $\eta\le\frac{(1-\gamma)^3}{4\sum_i|\cA_i|}$, $\alpha=\frac{(1-\gamma)\epsilon}{6M}$, and
\begin{equation*}
\begin{split}
    T_{\!J}\!&\ge\!\! \frac{206976\tau nM^4|\cS|^{3}\!\max_i\!|\cA_i|^3}{(1-\gamma)^8\epsilon^4\sigma_S^2}\!\log\!\left(\!\frac{16\tau T_{\!G}|\cS|^{2}\!\sum_i\!|\cA_i|\!}{\delta}\!\right) \!+\! \tau ,\\ &\quad\qquad\qquad~ T_G\!\ge\!\frac{648M^2(\!\Phi_{\max}\!-\!\Phi_{\min}\!)|\cS|}{\eta\epsilon^2},
\end{split}
\end{equation*}
with probability at least $1\!-\!\delta$, we have that:
\begin{equation*}
    \frac{1}{T_G}\sum\nolimits_{k=1}^{T_G}\NEgap(\theta^{(k)})^2\le\epsilon^2.
\end{equation*}
That is, {with a proper choice of stepsize, e.g., $\eta\!=\!\frac{(1-\gamma)^3}{4\sum_i\!|\cA_i|}$}, the algorithm can find an $\epsilon$-NE with probability at least $1-\delta$ with
\begin{equation}\label{eq:sample-complexity-simplified}
    T_JT_G \sim \tilde{O}\left(\frac{n}{\epsilon^6} \textup{poly}\left(\frac{1}{1-\gamma}, |\cS|, \max_i|\cA_i|\right)\right)\vspace{-10pt}
\end{equation}
samples, where $\tilde{O}$ hides log factors.
\end{theorem}
{We would like to first compare our result with one related work with sample complexity of learning MPGs \cite{leonardos21}. Interestingly, both
sample complexities are $O(
1/\epsilon^6)$. It is an interesting question to study whether such dependence is fundamental
or not for learning with simultaneous-updating agents.  Yet \cite{leonardos21} considers Monte Carlo, model-free gradient estimation, while our algorithm takes a model-based approach which suffers less from high variance and the notion of `averaged' MDP can potentially be extrapolated to other settings.}

\noindent\textbf{Proof Sketch:} The proof of Theorem \ref{thm:main} consists of three major steps. The first step is to bound the estimation error of parameters $\overline{P_i^{\theta}},\overline{r_i^{\theta}},\overline{P_\cS^\theta}$ of the `averaged'-MDP. This step leverages Assumption \ref{assump:like-ergodicity} and Azuma-Hoeffding inequality 
to get high probability bounds for the parameters. The second step translates the estimation error of the `averaged'-MDP into the gradient estimation error. Then, the third step treats gradient estimation step as an oracle that gives biased gradient information, where the bias is the estimation error. The final result is obtained by analyzing the performance of biased projected gradient ascent algorithm. The detailed proofs are provided in Appendix \ref{apdx:proof-main-sample-based}.

\noindent\textbf{Comparison with centralized learning:} The best known sample complexity bound for single-agent/centralized MDP is $\tilde{O}\left(\frac{|\cS||\cA|}{(1-\gamma)^3\epsilon^2}\right)$ \cite{Sidford18}. Compared with \eqref{eq:sample-complexity-simplified}, the centralized bound scales better with respect to $\epsilon, |\cS|, |\cA_i|, \frac{1}{1-\gamma}$. However, as argued in the previous subsection, the total action space $|\cA| = \prod_{i=1}^n |\cA_i|$ in the centralized bound scales exponentially with the number of agent $n$, while our complexity bound only scales linearly. Here, we briefly state the fundamental difficulties of learning in the SG setting compared with centralized learning, which also explains why our bound scales worse with respect to the factors $\epsilon, |\cS|, |\cA_i|, \frac{1}{1-\gamma}$. 1) Firstly, the optimization landscape in the SG setting is more complicated. For centralized learning, the gradient domination property is stronger and accelerated gradient methods (e.g. via natural policy gradient or entropy regularization) can speed up the convergence of exact gradient from $O(\frac{1}{\epsilon^2})$ to $O(\frac{1}{\epsilon})$ \cite{agarwal2020}, or even $O(\log(\frac{1}{\epsilon}))$ \cite{Mei20}. In contrast, for multi-agent settings, due to the more complicated optimization landscape, these methods can no longer improve the dependency on $\epsilon$, and thus makes the outer loop complexity $T_G$ larger. 2) Secondly, the behavior of other agents makes the environment non-stationary, i.e., the averaged Q-function $\overline{Q_i^\theta}$ as well as the averaged transition probability distribution $\overline{P_i^\theta}$ depends on the policy of other agent $\theta_{-i}$. Thus, unlike centralized learning, where the state transition probability matrix can be estimated in an off-policy or even offline manner, i.e. using data samples from different policies, $\overline{P_i^\theta}$ can only be estimated in a online manner, using samples generated by exactly the same policy $\theta$, which increases the inner loop complexity $T_J$. 3) Thirdly, the complicated interactions amongst agents necessitate more care during the learning process. Algorithms designed for centralized learning that achieve near-optimal sample complexity are generally Q-learning type algorithms. However, in SGs, it can be shown that having each agent maximize its own averaged Q-function may actually lead to non-convergent behavior. 
Thus, we need to consider algorithms that update in a less aggressive manner, e.g. soft Q-learning, or policy gradient (which is considered in this paper). 

\section{Numerical simulations}\label{apdx:numerics}
\begin{table}[htbp]
\begin{minipage}{.4\linewidth}
\centering
\begin{tabular}{|c||c|c|}
\hline
 &$a_2 = 1$&$a_2 = 2$\\
\hline\hline
$a_1 = 1$ & (-1,-1)& (-3,0)\\
\hline
$a_1 = 2$ &(0,-3)& (-2,-2)\\
\hline
\end{tabular}
\caption{Game 1: Reward}
    \label{tab:reward_table}
\end{minipage}
\hspace{10pt}
\begin{minipage}{.5\linewidth}
\centering
\begin{tabular}{|c||c|c|}
\hline
 &$s_2 = 1$&$s_2 = 2$\\
\hline\hline
$s_1 = 1$ & 2&0\\
\hline
$s_1 = 2$ &0&1\\
\hline
\end{tabular}
\caption{Game 2: Reward}
    \label{tab:reward_table-coordination-game}
\end{minipage}
\vspace{-10pt}
\end{table}
This section studies three numerical examples to corroborate our theoretical results. The multi-stage prisoner's dilemma (Game 1) confirms the local stability results for general-sum SGs; the coordination game (Game 2) considers local stability as well as convergence rate of exact gradient play for MPG; the state-based coordination game (Game 3) tests the performance of the sample-based algorithm proposed in Section \ref{subsec:sample-based-MPG}.
\subsection{Game 1: multi-stage prisoner's dilemma}
The first example --- multi-stage prisoner's dilemma model\cite{Arslan16} --- studies exact gradient play for general SGs. It is a $2$-agent SG, with $\cS = \cA_1 = \cA_2 = \{1,2\}$. The reward for each agent $r_i(s, a_1, a_2), ~i\in\{1,2\}$ is independent of state $s$ and is given by Table \ref{tab:reward_table}. The state transition probability is determined by agents' previous actions:
\vspace{-2pt}
\begin{align*}
    P(s_{t+1}=1|(a_{1,t}, a_{2,t})=(1,1)) &=  1-\epsilon, \\
    P(s_{t+1}=1|(a_{1,t}, a_{2,t})\neq (1,1)) &=  \epsilon.
\end{align*}
Here action $a_i = 1$ means that agent $i$ choose to \textit{cooperate} and $a_i=2$ means \textit{betray}. The state $s$ serves as a noisy indicator, with error rate $\epsilon$, of whether both agents cooperated ($s_t = 1$) or not ($s_t=2$) in the previous stage $t-1$. 
\begin{table}[htbp]
\centering
\begin{minipage}{.4\linewidth}
\vspace{-12pt}
\includegraphics[width=\linewidth]{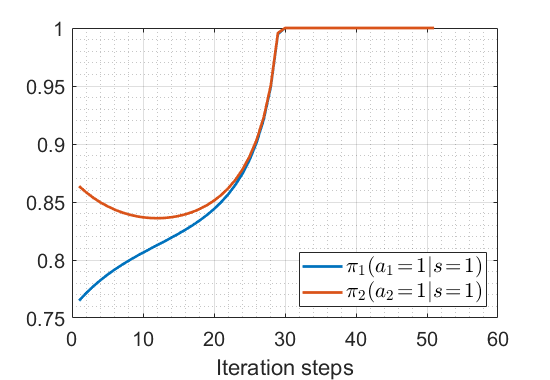}
\captionof{figure}{(Game 1:) Convergence to the cooperative NE}
\label{fig:prisoner_dilemma_NE_convergence}
\end{minipage}
\hspace{10pt}
\begin{minipage}{.5\linewidth}
\centering
        \begin{tabular}{|c|c|c|}
    \hline
        $\epsilon$ &$\Delta^{\!\theta^*}\!(s\!=\!1)$& ratio \%\\
        \hline
        0.1&433.3&(47.8$\pm$ 5.1)\%\\
        \hline
        0.05&979.3&(66.3$\pm$ 4.3)\%\\
        \hline
        0.01&2498.6&(77.4$\pm$ 2.8)\%\\

    \hline
    \end{tabular}
    \vspace{-5pt}
    \caption{(Game 1:) Relationship of $\Delta^{\!\theta^*}\!(s\!=\!1)$, convergence ratio, and $\epsilon$.  $\Delta^{\!\theta^*}\!(s\!=\!1)$ is calculated using \eqref{eq:delta}. Convergence ratio is calculated by $\frac{\#\textup{Trials that converge to }\theta^*}{\#\textup{Total number of trials}}$.
    }
    \label{tab:ep}
\end{minipage}
\begin{minipage}{.3\linewidth}
\centering
 \includegraphics[width=\linewidth]{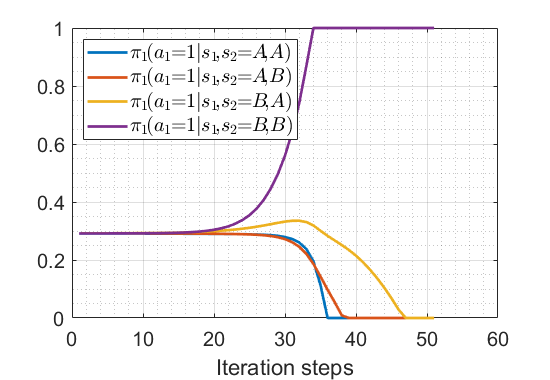}
    \captionof{figure}{(Game 2:) Starting from a close neighborhood of a fully mixed NE}
    \label{fig:coorporation_game_saddle_point}
\end{minipage}
\hfill
\begin{minipage}{.3\linewidth}
\vspace{-7pt}
\centering
 \includegraphics[width=\linewidth]{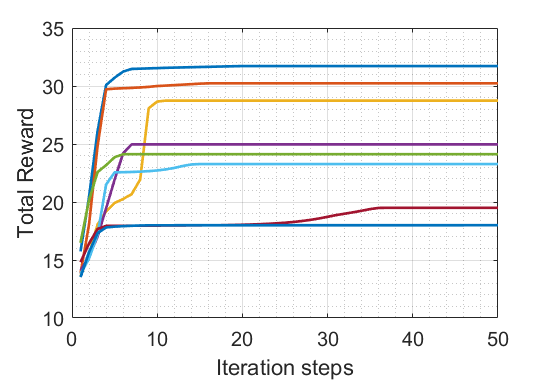}
    \captionof{figure}{(Game 2:) Total reward for multiple runs}
    \label{fig:coorporation_game_traj_r}
\end{minipage}
\hfill
\begin{minipage}{.3\linewidth}
\vspace{-7pt}
\centering
 \includegraphics[width=\linewidth]{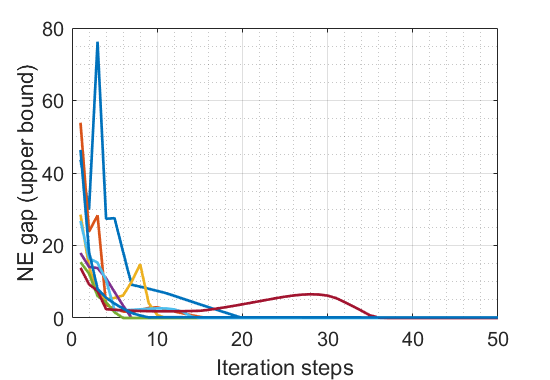}
    \captionof{figure}{(Game 2:) NE-gap for multiple runs}
    \label{fig:coorporation_game_traj_gap}
\end{minipage}
\end{table}

The single-stage game corresponds to the famous \emph{Prisoner’s Dilemma}, and it is well-known that there is a unique NE $(a_1,a_2) = (2,2)$, where both agent decide to betray. The dilemma arises from the fact that there exists a joint non-NE strategy $(1,1)$ such that both players obtain a higher reward than what they get under the NE. However, in the multi-stage case, the introduction of an additional state $s$ allows agents to make decisions based on whether they have cooperated before. It turns out that cooperation can be achieved given that the discounting $\gamma$ is close to $1$ and that the indicator for $s$ is accurate enough, i.e. $\epsilon$ is close to $0$. Apart from the fully betray strategy, where both agents will betray regardless of $s$, there is another strict NE $\theta^*$ that is
~$\theta^{*}_{s=1,a_i=1} = 1,~ \theta^{*}_{s=2,a_i=1} = 0,$~
where agents will cooperate given that they have cooperated in previous stage, and betray otherwise.

We simulate gradient play for this model and mainly focus on the convergence to the cooperative equilibrium $\theta^*$. We fix $\gamma=0.95$. The initial policy is set as: $\theta^{(0)}_{s=1,a_i=1} \!=\! 1\!-\!0.4\delta_i,~~ \theta^{(0)}_{s=2,a_i=1} \!=\! 0$, where the $\delta_i$'s are uniformly sampled from $[0,1]$. The initialization implies that at the beginning, both agents are willing to cooperate to some extent given that they cooperated at the previous stage. Figure \ref{fig:prisoner_dilemma_NE_convergence} shows a trial converging to the NE starting from a randomly initialized policy. Then we study the size of attraction region for $\theta^*$ and how it varies with the indicator's error rate $\epsilon$, which is shown in Table~\ref{tab:ep}. The size of the region of attraction for $\theta^*$ can be reflected by the ratio of convergence ( $\frac{\#\textup{Trials that converge to }\theta^*}{\#\textup{Total number of trials}}$) for multiple trials with different initial points. Here we calculate one ratio using 100 trials and the mean and standard deviation (std) are calculated by computing the ratio 10 times using different trials. An empirical estimate of the volume of the region is the convergence ratio times the volume of the uniform sampling area; hence the larger the ratio, the larger the region of attraction.
 Intuitively speaking, the more accurately the state $s$ represents the cooperation situation of the agents, the less incentive agents will have for betraying when observing $s=1$, that is, the larger $\Delta^{\!\theta^*}\!(s\!=\!1)$ will become, and thus the larger the convergence ratio will be. This intuition matches the simulation result as well as the theoretical guarantees on the local convergence around a strict NE in Theorem \ref{theorem:local-convergence-general-game}.

\vspace{-9pt}
\subsection{Game 2: coordination game}
Our second example is based on coordination game \cite{russell1998coordination}. It is an identical-reward game which is one special class of Markov potential game. Consider a $2$-agent identical reward coordination game problem with state space $\cS = \cS_1\times\cS_2$ and action space $\cA = \cA_1\times\cA_2$, where $\cS_1 = \cS_2 = \cA_1 = \cA_2 = \{1,2\}$. The state transition probability is given by:
\vspace{-2pt}
\begin{align*}
    P(s_{i,t+1} = 1|a_{i,t}=1) = 1-\epsilon,~ P(s_{i,t+1} = 1|a_{i,t}=2) = \epsilon,
\end{align*}
where $i = 1,2$. The reward table is given by Table \ref{tab:reward_table-coordination-game}. Here we can view the actions $\left\{1,2\right\}$ as two different social networks that agents can choose. They are rewarded only if they are in the same network. Network $1$ has a higher reward than network $2$. The state $s_i$ stands for the network that agent $i$ is really at after taking an action. $\epsilon$ stands for the randomness in reaching a network after taking the action. 

There is at least one fully-mixed NE where both agents join network $1$ with probability $\frac{1-3\epsilon}{3(1-2\epsilon)}$ regardless of the current occupancy of networks, and there are $13$ different strict NEs that can be verified numerically. 
Figure \ref{fig:coorporation_game_saddle_point} shows a gradient play trajectory whose initial point lies in a close neighborhood of the mixed NE. As the algorithm progresses, we see that the trajectory in Figure \ref{fig:coorporation_game_saddle_point} diverges from the mixed-NE, indicating that the fully-mixed NE is indeed a saddle point. This corroborates our finding in Theorem \ref{theorem:MPG-local-maximum-saddle-points}. 
Figure \ref{fig:coorporation_game_traj_r} shows the evolution of total reward $J(\theta^{(t)})$ for gradient play for different random initial points $\theta^{(0)}$. Different initial points converge to one of $13$ different strict NEs each with a different total reward (some strict NEs with relatively small region of attraction are omitted in the figure).  
While the total rewards are different, as shown in Figure \ref{fig:coorporation_game_traj_gap}, we see that the NE-gap of each trajectory (corresponding to same initial points in Figure \ref{fig:coorporation_game_traj_r}) converges to $0$. This suggests that the algorithm is indeed able to converge to a NE. Notice that the NE-gaps do not decrease monotonically.

\vspace{-5pt}
\subsection{Game 3: state-based coordination game}
\begin{table}[htbp]
\begin{minipage}{.65\linewidth}
\centering
 \includegraphics[width=.35\linewidth]{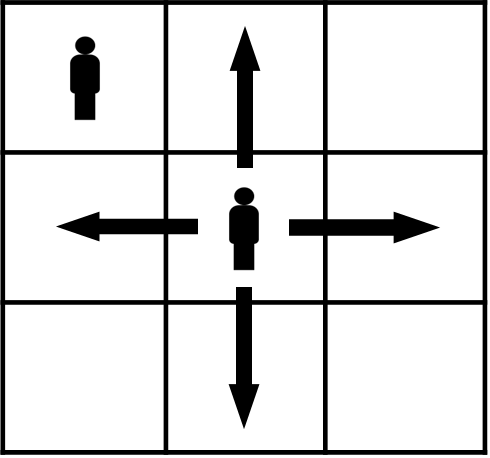}
 \hspace{20pt}
 \includegraphics[width=.365\linewidth]{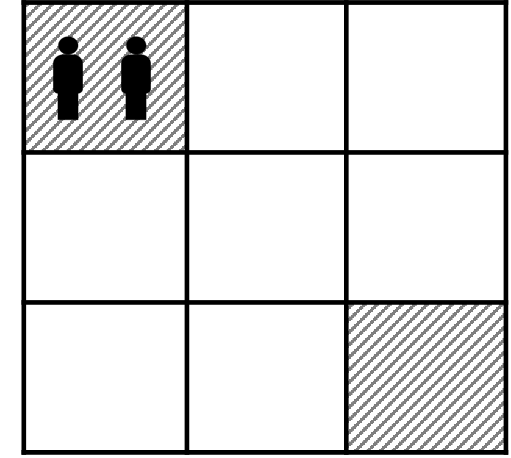}
\captionof{figure}{(Game 3:) State-based coordination game, rewards are nonzero if both players locate at the same shaded grids}
\label{fig:state-based-coordination-game}
\end{minipage}
\hfill
\begin{minipage}{.3\linewidth}
\centering
 \hspace{-14pt}\includegraphics[width=.9\linewidth]{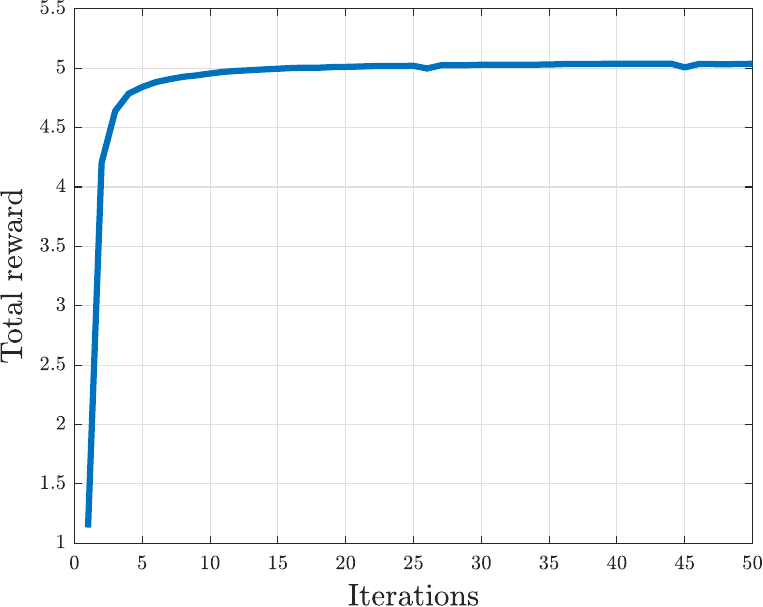}
    \vspace{-5pt}
    \captionof{figure}{Total reward $J(\theta^{(t)})$ keeps increasing}
    \label{fig:total-reward}
\end{minipage}
\begin{minipage}{.65\linewidth}
\centering
\includegraphics[width=\linewidth]{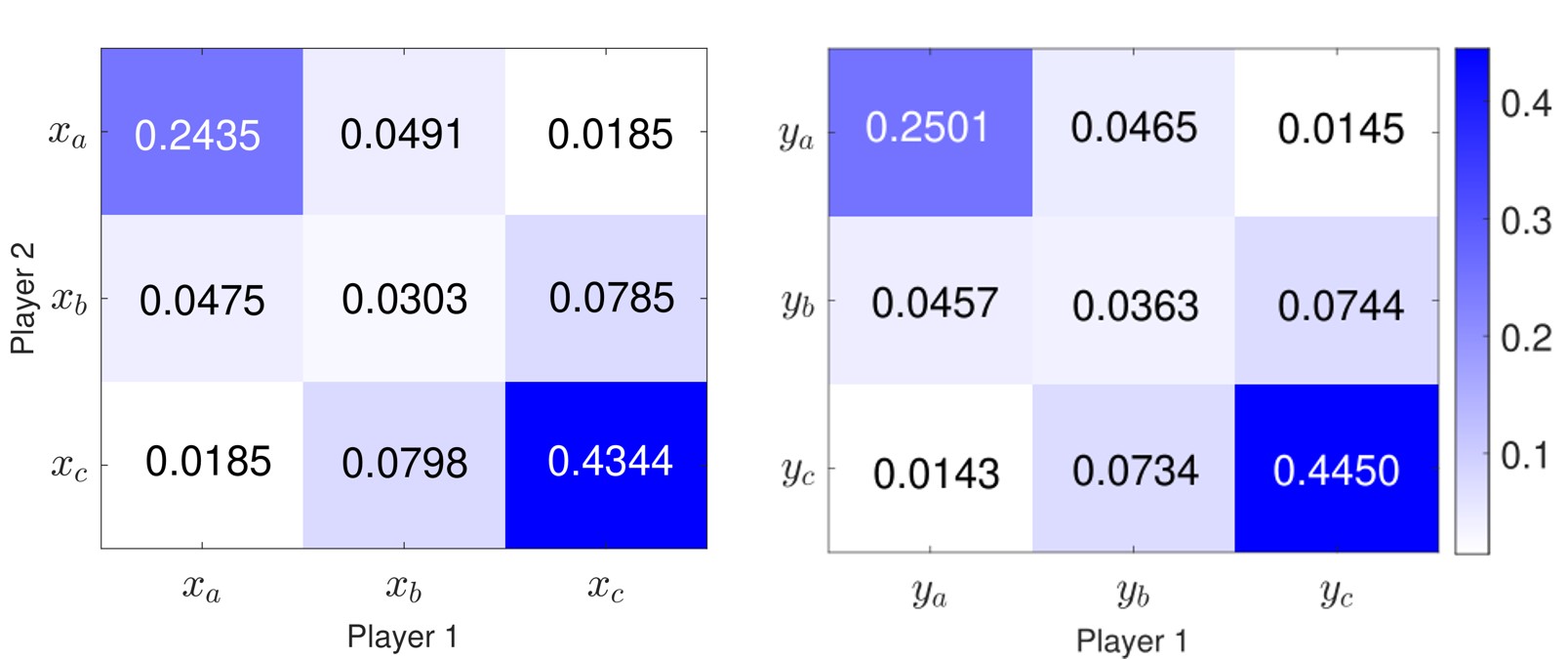}
\captionof{figure}{Marginal distribution $d_\theta^x(s_{1,x}, s_{2,x})$ and $d_\theta^y(s_{1,y}, s_{2,y})$}
\label{fig:d_theta_xy}
\end{minipage}
\hfill
\begin{minipage}{.33\linewidth}
\vspace{5pt}
\centering
\includegraphics[width=.8\linewidth]{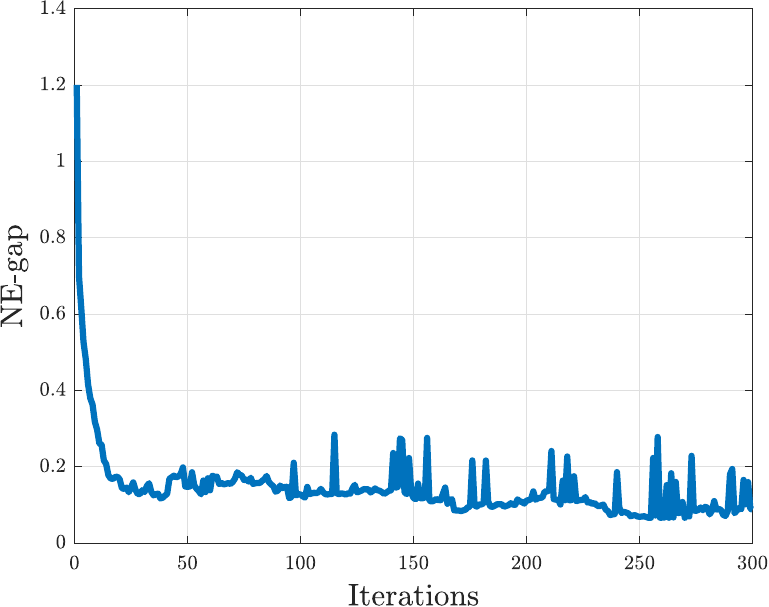}
\captionof{figure}{$\NEgap$ converges to a value close to zero. Here the $\NEgap$ is measured by $\max_i\max_{(s,a_i) \overline{A_i^\theta}(s,a_i)}$}
\label{fig:NE-gap}
\end{minipage}
\vspace{-5pt}
\end{table}

Our third numerical example studies the empirical performance of the sample-based learning algorithm, Algorithm~\ref{alg:sample-based learning}. Here we consider a generalization of coordination game (Game 2) where the two players now try to coordinate on a 2D grid. The two-player state-based coordination game on a $3\times 3$ grid is defined as follows: the state space is given by $\cS = \cS_1\times\cS_2, ~\cS_1=\cS_2 = \cS_x \times \cS_y = \{x_a, x_b, x_c\}\times\{y_a,y_b,y_c\}$, action space is given by $\cA = \cA_1\times\cA_2,~\cA_1 = \cA_2 = \{\stay, \lleft, \rright, \up, \down\}$, i.e., agent can choose to stay at current grid or move left/right/up/down to its neighboring grids. We assume that there is random noise during the transition, where the agent might end up in a neighboring grid of the target location with error probability $\epsilon$.
 The reward is given by:
\begin{equation*}
    r(s_1, s_2) = \mathbf{1}\{s_1 = s_2 = \{x_a,y_a\} \textup{ or } \{x_c, y_c\}\},
\end{equation*}
i.e. the two agents are only rewarded if they stay at the upper-left or lower-right corner at the same time.

For numerical simulation, we take $T_G = 300, T_J = 10000, \alpha = 0.1, \eta = 10, \epsilon = 0.1$; the numerical results are as displayed in Figure \ref{fig:total-reward} - \ref{fig:NE-gap}. Figure \ref{fig:total-reward} shows that total reward increases as the number of iterations increase, and Figure \ref{fig:NE-gap} shows that the NE-gap converges to a value close to zero. However, because we project the policy to the $\alpha$-greedy set $\cX^\alpha$, the NE-gap cannot converge to exactly zero. Figure \ref{fig:d_theta_xy} visualizes the discounted state visitation distribution. To make the visualization more intuitive, we look at the marginalized discounted state visitation distribution $d_\theta^x$ defined below:
\begin{align*}
\textstyle 
    d_\theta^x(s_{1,x}, s_{2,x}) \!= \!\sum_{s_{1,y},s_{2,y}} \hspace{-5pt}d_\theta(s_{1,x},s_{1,y}, s_{2,x},s_{2,y}).
\end{align*}
$d_\theta^y$ is defined similarly. From Figure \ref{fig:d_theta_xy} we can see that most of the probability measure concentrates on $\{(x_a, x_a), (x_c,x_c)\}$, $\{(y_a,y_a), (y_c, y_c)\}$, indicating that the two agents are able to coordinate most of the time.

\vspace{-5pt}
\section{Conclusion and Discussion}

This paper studies the optimization landscape and convergence of gradient play for SGs. For general SGs, we establish local convergence for strict NEs. For MPGs, we establish the global convergence with respect to NE gap and the local stability results for strict NEs as well as fully mixed ones. A sample-based NE-learning algorithm with sample complexity guarantee is also proposed under this setting. There are many interesting future directions. Firstly, the current assumption of MPGs is relatively strong compared with the notion of potential games in the one-shot setup, which might restrict its application to broader settings. More effort would be needed to identify other special types of SGs that facilitate efficient learning. It would also be meaningful to investigate real-life applications, such as dynamic congestion and routing. Secondly, other sample-based learning methods, such as actor-critic, natural policy gradient, Gauss-Newton methods, could also be considered, which might improve the sample complexity.  

\vspace{-5pt}

\bibliography{bib.bib}
\bibliographystyle{IEEEtran}
\vspace{-5pt}
\appendix

\section{More about Markov potential games}\label{apdx:More-about-MPG}
This section is dedicated to a more thorough understanding of Markov potential games, which includes necessary or sufficient conditions for MPG and a few (counter)examples.

\textbf{B.1. A necessary condition and counterexamples}
\begin{defi}(\cite{Monderer96})
Define the path in the parameter space as~$\tau = (\theta^{(0)},\theta^{(1)},\dots,\theta^{(N)})$, where $\theta^{t}, \theta^{t+1}$ differ only one component $i_t$, i.e. $\theta^{(t+1)} = (\theta^{(t+1)}_{i_t}, \theta^{(t)}_{-i_t} )$. A closed path is a path such that $\theta^{(0)} = \theta^{(N)}$. Define:
\begin{equation*}
    I(\tau) := \sum_{t=1}^NJ_{i_t}(\theta^{(t)}) -J_{i_t}(\theta^{(t-1)})
\end{equation*}
\end{defi}
The following theorem is a direct generalization of Theorem 2.8 in \cite{Monderer96} to MPG setting:
\begin{lemma}\label{lem:neccessary-condition-MPG}
For Markov potential games, $I(\tau)=0$ for any finite closed path $\tau$.
\end{lemma}
\begin{proof}
The proof is quite straightforward from the definition of MPG
\begin{align*}
    I(\tau) &= \sum_{t=1}^NJ_{i_t}(\theta^{(t)}) -J_{i_t}(\theta^{(t-1)})\\
    &= \sum_{t=1}^N\Phi(\theta^{(t)}) -\Phi(\theta^{(t-1)})\\
    &= \Phi(\theta^{(N)}) - \Phi(\theta^{(0)})\\
    & = 0.
\end{align*}
\end{proof}
Although the proof of Lemma \ref{lem:neccessary-condition-MPG} is straightforward, it serves as a useful tool in proving that a game is not a MPG. For example, applying the theorem we can get that the following conditions are not sufficient conditions for MPG.
\begin{prop}\label{prop:counterexample-MPG}
None of the following conditions on a SG necessarily imply that it is a MPG:
\begin{enumerate}[label=(\arabic*)]
    \item There exists $\phi(s,a)$ such that at each $s$, $\ r_i(s,a_i',a_{-i})-r_i(s,a_i,a_{-i})=\phi(s,a_i',a_{-i})-\phi(s,a_i,a_{-i})$;
    \item There exists $\phi(s,a)$ such that for every $s,\hat{s}$, $\ r_i(s,a_i',a_{-i})-r_i(\hat{s},a_i,a_{-i})=\phi(s,a_i',a_{-i})-\phi(\hat{s},a_i,a_{-i})$;
    \item Rewards $r_i$ are independent of $s$, and they have a potential function, i.e., 
$r_i(a_i,a_{-i})-r_i(a_i',a_{-i})=\phi(a_i,a_{-i})-\phi(a_i',a_{-i})$.
\end{enumerate}
\end{prop}
\begin{proof}(of Proposition \ref{prop:counterexample-MPG})
A simple counterexample showing that the conditions in \eqref{prop:counterexample-MPG} are not sufficient is the multi-stage prisoner's dilemma (Game 1) introduced in the numerical section (Appendix \ref{apdx:numerics}). Since the reward table for multi-stage prisoner's dilemma is the same as the one-shot prisoner's dilemma (which is known to be a potential game), Game 1 satisfies condition (3) in Proposition \ref{prop:counterexample-MPG}, which implies condition (2), which in turn implies condition (1). In the following we are going to use Lemma \ref{lem:neccessary-condition-MPG} to show that Game 1 is not a MPG. We define the following individual policies:
\begin{align*}
 \theta_i^{\text{Defect}}: \qquad \theta^{\text{Defect}}_{s=1,a_i=1} &= 0, ~~\theta^{\text{Defect}}_{s=2,a_i=1} = 0\\
 \theta_i^{\text{Coop}}: \qquad
 \theta^{\text{Coop}}_{s=1,a_i=1} &= 1,~~
 \theta^{\text{Coop}}_{s=2,a_i=1} = 0\\
 \theta_i^{\text{Always\_coop}}: \qquad
 \theta^{\text{Always\_coop}}_{s=1,a_i=1} &= 1,~~
 \theta^{\text{Always\_coop}}_{s=2,a_i=1} = 1
\end{align*}
Let:
\begin{align*}
    &\theta^{(0)} = (\theta_1^{\text{Defect}}, \theta_2^{\text{Coop}}),\quad
    &&\theta^{(1)} = (\theta_1^{\text{Coop}}, \theta_2^{\text{Coop}}),\\
    &\theta^{(2)} = (\theta_1^{\text{Coop}}, \theta_2^{\text{Always\_coop}}),\quad
    &&\theta^{(3)} = (\theta_1^{\text{Defect}}, \theta_2^{\text{Always\_coop}})
\end{align*}
and define a path $\tau$ by:
\begin{equation*}
\tau = (\theta^{(0)}, \theta^{(1)}, \theta^{(2)}, \theta^{(3)}, \theta^{(4)}), ~~\theta^{(4)} = \theta^{(0)}.
\end{equation*}
For the sake of easy calculation, we set $\epsilon = 0$ and set initial state as $s_0=1$ in Game 1. In this example, it is not hard to see that $J_{i_t}(\theta^{(t)}) - J_{i_t}(\theta^{(t-1)}) > 0$ for each $t \in \{1,2,3,4\}$, implying that $I(\tau) > 0$. This indicates that although Game 1 satisfies condition (3) (as well as conditions (1) and (2)), it is still not a MPG. 
\end{proof}


\textbf{B.2. A sufficient condition}

Proposition \ref{prop:counterexample-MPG} suggests that MPG is a quite restrictive assumption. Even if the reward table for a SG is the same as the reward table of a one-shot potential game, the SG may still not be a MPG. Nevertheless, we can show that the following condition is sufficient for a stochastic game to be a MPG:
\begin{lemma}\label{lem:SG-MPG}
A stochastic game is a MPG if condition (1) in Proposition \ref{prop:counterexample-MPG} is satisfied and that $P(s'|s,a) = P(s'|s)$.
\end{lemma}
\begin{proof}
$P(s'|s,a) = P(s'|s)$ implies that the discounted state visitation distribution $d_\theta$ does not depend on $\theta$, and thus we denote it as $d(s)$ instead.
Condition (1) implies that $\phi(s,a_i,a_{-i}) - r_i(s,a_i,a_{-i})$ only depends on $s$ and $a_{-i}$ but not $a_i$, and so we denote the difference as $\delta_i(s,a_{-i})$, i.e., 
$$\phi(s,a_i,a_{-i}) - r_i(s,a_i,a_{-i}) = \delta_i(s,a_{-i}).$$
The total reward of agent $i$ can be written as:
\begin{align*}
    J_i(\theta) &= \sum_sd(s)\sum_{a}\pi_\theta(a|s)r_i(s,a)\\
    &=\sum_sd(s)\sum_{a_i}\pi_{\theta_i}(a_i|s)\sum_{a_{-i}}\pi_{\theta_{-i}}(a_{-i}|s)r(s,a_i,a_{-i})
\end{align*}
Similarly, total potential function can be written as:
\begin{align*}
    \Phi(\theta) &= \sum_sd(s)\sum_{a}\pi_\theta(a|s)\phi(s,a)\\
    &=\sum_sd(s)\sum_{a_i}\pi_{\theta_i}(a_i|s)\sum_{a_{-i}}\pi_{\theta_{-i}}(a_{-i}|s)\phi(s,a_i,a_{-i})
\end{align*}
Thus,
\begin{align*}
     \Phi(\theta) -J_i(\theta) &= \sum_sd(s)\sum_{a_i}\pi_{\theta_i}(a_i|s)\sum_{a_{-i}}\pi_{\theta_{-i}}(a_{-i}|s)\left(\phi(s,a_i,a_{-i})-r(s,a_i,a_{-i})\right)\\
     &=\sum_sd(s)\sum_{a_i}\pi_{\theta_i}(a_i|s)\sum_{a_{-i}}\pi_{\theta_{-i}}(a_{-i}|s)\delta_i(s, a_{-i})\\
     &= \sum_sd(s)\sum_{a_{-i}}\pi_{\theta_{-i}}(a_{-i}|s)\delta_i(s, a_{-i}),
\end{align*}
which does not depend on parameter $\theta_i$, i.e.,
\begin{equation*}
    \Phi(\theta_i',\theta_{-i}) -J_i(\theta_i',\theta_{-i}) = \Phi(\theta_i,\theta_{-i}) -J_i(\theta_i,\theta_{-i}), \quad \forall (\theta_i',\theta_{-i}), (\theta_i,\theta_{-i}) \in \mathcal{X},
\end{equation*}
which completes the proof.
\end{proof}

\textbf{B.3. MPG with local states and an application example}

From Proposition~\ref{prop:counterexample-MPG}, we see that it is difficult for a SG to be a MPG even if the game is a potential game at each state. Lemma~\ref{lem:SG-MPG} only presents a very special case where the action does not affect the state, meaning that this MPG is merely a collection of potential games. To provide a MPG which is beyond the identical-interest case and the case in Lemma~\ref{lem:SG-MPG}, inspired by the setting in \cite{qu2020} and \cite{Macua18}, here we consider a special multi-agent setting where $\cS=\cS_1\times\dots\times\cS_n$ and $\cS_i$ is the local state space of agent $i$. In addition, the transition probability takes the decomposed form, $P(s'|s,a) = \prod_{i=1}^nP(s_i'|s_i,a_i)$. The rest of the SG setting is the same as the SG in Section~\ref{sec:Problem-setup}. 
Deviating slightly from the main text, we consider the \textit{localized policy} where each agent take actions based on its own state, 
\begin{equation*}
    \pi_{\theta}(a_t|s_t) = \prod_{i=1}^n\pi_{\theta_i}(a_{i,t}|s_{i,t})
\end{equation*}
with the \textit{localized direct parameterization}:
\begin{equation*}
    \pi_{\theta_i}(a_{i,t}|s_{i,t}) = \theta_{(s_i,a_i)}, \quad \theta_i \in \Delta(\cA_i)^{|\cS_i|}
\end{equation*}
 We use $\cX_i^{\textup{local}}:=\Delta(\cA_i)^{|\cS_i|}$ to denote the feasibility region of $\theta_i$, and the feasibility region of $\theta$ is denoted as $\cX^{\textup{local}}:= \cX_i^{\textup{local}}\times\dots\times\cX_n^{\textup{local}}$.


\begin{lemma}\label{thm:MPG-mean-field}
If there is a function $\phi(s,a)$ such that for every agent $i$, $r_i(s_i,s_{-i},a_i,a_{-i})=\phi(s_i,s_{-i},a_i,a_{-i})+\psi_{i}(s_{-i},a_{-i})$ where $\psi_i$ only depends on $s_{-i},a_{-i}$, then this SG is a MPG, i.e., for any parameters $(\theta_i',\theta_{-i}), (\theta_i,\theta_{-i})\in \cX^{\textup{local}}$, the equation in Definition \ref{defi:MPG} is satisfied.
\end{lemma}
The proof is straightforward given the local structure of the MDP and the localized policies. This MPG enjoys nontrivial multi-agent application examples such as medium access control \cite{Macua18}, dynamic congestion control \cite{bertrand20}, etc. 
Below we provide medium access control as one of the examples. 

\textit{Real application - medium access control.}
We consider the discretized version of the dynamic medium access control game introduced in \cite{Macua18}, where each agent is a user that tries to transmit data via a single transmission medium by injecting power to the wireless network. Each user's goal is to maximize its data rate and battery lifespan. If multiple users transmit at the same time, they will interfere with each other and decrease their data rate. Here user $i$'s state is $s_i \in \cS_i = \{0,1,\dots, B_{i,\max}\}$, which denotes its own battery level, where $B_{i,\max}$ is its initial battery level. We use $\delta_i$ to denote its discharging factor. Its action $a_i\in\cA_i=\{0,1,\dots,P_{i,\max}\}$ denotes the power injected to the network at each time step, where $P_{i,\max}$ is the maximum allowed power. The state transition is deterministic, describing the discharging process of the battery proportional to the transmission power, which is given by:
$$s_{i,t+1} = s_{i,t} - \delta_i a_{i,t}.$$
The stage reward of user $i$ is given by:
\begin{equation*}
    r_i(s,a) = \log\left(1+\frac{|h_i|^2a_i}{1+\sum_{j\neq i}|h_j|^2a_j}\right) + \alpha s_i,
\end{equation*}
where $h_i$ is the random fading channel coefficient for user $i$.

 By noticing that $r_i(s,a)=\log\left(1+\sum_{i=1}^n|h_i|^2a_i\right) + \alpha \sum_{i} s_i-\log\left(1+\sum_{j\neq i}|h_j|^2a_j\right) - \alpha \sum_{j\neq i}s_j$, we can apply Lemma \ref{thm:MPG-mean-field} to verify that the medium access control problem is indeed a MPG and that the potential function $\phi$ is given as:
\begin{equation*}
    \phi(s,a) = \log\left(1+\sum_{i=1}^n|h_i|^2a_i\right) + \alpha \sum_{i=1}^ns_i.
\end{equation*}

\section{Proof of Lemma \ref{lemma:averaged-performance-difference-lemma}}\label{apdx:proof-performance-difference-lemma}
\begin{proof}
According to the performance difference lemma (c.f. \cite{Kakade02}), let $\theta':= (\theta_i', \theta_{-i})$,
\begin{talign*}
        &J_i(\theta_i', \theta_{-i})- J_i(\theta_i, \theta_{-i}) = \frac{1}{1-\gamma} \sum_{s,a} d_{\theta'}(s) \pi_{\theta'}(a|s) A_i^{\theta}(s, a)\\
    &=\frac{1}{1-\gamma} \sum_{s,a_i} d_{\theta'}(s) \pi_{\theta'_i}(a_i|s)\sum_{a_{-i}} \pi_{\theta_{-i}}(a_{-i}|s)A_i^{\theta}(s, a_i,a_{-i})\\
     &=\frac{1}{1-\gamma} \sum_{s,a_i} d_{\theta'}(s) \pi_{\theta'_i}(a_i|s)\overline{A_i^{\theta}}(s, a_i).
\end{talign*}
\end{proof}
\section{Derivation of \eqref{eq:d-hat} (calculation of $d_\theta$)}\label{apdx:d-theta}
From the definition of $d_\theta$:
\begin{equation*}
    d_\theta(s) = \bE_{s_0\sim\rho} (1-\gamma) \sum_{t=0}^\infty\gamma^t \Prtheta(s_t=s|s_0),\vspace{-5pt}
\end{equation*}
we have that:
\begin{align*}
    d_\theta &= (1-\gamma) \left(\rho+\gamma\overline{P_S^\theta}^\top\rho + \gamma^2\overline{P_S^\theta}^{2^\top}\rho + \cdots\right)\\
    &= (1-\gamma)(I + \gamma\overline{P_S^\theta} + \gamma^2\overline{P_S^\theta}^2\cdots)^\top\rho\\
    &= (1-\gamma)(I-\gamma\overline{P_S^\theta})^{-\top}\rho
\end{align*}

\section{Proof of Lemma \ref{lemma:policy-gradient-direct-parameterization}}\label{apdx:proof-policy-gradient}
\begin{proof}
According to policy gradient theorem \eqref{eq:policy-gradient-theorem}:
\begin{align*}
    \frac{\partial J_i(\theta)}{\partial{\theta_{s,a_i}}} &= \frac{1}{1-\gamma}\sum_{s'} \sum_{a'} d_{\theta}(s') \pi_\theta(a'|s') \frac{\partial \log\pi_\theta(a'|s')}{\partial \theta_{s,a_i}}Q_i^\theta(s,a)
\end{align*}
Since for direct parameterization:
\begin{align*}
    \frac{\partial \log\pi_\theta(a'|s')}{\partial \theta_{s,a_i}} =  \frac{\partial \log\pi_{\theta_i}(a_i'|s')}{\partial \theta_{s,a_i}} &= \mathbf{1}\{a_i' = a_i, s' = s\}\frac{1}{\theta_{s,a_i}}\\
    &= \mathbf{1}\{a_i' = a_i, s' = s\}\frac{1}{\pi_{\theta_i}(a_i|s)}
\end{align*}
Thus we have that:
\begin{equation*}
\begin{split}
      \frac{\partial J_i(\theta)}{\partial{\theta_{s,a_i}}} &= \frac{1}{1-\gamma}\sum_{s'} \sum_{a'} d_{\theta}(s') \pi_\theta(a'|s') \mathbf{1}\{a_i' = a_i, s' = s\}\frac{1}{\pi_\theta(a_i|s)} Q_i^\theta(s,a)\\
    &= \frac{1}{1-\gamma}\sum_{a_{-i}'}d_\theta (s) \pi_{\theta_i}(a_i|s)\pi_{\theta_{-i}}(a_{-i}'|s)\frac{1}{\pi_\theta(a_i|s)} Q_i^\theta(s,a_i, a_{-i}')\\
    &=\frac{1}{1-\gamma}d_\theta(s) \sum_{a_{-i}'}\pi_{\theta_{-i}}(a_{-i}'|s)Q_i^\theta(s,a_i, a_{-i}')\\
    &= \frac{1}{1-\gamma}d_{\theta}(s)  \overline {Q_i^{\theta}}(s, a_i)
\end{split}
\end{equation*}
\end{proof}

\section{Proof of Theorem \ref{theorem:local-convergence-general-game} and Lemma \ref{lemma:strict-NE-gap}} \label{apdx:local-convergence-general-game}
\begin{proof}(Lemma \ref{lemma:strict-NE-gap})
For a given strict NE $\theta^*$ randomly set:
$$a_i^*(s) \in \argmax_{a_i} \overline{A_{i}^{\theta^*}}(s,a_i),$$
and set $\theta_i$ be:
$$\theta_{s,a_i} = \mathbf{1}\{a_i=a_i^*(s)\}.$$And set $\theta:= (\theta_i, \theta_{-i}^*)$
From performance difference lemma (Lemma \ref{lemma:averaged-performance-difference-lemma}):
\begin{align*}
    J_i(\theta_i, \theta_{-i}^*) - J_i(\theta_i^*, \theta_{-i}^*) 
    &= \sum_{s,a_i}d_\theta(s)\pi_{\theta_i}(s,a_i)\overline{A_{i}^{\theta^*}}(s,a_i)\\
    &= \sum_s d_\theta(s)\max_{a_i}\overline{A_{i}^{\theta^*}}(s,a_i)\ge 0
\end{align*}
Because $\theta^*$ is a strict NE, thus the inequality above forces $\theta_i^* = \theta$, and that $\max_{a_i}\overline{A_{i}^{\theta^*}}(s,a_i)=0$. The uniqueness of $\theta^*$ also implies uniqueness of $a_i^*(s)$, and thus,
$$\overline{A_{i}^{\theta^*}}(s,a_i)<0, ~~\forall~a_i\neq a_i^*(s),$$
which completes the proof of the lemma.
\end{proof}

The proof of Theorem \ref{theorem:local-convergence-general-game} relies on the following auxiliary lemma, whose proof we defer to Appendix \ref{apdx:auxiliary}. 

\begin{restatable}{lemma}{auxiliary}
\label{lemma:auxiliary}
Let $\cX$ denote the probability simplex of dimension $n$. Suppose $\theta\in \cX, g\in \bR^n$ and that there exists $i^*\in \{1,2,\dots,n\}$ and $\Delta >0$ such that:
\begin{align*}
    \theta_{i^*} &\ge \theta_i, \quad \forall i\neq i^*\\
    g_{i^*} &\ge g_i + \Delta, \quad \forall i \neq i^*.
\end{align*}
Let
\begin{equation*}
    \theta' = Proj_\cX(\theta+g),
\end{equation*}
then:
\begin{equation*}
    \theta'_{i^*} \ge \min\{1, \theta_{i^*} + \frac{\Delta}{2}\}
\end{equation*}
\end{restatable}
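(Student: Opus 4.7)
The plan is to use the standard water-filling / KKT characterization of the Euclidean projection onto the probability simplex: there exists a threshold $\lambda\in\bR$ such that $\theta'_i = \max\{v_i-\lambda,0\}$ for each $i$, where $v_i := \theta_i + g_i$, with $\lambda$ chosen so that $\sum_i \theta'_i = 1$. Writing $S := \{i : \theta'_i > 0\}$ for the support of $\theta'$, the hypotheses give $v_{i^*} - v_i = (\theta_{i^*}-\theta_i) + (g_{i^*}-g_i) \ge \Delta$ for every $i\neq i^*$. In particular, if $v_{i^*} \le \lambda$ then $v_i < \lambda$ for every $i\neq i^*$ as well, forcing $\theta' \equiv 0$, which contradicts $\sum_i\theta'_i = 1$. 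Hence $i^*\in S$.

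I would then split on $|S|$. If $|S|=1$, then $S=\{i^*\}$ and $\theta'_{i^*}=1 \ge \min\{1,\theta_{i^*}+\Delta/2\}$. Otherwise $|S|\ge 2$, and solving $\sum_{i\in S}(v_i-\lambda)=1$ for $\lambda$ and rearranging gives
\[
    \theta'_{i^*} \;=\; v_{i^*}-\lambda \;=\; \frac{1 + \sum_{i\in S\setminus\{i^*\}} (v_{i^*}-v_i)}{|S|}.
\]
The key step is to keep both pieces of $v_{i^*}-v_i$ instead of immediately using $v_{i^*}-v_i\ge\Delta$: substituting $v_{i^*}-v_i = (\theta_{i^*}-\theta_i) + (g_{i^*}-g_i) \ge (\theta_{i^*}-\theta_i) + \Delta$ produces
\[
    \theta'_{i^*} \;\ge\; \theta_{i^*} + \frac{|S|-1}{|S|}\,\Delta + \frac{1 - \sum_{i\in S}\theta_i}{|S|}.
\]
Since $\theta\in\cX$ yields $\sum_{i\in S}\theta_i \le 1$, and since $|S|\ge 2$ yields $(|S|-1)/|S| \ge 1/2$, both remainder terms are nonnegative and together give $\theta'_{i^*}\ge \theta_{i^*} + \Delta/2 \ge \min\{1,\theta_{i^*}+\Delta/2\}$.

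The main subtlety I expect is precisely this decomposition of $v_{i^*}-v_i$. The naive bound $v_{i^*}-v_i\ge\Delta$ yields only $\theta'_{i^*} \ge 1/|S| + (|S|-1)\Delta/|S|$, which breaks down when $\theta_{i^*}$ is close to $1$---exactly the regime where the ``$\min$'' with $1$ becomes binding and is instead handled by the $|S|=1$ branch. Retaining the $\theta_{i^*}-\theta_i$ term as a nonnegative contribution is what makes the final bound track $\theta_{i^*}$ additively, and the hypothesis $\theta_{i^*}\ge\theta_i$ for all $i\neq i^*$ is exactly what guarantees this term is nonnegative and hence usable. Everything else is routine algebra once the water-filling formula is in hand.
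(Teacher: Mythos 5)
Your proof is correct and takes essentially the same route as the paper's: both invoke the KKT/water-filling characterization of the Euclidean projection onto the simplex, identify the support (the paper's $\rho$ is your $|S|$, with $i^*$ in it because $v_{i^*}-v_i\ge\Delta>0$), and arrive at $\theta'_{i^*}\ge\theta_{i^*}+\frac{|S|-1}{|S|}\Delta$ by the same algebra after discarding the nonnegative term $\frac{1}{|S|}\bigl(1-\sum_{i\in S}\theta_i\bigr)$, with the $|S|=1$ case giving $\theta'_{i^*}=1$. The only cosmetic difference is that the paper sorts the coordinates and quotes an explicit projection algorithm to define the support and the threshold, whereas you solve for the threshold directly from the support equation.
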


\begin{proof}(Theorem \ref{theorem:local-convergence-general-game})
For a fixed agent $i$ and state $s$, the gradient play (\eqref{eq:gradient-play-discrete}) update rule of policy $\theta_{i,s}$ is given by:
\begin{equation}\label{eq:gradient-play-each-element}
    \theta_{i,s}^{(t+1)} = Proj_{\Delta(|\cA_i|)}(\theta_{i,s}^{(t)} + \frac{\eta}{1-\gamma} d_{\theta^{(t)}}(s)\overline{Q_i^{\theta^{(t)}}}(s,\cdot)), 
\end{equation}
where $\Delta(|\cA_i|)$ denotes the probability simplex in $|\cA_i|$-th dimension and $\overline{Q_i^{\theta^{(t)}}}(s,\cdot))$ is a $|\cA_i|$-th dimensional vector with $a_i$-th element equals to $\overline{Q_i^{\theta^{(t)}}}(s,a_i))$. We will show that this update rule satisfies the conditions in \Cref{lemma:auxiliary}, which will then allow us to prove that
\begin{align*}
    D(\theta^{(t+1)}||\theta^*) \le \max\{0, D(\theta^{(t)}||\theta^*) - \frac{\eta\Delta^{\theta^*}}{2}\}.
\end{align*}
Letting $a_i^*(s)$ be the same definition as in Lemma \ref{lemma:strict-NE-gap}, we have that:
\begin{align}
    &\frac{1}{1-\gamma} d_{\theta^{(t)}}(s)\overline{Q_i^{\theta^{(t)}}}(s,a_i^*(s)) -  \frac{1}{1-\gamma} d_{\theta^{(t)}}(s)\overline{Q_i^{\theta^{(t)}}}(s,a_i) \notag\\
    \ge& \frac{1}{1-\gamma} d_{\theta^{*}}(s)\overline{Q_i^{\theta^{*}}}(s,a_i^*(s)) -  \frac{1}{1-\gamma} d_{\theta^{^*}}(s)\overline{Q_i^{\theta^{*}}}(s,a_i) \notag\\
    -& \left|\frac{1}{1-\gamma}d_{\theta^{*}}(s)\overline{Q_i^{\theta^{*}}}(s,a_i^*(s)) - \frac{1}{1-\gamma} d_{\theta^{(t)}}(s)\overline{Q_i^{\theta^{(t)}}}(s,a_i^*(s)) \right|\notag\\
    -& \left|\frac{1}{1-\gamma}d_{\theta^{*}}(s)\overline{Q_i^{\theta^{*}}}(s,a_i) - \frac{1}{1-\gamma} d_{\theta^{(t)}}(s)\overline{Q_i^{\theta^{(t)}}}(s,a_i) \right|\notag\\
    \ge& \frac{1}{1-\gamma} d_{\theta^{*}}(s) \left(\overline{A_i^{\theta^{*}}}(s,a_i^*(s) - \overline{A_i^{\theta^{*}}}(s,a_i))\right) -2\|\nabla_{\theta_i} J_i(\theta^{(t)}) - \nabla_{\theta_i} J_i(\theta^*)\|\label{eq:use-smoothness-1}\\
    \ge& \Delta^{\theta^*} - \frac{4}{(1-\gamma)^3}\left(\sum_{i=1}^n|\cA_i| \right)\|\theta^{(t)} - \theta^*\|\label{eq:use-smoothness-2}\\
    \ge & \Delta^{\theta^*} - \frac{4}{(1-\gamma)^3}\left(\sum_{i=1}^n|\cA_i| \right)\sum_{i=1}^n\sum_s \|\theta^{(t)}_{i,s} -\theta^*_{i,s})\|_1\notag\\
    \ge&\Delta^{\theta^*} - \frac{4}{(1-\gamma)^3}n|\cS|\left(\sum_{i=1}^n|\cA_i| \right) D(\theta^{(t)}||\theta^*),\notag
\end{align}
where \eqref{eq:use-smoothness-1} to \eqref{eq:use-smoothness-2} uses smoothness property in Lemma \ref{lemma:smoothness}.

We use proof of induction as supposed for $\ell \le t-1$, we have:
\begin{equation*}
    D(\theta^{(\ell+1)}||\theta^*)\le \max\{D(\theta^{(\ell)}||\theta^*) - \frac{\eta\Delta^{\theta^*}}{2},0\},
\end{equation*}
thus
\begin{equation*}
     D(\theta^{(t)}||\theta^*) \le D(\theta^{(0)}||\theta^*) \le \frac{\Delta^{\theta^*}(1-\gamma)^3}{8n|\cS|\left(\sum_{i=1}^n|\cA_i| \right)}.
\end{equation*}
Then we can further conclude that:
\begin{align*}
    &(1-\gamma) d_{\theta^{(t)}}(s)\overline{Q_i^{\theta^{(t)}}}(s,a_i^*(s)) -  (1-\gamma) d_{\theta^{(t)}}(s)\overline{Q_i^{\theta^{(t)}}}(s,a_i) \\
    \ge&\Delta^{\theta^*} - \frac{4}{(1-\gamma)^3}n|\cS|\left(\sum_{i=1}^n|\cA_i| \right) D(\theta^{(t)}||\theta^*) \\\ge& \frac{\Delta^{\theta^*}}{2}, \quad \forall~a_i\neq a_i^*(s)
\end{align*}
Additionally, for $D(\theta^{(t)}||\theta^*) \le \frac{\Delta^{\theta^*}(1-\gamma)^3}{8n|\cS|\left(\sum_{i=1}^n|\cA_i| \right)}$, we may conclude:
$$\theta^{(t)}_{s,a_i^*(s)} \ge 1/2 \ge \theta^{(t)}_{s,a_i} \quad \forall a_i \neq a_i^*(s),$$
then by applying \Cref{lemma:auxiliary} to \eqref{eq:gradient-play-each-element} we have:
\begin{align*}
    &\theta^{(t+1)}_{s,a_i^*(s)} \ge \min\{1, \theta^{(t)}_{s,a_i^*(s)} + \frac{\eta\Delta^{\theta^*}}{4}\}\\
    \Longrightarrow \quad& \|\theta^{(t+1)}_{i,s}-\theta^*_{i,s}\|_1=2\left(1-\theta^{(t+1)}_{s,a_i^*(s)}\right)\\&\le \max\{0, \|\theta^{(t)}_{i,s}-\theta^*_{i,s}\|_1 - \frac{\eta\Delta^{\theta^*}}{2}\},\quad \forall ~s\in\cS,~i=1,2,\dots,n\\
    \Longrightarrow \quad& D(\theta^{(t+1)}||\theta^*) \le \max\{0, D(\theta^{(t)}||\theta^*) - \frac{\eta\Delta^{\theta^*}}{2}\},
\end{align*}
which completes the proof.
\end{proof}
\section{Proof of Proposition \ref{prop:potential-game-pure-NE}}\label{apdx:proof-potential-game-pure-NE}
\begin{proof}
First of all, from the definition of NE, the global maximum of the potential function is a NE. We now show that this global maximum is a deterministic policy. From classical results (e.g. \cite{Sutton18}) we know that there is an optimal deterministic centralized policy $$\pi^*(a=(a_1,\dots,a_n)|s) = \mathbf{1}\{a=a^*(s)=(a_1^*(s),\dots,a_n^*(s))\}$$ that maximizes:
\begin{equation*}
    \pi^* = \argmax_{\pi:\cS\rightarrow\Delta(\cA)} \bE \left[\sum_{t=0}^\infty \gamma^t \phi(s_t,a_t)\big|\pi,s_0=s\right].
\end{equation*}
We now show that this centralized policy can also be represented by direct distributed policy parameterization. Set $\theta^*$ as:
\begin{equation*}
    \pi_{\theta_i^*}(a_i|s) = \mathbf{1}\{a_i = a_i^*(s)\},
\end{equation*}
then:
\begin{equation*}
    \pi^*(a|s) = \prod_{i=1}^n\pi_{\theta_i^*}(a_i|s)
\end{equation*}
Since $\pi^*$ globally maximizes the discounted summation of potential function $\phi$ among centralized policies, which includes all possible direct distributedly parameterized policies, $\theta^*$ also maximizes the total potential function $\Phi$ globally among all direct distributed parameterization, which completes the proof.
\end{proof}
\section{Proof of Theorem \ref{theorem:potential-game-convergence}}\label{apdx:proof-potential-game-convergence}
\subsection{Useful Optimization Lemmas}

\begin{lemma}
Let $\Phi(\theta)$ be $\beta$-smooth in $\theta$, define gradient mapping:
$$G^\eta(\theta):= \frac{1}{\eta}\left(Proj_\cX(\theta+\eta\nabla \Phi(\theta)) - \theta\right).$$
The update rule for projected gradient is:
$$\theta^+ = \theta + \eta G^\eta(\theta) = Proj_\cX(\theta+\eta\nabla \Phi(\theta)).$$
Then:
$$(\theta'-\theta^+)^\top \nabla \Phi(\theta^+) \le (1+\eta\beta) \|G^\eta(\theta)\|\|\theta' - \theta^+\| \quad \forall \theta' \in \cX.$$
\end{lemma}
\begin{proof}
By a standard property of Euclidean projections onto a convex set, we get that
\begin{align*}
    &\qquad (\theta+\eta\nabla \Phi(\theta) - \theta^+)^\top (\theta' - \theta^+) \le 0\\
    &\Longrightarrow ~~ \eta\nabla \Phi(\theta)^\top (\theta' - \theta^+) + (\theta - \theta^+)^\top (\theta' - \theta^+) \le 0\\
    &\Longrightarrow ~~ \eta\nabla \Phi(\theta)^\top (\theta' - \theta^+) - \eta G^\eta(\theta)^\top (\theta' - \theta^+) \le 0\\
    &\Longrightarrow ~~ \nabla \Phi(\theta)^\top (\theta' - \theta^+) \le \|G^\eta(\theta)\|\theta' - \theta^+\|\\
    &\Longrightarrow ~~ \nabla \Phi(\theta^+)^\top (\theta' - \theta^+) \le \|G^\eta(\theta)\|\theta' - \theta^+\| + (\nabla \Phi(\theta^+) - \nabla \Phi(\theta))^\top (\theta' - \theta^+)\|\\
    & \qquad\qquad\qquad\qquad\qquad~~ \le  \|G^\eta(\theta)\|\theta' - \theta^+\| + \beta \|\theta^+ - \theta\|\|\theta' - \theta^+\|\\
    & \qquad\qquad\qquad\qquad\qquad~~ = (1+\eta\beta)\|G^\eta(\theta)\|\theta' - \theta^+\| 
\end{align*}
\end{proof}

\begin{lemma}(Sufficient ascent) \label{lemma:sufficient-ascent-exact}
Suppose $\Phi(\theta)$ is $\beta$-smooth. Let $\theta^+ = Proj_{\cX}(\theta+\eta{\nabla}\Phi(\theta))$. Then for $\eta\le\frac{1}{\beta}$,
\begin{equation*}
    \begin{split}
        \Phi(\theta^+) - \Phi(\theta)\ge \frac{\eta}{2}\|G^\eta(\theta)\|^2
    \end{split}
\end{equation*}
\end{lemma}
\begin{proof}
From the smoothness property we have that:
\begin{equation}\label{eq:smoothness-property}
    \Phi(\theta^+)-\Phi(\theta) \ge \nabla_\theta\Phi(\theta)^\top(\theta^+-\theta) -\frac{\beta}{2}\|\theta^+-\theta\|^2
\end{equation}
Since $\theta^+ = Proj_{\cX}(\theta+\eta{\nabla}\Phi(\theta))$, we have that:
\begin{equation*}
    (\theta + \eta\nabla\Phi(\theta) - \theta^+)^\top(\theta'-\theta^+) \le 0, ~~\forall~\theta'\in\cX
\end{equation*}
take $\theta'=\theta$, we get:
\begin{equation*}
    \nabla\Phi(\theta)^\top(\theta^+-\theta) \ge \frac{1}{\eta}\|\theta^+-\theta\|^2. 
\end{equation*}
Thus:
\begin{align*}
    \Phi(\theta^+)-\Phi(\theta) &\ge \nabla_\theta\Phi(\theta)^\top(\theta^+-\theta) -\frac{\beta}{2}\|\theta^+-\theta\|^2\\
    &\ge (\frac{1}{\eta} - \frac{\beta}{2})\|\theta^+-\theta\|^2\\
    & \ge \frac{1}{2\eta} \|\theta^+-\theta\|^2\\
    &= \frac{\eta}{2}\|G^\eta(\theta)\|^2,
\end{align*}
which completes the proof.
\end{proof}
\begin{lemma}(Corollary of Lemma \ref{lemma:sufficient-ascent-exact})\label{lemma:non-cvx-opt-result}
For $\Phi(\theta)$ that is $\beta$ smooth and bounded $\Phi_{\min}\le\Phi(\theta)\le\Phi_{\max}$, running projected gradient ascent:
\begin{equation*}
    \theta^{(t+1)} = Proj_{\cX}(\theta+\eta{\nabla}\Phi(\theta^{(t)}))
\end{equation*}
with $\eta = \frac{1}{\beta}$, will guarantee that:
\begin{equation*}
    \lim_{t\rightarrow+\infty}\|G^\eta(\theta^{(t)})\| =0.
\end{equation*}
Further, we have that:
\begin{equation}\label{eq:telescoping}
    \frac{1}{T}\sum_{t=0}^{T-1}\|G^\eta(\theta^{(t)})\|^2 \le \frac{2\beta(\Phi_{\max}-\Phi_{\min})}{T}
\end{equation}
\end{lemma}
\begin{proof}
From Lemma \ref{lemma:sufficient-ascent-exact} we have:
\begin{align*}
    \Phi(\theta^{(t+1)}) - \Phi(\theta^{(t)}) \ge \frac{1}{2\beta}\|G^\eta(\theta^{(t)})\|^2 \ge 0
\end{align*}
Thus $\Phi(\theta^{(t)})$ is non decreasing, and since it is bounded, we know that $\Phi(\theta^{(t)})$ asymptotically convergence to some value $\Phi^*$, and thus show that
\begin{equation*}
    \lim_{t\rightarrow\infty}\|G^\eta(\theta^{(t)})\| = 0.
\end{equation*}
Additionally, from \eqref{eq:telescoping},
\begin{align*}
    \Phi(\theta^{(T)}) - \Phi(\theta^{(0)}) \ge \sum_{t=0}^{T-1} \frac{1}{2\beta}\|G^\eta(\theta^{(t)})\|^2\\
    \Longrightarrow~~  \frac{1}{T}\sum_{t=0}^{T-1} \|G^\eta(\theta^{(t)})\|^2 \le \frac{2\beta (\Phi_{\max}-\Phi_{\min})}{T},
\end{align*}
which completes the proof.
\end{proof}
\subsection{Proof of Theorem \ref{theorem:potential-game-convergence}}
\begin{proof}
Recall the definition of gradient mapping:
$$G^\eta(\theta) = \frac{1}{\eta}\left(Proj_\cX(\theta+\eta\nabla \Phi(\theta)) - \theta\right).$$
From gradient domination property \eqref{eq:gradient-domination-MAMDP} we have that:
\begin{align*}
    \NEgap_i(\theta^{(t+1)}) &= \max_{\theta_i'\in\cX_i}J_i(\theta_i'. \theta_{-i}^{(t+1)}) - J_i(\theta_i^{(t+1)}, \theta_{-i}^{(t+1)})\\
    &\le \max_{\theta_i'\in\cX_i} \left\|\frac{d_{(\theta_i'. \theta_{-i}^{(t+1)})}}{d_{(\theta_i^{(t+1)}, \theta_{-i}^{(t+1)})}}\right\|_\infty \max_{\overline\theta_i\in \cX_i} \left(\overline\theta_i - \theta_i^{(t+1)}\right)^\top \nabla_{\theta_i}J_i(\theta^{(t+1)})\\
    &\le M\max_{\overline\theta_i\in \cX_i} \left(\overline\theta_i - \theta_i^{(t+1)}\right)^\top \nabla_{\theta_i}\Phi(\theta^{(t+1)})\\
    &\le M(1+\eta\beta) \max_{\overline\theta_i\in \cX_i} \|\overline\theta_i - \theta_i^{(t+1)}\|\|G^\eta(\theta^{(t)})\|\\
    &\le M(1+\eta\beta)2\sqrt{|\cS|}\|G^\eta(\theta^{(t)})\|\\
    & = 4M\sqrt{|\cS|}\|G^\eta(\theta^{(t)})\|
\end{align*}
 where the last step follows as $\|\overline\theta_i - \theta_i^{(t+1)}\|\le 2\sqrt{|\cS|}$. Thus
 \begin{equation*}
     \NEgap(\theta^{(t+1)})\le 4M\sqrt{|\cS|}\|G^\eta(\theta^{(t)})\|
 \end{equation*}
 Then from Lemma \ref{lemma:non-cvx-opt-result} we have that:
 \begin{equation*}
     \lim_{t\rightarrow\infty}\|G^\eta(\theta^{(t)})\| = 0~~ \Longrightarrow~~ \lim_{t\rightarrow\infty}\NEgap(\theta^{(t)}) = 0,
 \end{equation*}
 and that:
 \begin{align*}
     &\frac{1}{T}\sum_{t=0}^{T-1}\|G^\eta(\theta^{(t)})\|^2 \le \frac{2\beta(\Phi_{\max}-\Phi_{\min})}{T}\\
     \Longrightarrow~~  &\frac{1}{T}\sum_{t=1}^{T}\NEgap(\theta^{(t)})^2 \le \frac{32\beta M^2|\cS|(\Phi_{\max}-\Phi_{\min})}{T}
 \end{align*}
we can get our required bound of $\epsilon$ if we set:
\begin{equation*}
    \frac{32\beta M^2|\cS|(\Phi_{\max}-\Phi_{\min})}{T} \le \epsilon^2,
\end{equation*}
or equivalently
\begin{align*}
    T&\ge\frac{32M^2\beta(\Phi_{\max} - \Phi_{\min})|\cS|}{\epsilon^2}\\
    &=\frac{64M^2(\Phi_{\max} - \Phi_{\min})|\cS|\sum_i|\cA_i|}{\epsilon^2(1-\gamma)^3},
\end{align*}
which completes the proof.
\end{proof}

\section{Proof of Theorem \ref{theorem:MPG-local-maximum-saddle-points}}\label{apdx:MPG-local-maximum-saddle-points}
\begin{proof}(of the first claim)
The proof requires knowledge of Lemma \ref{lemma:strict-NE-gap} in Section \ref{sec:gradient-play-SG} thus we would recommend readers to first go through Lemma \ref{lemma:strict-NE-gap} first. The lemma immediately leads to the conclusion that a strict NE $\theta^*$ should be deterministic.
Let $a_i^*(s), \Delta_i^{\theta^*}(s), \Delta_i^{\theta^*}$ be the same definition as in Lemma \ref{lemma:strict-NE-gap}.

For any $\theta\in\cX$, Taylor expansion suggests that:
\begin{align*}
    \Phi(\theta) - \Phi(\theta^*) &= (\theta-\theta^*)^\top \nabla\Phi(\theta^*) + o(\|\theta-\theta^*\|)\\
    &=\sum_i (\theta_i-\theta_i^*)^\top \nabla_{\theta_i}J_i(\theta^*)+ o(\|\theta-\theta^*\|)\\
    &=\frac{1}{1-\gamma} \sum_i\sum_s\sum_{a_i} d_{\theta^*}(s)\overline{A_{i}^{\theta^*}}(s,a_i)(\theta_{s,a_i} - \theta^*_{s,a_i})+ o(\|\theta-\theta^*\|)\\
     &\le-\frac{1}{1-\gamma} \sum_i\sum_s d_{\theta^*}(s)\Delta_i^{\theta^*}(s)\left(\sum_{a_i\!\neq\! a_i^*(s)}(\theta_{s,a_i} - \theta^*_{s,a_i})\right)+ o(\|\theta-\theta^*\|)\\
    &=-\frac{1}{1-\gamma} \sum_i\sum_s d_{\theta^*}(s)\Delta_i^{\theta^*}(s)\frac{1}{2}\|\theta_{i,s} - \theta_{i,s}^*\|_1+ o(\|\theta-\theta^*\|)\\
    &\le -\frac{\Delta^{\theta^*}}{2}\sum_i\sum_s \|\theta_{i,s} - \theta_{i,s}^*\|_1+ o(\|\theta-\theta^*\|)\\
    &\le -\frac{\Delta^{\theta^*}}{2}\|\theta - \theta^*\|+ o(\|\theta-\theta^*\|).
\end{align*}
Thus for $\|\theta-\theta^*\|$ sufficiently small, 
$$\Phi(\theta) - \Phi(\theta^*) <0 \textup{  holds,}$$ this suggests that strict NEs are strict local maxima. We now show that this also holds vice versa.

Strict local maxima satisfy first-order stationarity by definition, and thus by Theorem \ref{thm:equivalence-stationary-NE} they are also NEs, we only need to show that they are strict. We prove by contradiction, suppose that there exists a local maximum $\theta^*$ such that it is non-strict NE, i.e., there exists $\theta_i'\in \cX_i, \theta_i'\neq\theta_i^*$ such that:
\begin{equation*}
    J_i(\theta_i', \theta_{-i}^*) = J_i(\theta_i^*, \theta_{-i}^*)
\end{equation*}
According to \eqref{eq:first-order-detail} and first-order stationarity of $\theta^*$:
\begin{align*}
    \frac{1}{1-\gamma}\sum_{s} d_{\theta^*}(s) \max_{a_i\in\cA_i} \overline {A_i^{\theta^*}}(s, a_i) = \max_{\overline\theta_i\in\cX_i} (\overline\theta_i - \theta_i^*)^\top \nabla_{\theta_i} J_i(\theta^*)\le 0.
\end{align*}
Since $\max_{a_i\in\cA_i} \overline {A_i^{\theta}}(s, a_i) \ge 0$ for all $\theta$, we may conclude:
$$\max_{a_i\in\cA_i} \overline {A_i^{\theta^*}}(s, a_i) = 0,~~\forall~s\in\cS.$$
We denote $\theta':= (\theta_i', \theta_{-i^*})$, according to Lemma \ref{lemma:averaged-performance-difference-lemma}
\begin{align*}
    0 = J_i(\theta_i', \theta_{-i}^*) - J_i(\theta_i^*, \theta_{-i}^*) = \frac{1}{1-\gamma} \sum_{s,a_i} d_{\theta'}(s)\pi_{\theta'_i}(a_i|s)\overline{A_i^{\theta^*}}(s, a_i) \le 0.
\end{align*}
Since $d_{\theta'}(s)>0,~\forall~s$, this further implies that
$$\sum_{a_i}\pi_{\theta'_i}(a_i|s)\overline{A_i^{\theta^*}}(s, a_i) = 0,~~\forall~s\in\cS,$$
i.e., $\pi_{\theta'_i}(a_i|s)$ is nonzero only if $\overline{A_i^{\theta^*}}(s, a_i) = 0$. Define $\theta_i^\eta:= \eta\theta_i' + (1-\eta)\theta_i^*$, then$$\sum_{a_i}\pi_{\theta^\eta_i}(a_i|s)\overline{A_i^{\theta^*}}(s, a_i) = 0,~~\forall~s\in\cS.$$
Thus let $\theta^\eta:=(\theta^\eta_i,\theta^*_{-i})$
\begin{align*}
    J_i(\theta_i^\eta, \theta_{-i}^*) - J_i(\theta_i^*, \theta_{-i}^*) = \frac{1}{1-\gamma} \sum_{s,a_i} d_{\theta^\eta}(s)\pi_{\theta^\eta_i}(a_i|s)\overline{A_i^{\theta^*}}(s, a_i) =0.
\end{align*}
Since $\|\theta_i^\eta - \theta_i^*\|\rightarrow0$ as $\eta\rightarrow0$, this contradicts the assumption that $\theta^*$ is a strict local maximum. This suggests that all strict  local maxima are strict NEs, which completes the proof.
\end{proof}

\begin{proof}(of the second claim)
First, we define the corresponding value function, $Q$-function and advantage function for potential function $\phi$.
\begin{align*}
    V_{\phi}^\theta(s) &:= \bE \left[\sum_{t=0}^\infty \gamma^t \phi(s_t,a_t)\big|~\pi=\theta,s_0=s\right]\\
    Q_{\phi}^\theta(s,a)&:=\left[\sum_{t=0}^\infty \gamma^t \phi(s_t,a_t)\big|~\pi=\theta,s_0=s,a_0=a\right]\\
    A_{\phi}^\theta(s,a) &:= Q_{\phi}^\theta(s,a) - V_{\phi}^\theta(s).
\end{align*}
For an index set $\cI \subseteq \{1,2,\dots,n\}$ we define the following averaged advantage potential function of index set $\cI$ as:
\begin{equation*}
    \overline{A_{\phi,\cI}^{\theta}}(s,a_\cI):= \sum_{a_{-\cI}} A_{\phi}^\theta(s,a_\cI, a_{-\cI}).
\end{equation*}
We choose an index set $\cI \subseteq \{1,2,\dots,n\}$ such that there exists $s^*, a_\cI^*$ such that:
\begin{equation}\label{eq:s-star-a-star}
    \overline{A_{\phi,\cI}^{\theta^*}}(s^*, a_\cI^*) > 0,
\end{equation}
and that for any other index set $\cI'$ with smaller cardinality:
\begin{equation}
    \overline{A_{\phi,\cI'}^{\theta^*}}(s, a_{\cI'}) \le 0, ~~\forall~s, a_{\cI'}, ~~\forall~|\cI'| <|\cI|.
\end{equation}
Because $\Phi$ is not a constant, this guarantees the existence of such an index set $\cI$. Further, since
\begin{equation*}
    \sum_{a_{\cI'}} \pi_{\theta_{\cI'}^*}(a_{\cI'}|s) \overline{A_{\phi,\cI'}^{\theta^*}}(s, a_{\cI'}) = 0, ~~\forall~s,
\end{equation*}
and that $\theta^*$ is fully-mixed, we have that:
\begin{equation}\label{eq:cI-cI-prime}
    \overline{A_{\phi,\cI'}^{\theta^*}}(s, a_{\cI'}) = 0, ~~\forall~s, a_{\cI'}, ~~\forall~|\cI'| <|\cI|.
\end{equation}
We set $\theta:= (\theta_\cI, \theta_{-\cI}^*)$, where $\theta_\cI$ is a convex combination of $\theta_\cI^*, \theta_\cI' \in \cX$:
$$\theta_\cI = (1-\eta)\theta_\cI^* + \eta\theta_\cI', ~~\eta>0.$$
According to performance difference lemma (Lemma \ref{lemma:averaged-performance-difference-lemma}) we have:
\begin{align*}
   & (1-\gamma)\left(\Phi(\theta_\cI, \theta_{-\cI}^*) - \Phi(\theta_\cI^*, \theta_{-\cI}^*)\right) = \sum_{s,a_\cI}d_\theta(s)\pi_{\theta_\cI}(a_\cI|s) \overline{A_{\phi,\cI}^{\theta^*}}(s,a_\cI)\\
    &=\sum_{s,a_\cI}d_\theta(s)\prod_{i\in\cI}\left((1-\eta)\pi_{\theta_i^*}(a_i|s)+\eta\pi_{\theta_i'}(a_i|s) \right) \overline{A_{\phi,\cI}^{\theta^*}}(s,a_\cI)\\
    &=\sum_{s,a_\cI}d_\theta(s)\left((1\!-\!\eta)\pi_{\theta_{i_0}^*}(a_{i_0}|s)+\eta\pi_{\theta_{i_0}'}(a_{i_0}|s)\right)\!\prod_{i\in\cI\!\backslash\!\{\!i_0\!\}\!}\!\left((1\!-\!\eta)\pi_{\theta_i^*}(a_i|s)+\eta\pi_{\theta_i'}(a_i|s) \right) \overline{A_{\phi,\cI}^{\theta^*}}(s,a_\cI), ~~(\forall~i_0\in\cI)\\
    &= (1-\eta)\sum_{s,a_\cI}d_\theta(s)\prod_{i\in\cI\backslash\{\!i_0\!\}\!}\!\left((1\!-\!\eta)\pi_{\theta_i^*}(a_i|s)+\eta\pi_{\theta_i'}(a_i|s) \right) \overline{A_{\phi,\cI\backslash\{i_0\}}^{\theta^*}}(s,a_{\cI\backslash\{i_0\}})\\
    &\quad+\eta \sum_{s,a_\cI}d_\theta(s)\pi_{\theta_{i_0}'}(a_{i_0}|s)\prod_{i\in\cI\backslash\{i_0\}}\left((1-\eta)\pi_{\theta_i^*}(a_i|s)+\eta\pi_{\theta_i'}(a_i|s) \right) \overline{A_{\phi,\cI}^{\theta^*}}(s,a_\cI).
\end{align*}
According to \eqref{eq:cI-cI-prime}, we know that:
$$\overline{A_{\phi,\cI\backslash\{i_0\}}^{\theta^*}}(s,a_{\cI\backslash\{i_0\}})=0,$$
thus
\begin{align*}
    &(1-\gamma)\left(\Phi(\theta_\cI, \theta_{-\cI}^*) - \Phi(\theta_\cI^*, \theta_{-\cI}^*)\right) =\\&\eta \sum_{s,a_\cI}d_\theta(s)\pi_{\theta_{i_0}'}(a_{i_0}|s)\prod_{i\in\cI\backslash\{i_0\}}\left((1-\eta)\pi_{\theta_i^*}(a_i|s)+\eta\pi_{\theta_i'}(a_i|s) \right) \overline{A_{\phi,\cI}^{\theta^*}}(s,a_\cI).
\end{align*}
Applying similar procedures recursively and using the fact that:
$$\overline{A_{\phi,\cI\backslash\{i\}}^{\theta^*}}(s,a_{\cI\backslash\{i\}})=0,~~ \forall~i\in \cI,$$
we get:
\begin{align*}
  \Phi(\theta_\cI, \theta_{-\cI}^*) - \Phi(\theta_\cI^*, \theta_{-\cI}^*) =  \frac{\eta^{|\cI|}}{1-\gamma} \sum_{s,a_\cI}d_\theta(s) \prod_{i\in\cI}\pi_{\theta_i'}(a_i|s)\overline{A_{\phi,\cI}^{\theta^*}}(s,a_\cI).
\end{align*}
Set $\pi_{\theta_i'}(a_i|s)$ as:
\begin{align*}
    \pi_{\theta_i'}(a_i|s^*) &= \left\{
    \begin{array}{cc}
        1 & a_i=a_i^* \\
        0 & \textup{otherwise}
    \end{array}
    \right.\\
    \pi_{\theta_i'}(a_i|s) &= \pi_{\theta_i^*}(a_i|s), \quad s\neq s^*,
\end{align*}
where $s^*,a_i^*$ are defined in \eqref{eq:s-star-a-star}. Then:
\begin{equation*}
    \Phi(\theta_\cI, \theta_{-\cI}^*) - \Phi(\theta_\cI^*, \theta_{-\cI}^*) =   \frac{\eta^{|\cI|}}{1-\gamma}d_\theta(s^*)\overline{A_{\phi,\cI}^{\theta^*}}(s^*,a_\cI^*)>0,
\end{equation*}
which completes the proof.
\end{proof}
\section{Bounding the gradient estimation error of Algorithm \ref{alg:sample-based learning}}
The accuracy of gradient estimation is essential in the sample-based algorithm \ref{alg:sample-based learning}. In this section, we will give a high probability bound of the estimation error, which is stated in the following theorem:
\begin{theorem}{(Error bound for gradient estimation)}\label{thm:gradient-estimation}
Assume that the stochastic game that satisfies Assumption \ref{assump:sufficient-exploration-on-state}. In Algorithm \ref{alg:sample-based learning}, for
\begin{equation*}
    T_J \ge  \frac{32\tau(1+\alpha)^2|\cS|^3\sum_i|\cA_i|\max_{i}|\cA_i|^2}{(1-\gamma)^6\epsilon_g^2\alpha^2\sigma_S^2}\log\left(\frac{16\tau T_G|\cS|^2\sum_i|\cA_i|}{\delta}\right) + 1,
\end{equation*}
with probability at least $1-\delta$, we have:
\begin{equation*}
    \|\widehat{\nabla}\Phi(\theta^{(k)}) - \nabla\Phi(\theta^{(k)})\|_2\le\epsilon_g, ~~\forall ~0\le k\le T_G-1.
\end{equation*}
\end{theorem}

The proof of the theorem includes bounding the estimation error of $\overline{Q_i^\theta}$ (\ref{apdx:estimation-error-Q}) and $d_\theta$ (\ref{apdx:estimation-error-d}). Let's first introduce the definition of `sufficient exploration' which is going to play an important role in this section.

In the main text Assumption \ref{assump:sufficient-exploration-on-state} we have introduced $(\tau,\sigma_S)$-sufficient exploration on states. In this section we introduce a similar definition $(\tau, \sigma)$-sufficient exploration:
\begin{defi}{($(\tau,\sigma)$-Sufficient Exploration)}\label{defi:sufficient-exploration}
A stochastic game and a policy $\theta$ is said to satisfy $(\tau,\sigma)$-sufficient exploration condition if there exists positive integer $\tau$ and $\sigma\in(0,1)$ such that for policy $\theta$ and any initial state-action pair $(s,a_i),~\forall i$, we have
\begin{equation*}
    \Pr(s_{\tau},a_{i,\tau}|s_0=s,a_0=a) \ge \sigma,~~\forall s_{\tau},a_{i,\tau}
\end{equation*}
\end{defi}
Note that `$(\tau,\sigma)$-sufficient exploration' is a stronger condition compared with `$(\tau,\sigma_S)$-sufficient exploration on states'. Additionally it is not hard to verify that for any stochastic game that satisfies $(\tau,\sigma_S)$-sufficient exploration on states, and any $\theta\in\cX^\alpha$, it will also satisfy $(\tau, \frac{\alpha\sigma_S}{\max_i|\cA_i|})$-sufficient exploration condition.

\subsection{Bounding the estimation error of the averaged-Q function}\label{apdx:estimation-error-Q}
We first state the main theorem in this subsection:
\begin{theorem}(Estimation error of averaged Q-functions)\label{thm:policy-evaluation}
Assume that the stochastic game with policy $\theta$ satisfies $(\sigma,\tau)$-sufficient exploration condition (Definition \ref{defi:sufficient-exploration}), then for a fixed $i$, running Algorithm \ref{alg:sample-based learning} will guarantee that:
\begin{equation*}
    \Pr\left(\|\widehat{Q_i^\theta}-\overline{Q_i^\theta}\|_\infty\ge\epsilon\right) \le 4\tau|\cS|^2|\cA_i|\exp\left(-\frac{(1-\gamma)^4\epsilon^2\sigma^2\lfloor\frac{T}{\tau}\rfloor}{32|\cS|^2}\right),
\end{equation*}
further:
\begin{equation*}
    \Pr\left(\|\widehat{Q_i^\theta}-\overline{Q_i^\theta}\|_\infty\ge\epsilon,~\exists~i\right) \le 8\tau|\cS|^2\left(\sum_{i=1}^n|\cA_i|\right)\exp\left(-\frac{(1-\gamma)^4\epsilon^2\sigma^2\lfloor\frac{T}{\tau}\rfloor}{32|\cS|^2}\right),
\end{equation*}
i.e., when
\begin{equation*}
    T_J\ge \frac{32\tau|\cS|^2}{(1-\gamma)^4\epsilon^2\sigma^2}\log\left(\frac{8\tau|\cS|^2|\sum_i|\cA_i|}{\delta}\right) +\tau
\end{equation*}
with probability at least $1-\delta$, $\|\widehat{Q_i^\theta}-\overline{Q_i^\theta}\|_\infty\le\epsilon,~\forall~i$
\end{theorem}

In the following, we will introduce some lemmas which will play an important role in bounding the estimation error of the averaged-Q function:
\begin{lemma}\label{lemma:bound-P-hat}
Assume that the stochastic game with policy $\theta$ satisfies $(\sigma,\tau)$-sufficient exploration condition (Definition \ref{defi:sufficient-exploration}), then fix $s',s,a_i$,for $\epsilon\le 1$,
\begin{equation*}
    \Pr\left(\left|\widehat{P_i^\theta}(s'|s,a_i)-\overline{P_i^\theta}(s'|s,a_i)\right|\ge\epsilon\right)\le 4\tau \exp\left(-\frac{\epsilon^2\sigma^2\lfloor\frac{T}{\tau}\rfloor}{32}\right)
\end{equation*}
\end{lemma}
\begin{proof}
According to the definition of $\widehat{P_i^\theta}$, we have that
\begin{align*}
    &\left\{\widehat{P_i^\theta}(s'|s,a_i)-\overline{P_i^\theta}(s'|s,a_i)\ge\epsilon \right\}\\& \subseteq \left\{\sum_{t=0}^{T-1}\left(\mathbf{1}\{s_{t+1}=s',s_t=s,a_{i,t}=a_i\} - (\overline{P_i^\theta}(s'|s,a_i)+\epsilon)\mathbf{1}\{s_t=s,a_{i,t}=a_i\}\right)\ge0 \right\}\\&\qquad\mathop{\cup}\left\{\sum_{t=0}^{T-1}\mathbf{1}\{s_t=s,a_{i,t}=a_i\}=0 \right\} \\
    &\subseteq\mathop{\cup}_{m=0}^{\tau-1}\left\{\sum_{k=0}^{\lfloor\frac{T-1-m}{\tau}\rfloor}\!\left(\mathbf{1}\{s_{k\tau+m+1}=s',s_{k\tau+m}=s,a_{i,k\tau+m}=a_i\}\! -\! (\overline{P_i^\theta}(s'|s,a_i)+\epsilon)\mathbf{1}\{s_{k\tau+m}=s,a_{i,k\tau+m}=a_i\}\right)\ge0\!\right\}\\
    &\qquad\mathop{\cup}_{m=0}^{\tau-1}\left\{\sum_{k=0}^{\lfloor\frac{T-1-m}{\tau}\rfloor}\mathbf{1}\{s_{k\tau+m}=s,a_{i,k\tau+m}=a_i\}=0 \right\}
\end{align*}
Let:
\begin{align*}
A_m&:=\left\{\sum_{k=0}^{\lfloor\frac{T-1-m}{\tau}\rfloor}\left(\mathbf{1}\{s_{k\tau+m+1}\!=\!s',s_{k\tau+m}\!=\!s,a_{i,k\tau+m}\!=\!a_i\} - (\overline{P_i^\theta}(s'|s,a_i)+\epsilon)\mathbf{1}\{s_{k\tau+m}\!=\!s,a_{i,k\tau+m}\!=\!a_i\}\right)\ge0\right\}\\
A_m'&:=\left\{\sum_{k=0}^{\lfloor\frac{T-1-m}{\tau}\rfloor}\mathbf{1}\{s_{k\tau+m}=s,a_{i,k\tau+m}=a_i\}=0 \right\}\\
    X_{k,m}&:= \mathbf{1}\{s_{k\tau+m+1}=s',s_{k\tau+m}=s,a_{i,k\tau+m}=a_i\} - (\overline{P_i^\theta}(s'|s,a_i)+\epsilon)\mathbf{1}\{s_{k\tau+m}=s,a_{i,k\tau+m}=a_i\}\\
    X_{k,m}'&:=\mathbf{1}\{s_{k\tau+m}=s,a_{i,k\tau+m}=a_i\}\\
    Y_{k,m}&:=X_{k,m}-\bE[X_{k,m}|\cF_{(k-1)\tau+m}]\\
    Y_{k,m}'&:=X_{k,m}'-\bE[X_{k,m}'|\cF_{(k-1)\tau+m}]
\end{align*}
Then $\{Y_{k,m}\}_{k=0}^{\lfloor\frac{T-1-m}{\tau}\rfloor}$ is a martingale difference sequence. Because $\epsilon\le 1$, it is easy to verify that $|X_{k,m}|\le2, |X_{k,m}'|\le1$. We have that:
$$|Y_{k,m}| \le |X_{k,m}|+\bE[|X_{k,m}||\cF_{(k-1)\tau+m}]\le 4,~~|Y_{k,m}'| \le |X_{k,m}'|+\bE[|X_{k,m}'||\cF_{(k-1)\tau+m}]\le 2.$$
Further,
\begin{equation*}
\bE[X'_{k,m}|\cF_{(k-1)\tau+m}] = \bE[\mathbf{1}\{s_{k\tau+m}=s,a_{i,k\tau+m}=a_i\}|\cF_{(k-1)\tau+m}]\ge \sigma,
\end{equation*}
and that
\begin{align}
    &\bE[X_{k,m}|\cF_{(k-1)\tau+m}]\notag\\
    &= \bE[\mathbf{1}\{s_{k\tau+m+1}=s',s_{k\tau+m}=s,a_{i,k\tau+m}=a_i\} - (\overline{P_i^\theta}(s'|s,a_i)+\epsilon)\mathbf{1}\{s_{k\tau+m}=s,a_{i,k\tau+m}=a_i\}|\cF_{(k-1)\tau+m}] \label{eq:X_k_m_prime_conditional_expect_line_2}\\
    &= -\epsilon\bE[\mathbf{1}\{s_{k\tau+m}=s,a_{i,k\tau+m}=a_i\}|\cF_{(k-1)\tau+m}]\le -\epsilon\sigma. \label{eq:X_k_m_prime_conditional_expect_line_3}
\end{align}
To move from \eqref{eq:X_k_m_prime_conditional_expect_line_2} to \eqref{eq:X_k_m_prime_conditional_expect_line_3}, we used the fact that:
\begin{align*}
    \bE[\mathbf{1}\{s_{t+1}=s',s_{t}=s,a_{i,t+m}=a_i\}|\cF_{t-1}] &= P(s|s_{t-1},a_{t-1})\sum_{a_{-i}}\pi_{\theta}(a_i,a_{-i}|s)P(s'|s,a_i,a_{-i})\\
    &=P(s|s_{t-1},a_{t-1})\pi_{\theta_i}(a_i|s)\sum_{a_{-i}}\pi_{\theta_{-i}}(a_{-i}|s)P(s'|s,a_i,a_{-i})\\
    &=P(s|s_{t-1},a_{t-1})\pi_{\theta_i}(a_i|s)\overline{P_i^\theta}(s'|s,a_i)\\
    &=\bE[\overline{P_i^\theta}(s'|s,a_i)\mathbf{1}\{s_{t}=s,a_{i,t+m}=a_i\}|\cF_{t-1}]
\end{align*}
and the inequality in \eqref{eq:X_k_m_prime_conditional_expect_line_3} is derived directly from Definition \ref{defi:sufficient-exploration}.

According to Azuma-Hoeffding inequality \cite{hoeffding94,azuma67}:
\begin{align*}
    \Pr(A_m) &= \Pr\left(\sum_{k=0}^{\lfloor\frac{T-1-m}{\tau}\rfloor} X_{k,m}\ge0\right)\\
    &=\Pr\left(\sum_{k=0}^{\lfloor\frac{T-1-m}{\tau}\rfloor} Y_{k,m}\ge -\sum_{k=0}^{\lfloor\frac{T-1-m}{\tau}\rfloor}\bE[X_{k,m}|\cF_{(k-1)\tau+m}]\right)\\
    &\le \Pr\left(\sum_{k=0}^{\lfloor\frac{T-1-m}{\tau}\rfloor} Y_{k,m}\ge \left\lfloor\frac{T-1-m+\tau}{\tau}\right\rfloor\epsilon\sigma\right)\\
    &\le \exp\left(-\frac{\epsilon^2\sigma^2\left\lfloor\frac{T}{\tau}\right\rfloor}{32}\right)
\end{align*}
Similarly, from Azuma-Hoeffding inequality,
\begin{align}
    \Pr(A_m') &= \Pr\left(\sum_{k=0}^{\lfloor\frac{T-1-m}{\tau}\rfloor} X_{k,m}'=0\right)\notag\\
    &=\Pr\left(\sum_{k=0}^{\lfloor\frac{T-1-m}{\tau}\rfloor} Y_{k,m}'= -\sum_{k=0}^{\lfloor\frac{T-1-m}{\tau}\rfloor}\bE[X_{k,m}'|\cF_{(k-1)\tau+m}]\right)\notag\\
    &\le \Pr\left(\sum_{k=0}^{\lfloor\frac{T-1-m}{\tau}\rfloor} Y_{k,m}'\le -\left\lfloor\frac{T-1-m+\tau}{\tau}\right\rfloor\sigma\right)\notag\\
    &\le \exp\left(-\frac{\sigma^2\left\lfloor\frac{T}{\tau}\right\rfloor}{8}\right)\label{eq:Am'}
\end{align}
Thus
\begin{align*}
    &\Pr\left(\widehat{P_i^\theta}(s'|s,a_i)-\overline{P_i^\theta}(s'|s,a_i)\ge\epsilon \right) \le \sum_{m=0}^{\tau-1}\Pr\left(A_m\right) +\Pr\left(A_m'\right)\\
    &\le \tau \exp\left(-\frac{\epsilon^2\sigma^2\left\lfloor\frac{T}{\tau}\right\rfloor}{32}\right) + \tau\exp\left(-\frac{\sigma^2\left\lfloor\frac{T}{\tau}\right\rfloor}{8}\right) \le 2\tau \exp\left(-\frac{\epsilon^2\sigma^2\left\lfloor\frac{T}{\tau}\right\rfloor}{32}\right)
\end{align*}
Similarly
\begin{align*}
    \Pr\left(\widehat{P_i^\theta}(s'|s,a_i)-\overline{P_i^\theta}(s'|s,a_i)\le-\epsilon \right) \le  2\tau \exp\left(-\frac{\epsilon^2\sigma^2\left\lfloor\frac{T}{\tau}\right\rfloor}{32}\right)\\
    \Longrightarrow \Pr\left(\left|\widehat{P_i^\theta}(s'|s,a_i)-\overline{P_i^\theta}(s'|s,a_i)\right|\ge\epsilon\right)\le4\tau \exp\left(-\frac{\epsilon^2\sigma^2\left\lfloor\frac{T}{\tau}\right\rfloor}{32}\right)
\end{align*}
which completes the proof.
\end{proof}

\begin{lemma}\label{lemma:bound-r-hat}
Assume that the stochastic game with policy $\theta$ satisfies $(\sigma,\tau)$-sufficient exploration condition (Definition \ref{defi:sufficient-exploration}), then fix $s',s,a_i$, for $\epsilon\le 1$,
\begin{equation*}
    \Pr\left(\left|\widehat{r_i^\theta}(s,a_i)-\overline{r_i^\theta}(s,a_i)\right|\ge\epsilon \right) \le 4\tau \exp\left(-\frac{\epsilon^2\sigma^2\lfloor\frac{T}{\tau}\rfloor}{32}\right)
\end{equation*}
\end{lemma}
\begin{proof}
The proof is similar to Lemma \ref{lemma:bound-P-hat}.
\begin{align*}
    &\left\{\widehat{r_i^\theta}(s,a_i)-\overline{r_i^\theta}(s,a_i)\ge\epsilon \right\}\\
    &\subseteq \left\{\sum_{t=0}^T\mathbf{1}\{s_t=s,a_{i,t}=a_i\}r_i(s_t,a_t) - (\overline{r_i^\theta}(s,a_i)+\epsilon)\mathbf{1}\{s_t=s,a_{i,t}=a_i\}\ge0\right\}\\
    &\qquad\mathop{\cup}\left\{\sum_{t=0}^{T-1}\mathbf{1}\{s_t=s,a_{i,t}=a_i\}=0 \right\}\\
    &\subseteq \mathop{\cup}_{m=0}^{\tau-1}\left\{\sum_{k=0}^{\lfloor\frac{T-m}{\tau}\rfloor}\mathbf{1}\{s_{k\tau+m}=s,a_{i,k\tau+m}=a_i\}r_i(s_{k\tau+m},a_{k\tau+m}) - (\overline{r_i^\theta}(s,a_i)+\epsilon)\mathbf{1}\{s_{k\tau+m}=s,a_{i,k\tau+m}=a_i\}\ge0\right\}\\
     &\qquad\mathop{\cup}_{m=0}^{\tau-1}\left\{\sum_{k=0}^{\lfloor\frac{T-1-m}{\tau}\rfloor}\mathbf{1}\{s_{k\tau+m}=s,a_{i,k\tau+m}=a_i\}=0 \right\}
\end{align*}
Let:
\begin{align*}
    A_m&:=\left\{\sum_{k=0}^{\lfloor\frac{T-m}{\tau}\rfloor}\mathbf{1}\{s_{k\tau+m}=s,a_{i,k\tau+m}=a_i\}r_i(s_{k\tau+m},a_{k\tau+m}) - (\overline{r_i^\theta}(s,a_i)+\epsilon)\mathbf{1}\{s_{k\tau+m}=s,a_{i,k\tau+m}=a_i\}\ge0\right\}\\
    A_m'&:=\left\{\sum_{k=0}^{\lfloor\frac{T-1-m}{\tau}\rfloor}\mathbf{1}\{s_{k\tau+m}=s,a_{i,k\tau+m}=a_i\}=0 \right\}\\
    X_{k,m}&:=\mathbf{1}\{s_{k\tau+m}=s,a_{i,k\tau+m}=a_i\}r_i(s_{k\tau+m},a_{k\tau+m}) - (\overline{r_i^\theta}(s,a_i)+\epsilon)\mathbf{1}\{s_{k\tau+m}=s,a_{i,k\tau+m}=a_i\}\\
    Y_{k,m}&:= X_{k,m} - \bE[X_{k,m}|\cF_{(k-1)\tau + m}]
\end{align*}
Then $\{Y_{k,m}\}_{k=0}^{\lfloor\frac{T-1-m}{\tau}\rfloor}$ is a martingale difference sequence. Because $\epsilon\le 1$, it is easy to verify that $|X_{k,m}|\le2$. We have that:
$$|Y_{k,m}| \le |X_{k,m}|+\bE[|X_{k,m}||\cF_{(k-1)\tau+m}]\le 4.$$

Further,
\begin{align*}
    &\bE[X_{k,m}|\cF_{(k-1)\tau+m}]\\
    &= \bE[\mathbf{1}\{s_{k\tau+m}=s,a_{i,k\tau+m}=a_i\}r_i(s_{k\tau+m},a_{k\tau+m}) - (\overline{r_i^\theta}(s,a_i)+\epsilon)\mathbf{1}\{s_{k\tau+m}=s,a_{i,k\tau+m}=a_i\}|\cF_{(k-1)\tau+m}]\\
    &= -\epsilon\bE[\mathbf{1}\{s_{k\tau+m}=s,a_{i,k\tau+m}=a_i\}|\cF_{(k-1)\tau+m}]\le -\epsilon\sigma
\end{align*}
the second line to the third line of the equation is derived by the fact that:
\begin{align*}
    \bE[\mathbf{1}\{s_{t}=s,a_{i,t+m}=a_i\}r_i(s_t,a_t)|\cF_{t-1}] &= P(s|s_{t-1},a_{t-1})\sum_{a_{-i}}\pi_\theta(a_i,a_{-i}|s)r_i(s,a_i,a_{-i})\\
    &=P(s|s_{t-1},a_{t-1})\pi_{\theta_i}(a_i|s)\overline{r_i^\theta}(s,a_i)\\
    &= \bE[\overline{r_i^\theta}(s,a_i)\mathbf{1}\{s_t=s,a_{i,t}=a_t\}|\cF_{t-1}]
\end{align*}
and the inequality in the third line is derived directly from Definition \ref{defi:sufficient-exploration}.

According to Azuma-Hoeffding inequality:
\begin{align*}
    \Pr(A_m) &= \Pr\left(\sum_{k=0}^{\lfloor\frac{T-m}{\tau}\rfloor} X_{k,m}\ge0\right)\\
    &=\Pr\left(\sum_{k=0}^{\lfloor\frac{T-m}{\tau}\rfloor} Y_{k,m}\ge -\sum_{k=0}^{\lfloor\frac{T-m}{\tau}\rfloor}\bE[X_{k,m}|\cF_{(k-1)\tau+m}]\right)\\
    &\le \Pr\left(\sum_{k=0}^{\lfloor\frac{T-m}{\tau}\rfloor} Y_{k,m}\ge \left\lfloor\frac{T-m+\tau}{\tau}\right\rfloor\epsilon\sigma\right)\\
    &\le \exp\left(-\frac{\epsilon^2\sigma^2\left\lfloor\frac{T}{\tau}\right\rfloor}{32}\right)
\end{align*}
Same as \eqref{eq:Am'}, we have that:
\begin{equation*}
    \Pr(A_m') \le \exp\left(-\frac{\sigma^2\left\lfloor\frac{T}{\tau}\right\rfloor}{8}\right)
\end{equation*}
Thus
\begin{align*}
    \Pr\left(\widehat{r_i^\theta}(s,a_i)-\overline{r_i^\theta}(s,a_i)\ge\epsilon  \right) \le \sum_{m=0}^{\tau-1}\Pr\left(A_m\right) + \Pr\left(A_m'\right)\le 2\tau \exp\left(-\frac{\epsilon^2\sigma^2\left\lfloor\frac{T}{\tau}\right\rfloor}{32}\right)
\end{align*}
Similarly
\begin{align*}
    \Pr\left(\widehat{r_i^\theta}(s,a_i)-\overline{r_i^\theta}(s,a_i)\le-\epsilon \right) \le  2\tau \exp\left(-\frac{\epsilon^2\sigma^2\left\lfloor\frac{T}{\tau}\right\rfloor}{32}\right)\\
    \Longrightarrow \Pr\left(\left|\widehat{r_i^\theta}(s,a_i)-\overline{r_i^\theta}(s,a_i)\right|\ge\epsilon \right)\le4\tau \exp\left(-\frac{\epsilon^2\sigma^2\left\lfloor\frac{T}{\tau}\right\rfloor}{32}\right)
\end{align*}
which completes the proof.
\end{proof}
Lemma \ref{lemma:bound-P-hat} and \ref{lemma:bound-r-hat} lead to the following corollary:
\begin{coro}\label{coro:bound-r-M-hat}
\begin{align}
    \Pr(\|\widehat{M^\theta_i} - \overline{M^\theta_i}\|_\infty \ge\epsilon) &\le 4\tau|\cS|^2|\cA_i|\exp\left(-\frac{\epsilon^2\sigma^2\lfloor\frac{T}{\tau}\rfloor}{32|\cS|^2}\right)\label{eq:bound-M-hat}\\
    \Pr(\|\widehat{r^\theta_i} - \overline{r^\theta_i}\|_\infty \ge\epsilon) &\le 4\tau|\cS||\cA_i|\exp\left(-\frac{\epsilon^2\sigma^2\lfloor\frac{T}{\tau}\rfloor}{32}\right)\label{eq:bound-r-hat}
\end{align}
\end{coro}
\begin{proof}
We first prove \eqref{eq:bound-M-hat}
\begin{align*}
    \|\widehat{M^\theta_i} - \overline{M^\theta_i}\|_\infty &=\max_{(s,a_i)}\sum_{(s',a_i')}\pi_{\theta_i}(a_i'|s')\left|\widehat{P_i^\theta}(s'|s,a_i)-\overline{P_i^\theta}(s'|s,a_i)\right|\\
    &=\max_{(s,a_i)}\sum_{s'}\left|\widehat{P_i^\theta}(s'|s,a_i)-\overline{P_i^\theta}(s'|s,a_i)\right|
\end{align*}
Thus,
\begin{align*}
    \left\{\|\widehat{M^\theta_i} - \overline{M^\theta_i}\|_\infty \ge\epsilon\right\} &= \mathop{\cup}_{(s,a_i)}\left\{\sum_{s'}\left|\widehat{P_i^\theta}(s'|s,a_i)-\overline{P_i^\theta}(s'|s,a_i)\right| \ge\epsilon\right\}\\
    &\subseteq \mathop{\cup}_{(s,a_i)}\mathop{\cup}_{s'}\left\{\left|\widehat{P_i^\theta}(s'|s,a_i)-\overline{P_i^\theta}(s'|s,a_i)\right| \ge\frac{\epsilon}{|\cS|}\right\}
\end{align*}
Then according to Lemma \ref{lemma:bound-P-hat},
\begin{align*}
    \Pr\left(\|\widehat{M^\theta_i} - \overline{M^\theta_i}\|_\infty \ge\epsilon\right) &\le \sum_{(s',s,a_i)}\Pr\left(\left|\widehat{P_i^\theta}(s'|s,a_i)-\overline{P_i^\theta}(s'|s,a_i)\right| \ge\frac{\epsilon}{|\cS|}\right)\\
    &\le 4\tau|\cS|^2|\cA_i|\exp\left(-\frac{\epsilon^2\sigma^2\lfloor\frac{T}{\tau}\rfloor}{32|\cS|^2}\right).
\end{align*}
Now we prove \eqref{eq:bound-r-hat}. Since
\begin{align*}
    \left\{\|\widehat{r^\theta_i} - \overline{r^\theta_i}\|_\infty \ge\epsilon\right\} = \mathop{\cup}_{(s,a_i)}\left\{\left|\widehat{r_i^\theta}(s,a_i)-\overline{r_i^\theta}(s,a_i)\right|\ge\epsilon \right\},
\end{align*}
according to Lemma \ref{lemma:bound-r-hat},
\begin{align*}
    \Pr\left(\|\widehat{r^\theta_i} - \overline{r^\theta_i}\|_\infty \ge\epsilon\right) &\le \sum_{(s,a_i)}\Pr\left(\left|\widehat{r_i^\theta}(s,a_i)-\overline{r_i^\theta}(s,a_i)\right|\ge\epsilon \right)\\
    &\le 4\tau|\cS||\cA_i|\exp\left(-\frac{\epsilon^2\sigma^2\left\lfloor\frac{T}{\tau}\right\rfloor}{32}\right),
\end{align*}
which completes the proof of the corollary.
\end{proof}
We are now ready to prove Theorem \ref{thm:policy-evaluation}.

\begin{proof}(of Theorem \ref{thm:policy-evaluation})
From the definition of $\widehat{Q_i^\theta}, \overline{Q_i^\theta}$, 
\begin{align*}
    \overline{Q_i^\theta} &= (I-\gamma \overline{M^\theta_i})^{-1}\overline{r_i^\theta},\\
    \widehat{Q_i^\theta} &= (I-\gamma \widehat{M^\theta_i})^{-1}\widehat{r_i^\theta},
\end{align*}
we have that
\begin{align*}
    \|\widehat{Q_i^\theta}-\overline{Q_i^\theta}\|_\infty&=\left\|(I-\gamma \widehat{M^\theta_i})^{-1}\widehat{r_i^\theta} - (I-\gamma \overline{M^\theta_i})^{-1}\overline{r_i^\theta}\right\|_\infty\\
    &=\left\|(I-\gamma \overline{M^\theta_i})^{-1}(\widehat{r_i^\theta}-\overline{r_i^\theta}) + \left((I-\gamma \widehat{M^\theta_i})^{-1}-(I-\gamma\overline{M^\theta_i})^{-1}\right)\widehat{r_i^\theta}\right\|_\infty\\
    &\le \left\|(I-\gamma \overline{M^\theta_i})^{-1}(\widehat{r_i^\theta}-\overline{r_i^\theta})\right\|_\infty + \left\| \gamma(I-\gamma \overline{M^\theta_i})^{-1}(\widehat{M^\theta_i} - \overline{M^\theta_i})(I-\gamma \widehat{M^\theta_i})^{-1}\widehat{r_i^\theta}\right\|_\infty.
\end{align*}
Because both $\overline{M^\theta_i}$ and $\widehat{M^\theta_i}$ are transition probability matrices, thus:
\begin{align*}
    \|\overline{M^\theta_i}x\|_\infty &\le \|x\|_\infty\\
    \|\widehat{M^\theta_i}x\|_\infty &\le \|x\|_\infty\\
    \|(I-\gamma \overline{M^\theta_i})^{-1}x\|_\infty &\le \frac{1}{1-\gamma}\|x\|_\infty\\
     \|(I-\gamma \widehat{M^\theta_i})^{-1}x\|_\infty &\le \frac{1}{1-\gamma}\|x\|_\infty
\end{align*}
Thus,
\begin{align*}
    \|\widehat{Q_i^\theta}-\overline{Q_i^\theta}\|_\infty
    &\le \left\|(I-\gamma \overline{M^\theta_i})^{-1}(\widehat{r_i^\theta}-\overline{r_i^\theta})\right\|_\infty + \left\| \gamma (I-\gamma \overline{M^\theta_i})^{-1}(\widehat{M^\theta_i} - \overline{M^\theta_i})(I-\gamma \widehat{M^\theta_i})^{-1}\widehat{r_i^\theta}\right\|_\infty\\
    &\le \frac{1}{1-\gamma}\|\widehat{r_i^\theta}-\overline{r_i^\theta}\|_\infty + \frac{\gamma}{(1-\gamma)^2}\|\widehat{M^\theta_i} - \overline{M^\theta_i}\|_\infty\|\widehat{r_i^\theta}\|_\infty\\
    &\le \frac{1}{1-\gamma}\|\widehat{r_i^\theta}-\overline{r_i^\theta}\|_\infty + \frac{\gamma}{(1-\gamma)^2}\|\widehat{M^\theta_i} - \overline{M^\theta_i}\|_\infty
\end{align*}
Thus if
\begin{align*}
    \|\widehat{r_i^\theta}-\overline{r_i^\theta}\|_\infty \le (1-\gamma)^2\epsilon,\quad \|\widehat{M^\theta_i} - \overline{M^\theta_i}\|_\infty \le (1-\gamma)^2\epsilon,
\end{align*}
we have that:
\begin{equation*}
    \|\widehat{Q_i^\theta}-\overline{Q_i^\theta}\|_\infty \le \epsilon,
\end{equation*}
Thus from Corollary \ref{coro:bound-r-M-hat},
\begin{align*}
    \Pr\left(\|\widehat{Q_i^\theta}-\overline{Q_i^\theta}\|_\infty\ge\epsilon\right) &\le \Pr\left( \|\widehat{r_i^\theta}-\overline{r_i^\theta}\|_\infty \ge (1-\gamma)^2\epsilon\right) + \Pr\left(  \|\widehat{M^\theta_i} - \overline{M^\theta_i}\|_\infty \ge (1-\gamma)^2\epsilon\right)\\
    &\le 4\tau|\cS||\cA_i|\exp\left(-\frac{(1-\gamma)^4\epsilon^2\sigma^2\lfloor\frac{T}{\tau}\rfloor}{32}\right) + 4\tau|\cS|^2|\cA_i|\exp\left(-\frac{(1-\gamma)^4\epsilon^2\sigma^2\lfloor\frac{T}{\tau}\rfloor}{32|\cS|^2}\right)\\
    &\le 8\tau|\cS|^2|\cA_i|\exp\left(-\frac{(1-\gamma)^4\epsilon^2\sigma^2\lfloor\frac{T}{\tau}\rfloor}{32|\cS|^2}\right),
\end{align*}
which completes the proof.
\end{proof}

\subsection{Bounding the estimation error of $d_\theta$}\label{apdx:estimation-error-d}
We first state our main result:
\begin{theorem}(Estimation error of $d_\theta$)\label{thm:estimation-d-theta}
Under Assumption \ref{assump:sufficient-exploration-on-state},
\begin{equation*}
    \Pr\left(\|\widehat{d_\theta}-d_\theta\|_1\ge\epsilon\right)
    \le 4\tau|\cS|^2 \exp\left(-\frac{(1-\gamma)^2\epsilon^2\sigma_S^2\left\lfloor\frac{T}{\tau}\right\rfloor}{32\gamma^2|\cS|^2}\right),
\end{equation*}
i.e., when
\begin{equation*}
T\ge \frac{32\tau|\cS|^2}{(1-\gamma)^2\epsilon^2\sigma_S^2}\log\left(\frac{4\tau|\cS|^2|}{\delta}\right) +1,
\end{equation*}
with probability at least $1-\delta$, $\|\widehat{d_\theta}-d_\theta\|_1\le\epsilon$.
\end{theorem}
Similar to the previous section, the proof of the theorem begins by bounding the estimation error $|\widehat{P_\cS^\theta}(s'|s)-\overline{P_\cS^\theta}(s'|s)|$.
\begin{lemma}\label{lemma:bound-P-S-hat}
Under Assumption \ref{assump:sufficient-exploration-on-state}, fix $s',s,a_i$,for $\epsilon\le 1$,
\begin{equation*}
    \Pr\left(\left|\widehat{P_\cS^\theta}(s'|s)-\overline{P_\cS^\theta}(s'|s)\right|\ge\epsilon\right)\le 4\tau \exp\left(-\frac{\epsilon^2\sigma_S^2\left\lfloor\frac{T}{\tau}\right\rfloor}{32}\right)
\end{equation*}
\end{lemma}
\begin{proof}
According to the definition of $\widehat{P_\cS^\theta}$, we have that
\begin{align*}
    &\left\{\widehat{P_\cS^\theta}(s'|s)-\overline{P_\cS^\theta}(s'|s)\ge\epsilon \right\}\\& \subseteq \left\{\sum_{t=0}^{T-1}\left(\mathbf{1}\{s_{t+1}=s',s_t=s\} - (\overline{P_\cS^\theta}(s'|s)+\epsilon)\mathbf{1}\{s_t=s\}\right)\ge0 \right\}\cup\left\{\sum_{t=0}^{T-1}\mathbf{1}\{s_t=s\}=0 \right\}\\
    &\subseteq\mathop{\cup}_{m=0}^{\tau-1}\left\{\sum_{k=0}^{\lfloor\frac{T-1-m}{\tau}\rfloor}\left(\mathbf{1}\{s_{k\tau+m+1}=s',s_{k\tau+m}=s\} - (\overline{P_\cS^\theta}(s'|s)+\epsilon)\mathbf{1}\{s_{k\tau+m}=s\}\right)\ge0\right\}\\
    &\qquad\mathop{\cup}_{m=0}^{\tau-1}\left\{\sum_{k=0}^{\lfloor\frac{T-1-m}{\tau}\rfloor}\mathbf{1}\{s_{k\tau+m}=s\}=0\right\}
\end{align*}
Let:
\begin{align*}
A_m&:=\left\{\sum_{k=0}^{\lfloor\frac{T-1-m}{\tau}\rfloor}\left(\mathbf{1}\{s_{k\tau+m+1}=s',s_{k\tau+m}=s\} - (\overline{P_\cS^\theta}(s'|s)+\epsilon)\mathbf{1}\{s_{k\tau+m}=s\}\right)\ge0\right\}\\
A_m'&:=\left\{\sum_{k=0}^{\lfloor\frac{T-1-m}{\tau}\rfloor}\mathbf{1}\{s_{k\tau+m}=s\}=0\right\}\\
    X_{k,m}&:= \mathbf{1}\{s_{k\tau+m+1}=s',s_{k\tau+m}=s\} - (\overline{P_\cS^\theta}(s'|s)+\epsilon)\mathbf{1}\{s_{k\tau+m}=s\}\\
    X_{k,m}'&:= \mathbf{1}\{s_{k\tau+m}=s\}\\
    Y_{k,m}&:=X_{k,m}-\bE[X_{k,m}|\cF_{(k-1)\tau+m}]\\
    Y_{k,m}'&:=X_{k,m}'-\bE[X_{k,m}'|\cF_{(k-1)\tau+m}]
\end{align*}
Then $\{Y_{k,m}\}_{k=0}^{\lfloor\frac{T-1-m}{\tau}\rfloor}$ is a martingale difference sequence. Because $\epsilon\le 1$, it is easy to verify that $|X_{k,m}|\le2, |X_{k,m}|\le1$. We have that:
$$|Y_{k,m}| \le |X_{k,m}|+\bE[|X_{k,m}||\cF_{(k-1)\tau+m}]\le 4,~~|Y_{k,m}'| \le |X_{k,m}'|+\bE[|X_{k,m}'||\cF_{(k-1)\tau+m}]\le 2.$$

Further,
\begin{equation*}
\bE[X'_{k,m}|\cF_{(k-1)\tau+m}] = \bE[\mathbf{1}\{s_{k\tau+m}=s\}|\cF_{(k-1)\tau+m}]\ge \sigma_S,
\end{equation*}
and that
\begin{align*}
    &\bE[X_{k,m}|\cF_{(k-1)\tau+m}]\\
    &= \bE[\mathbf{1}\{s_{k\tau+m+1}=s',s_{k\tau+m}=s\} - (\overline{P_\cS^\theta}(s'|s)+\epsilon)\mathbf{1}\{s_{k\tau+m}=s\}|\cF_{(k-1)\tau+m}]\\
    &= -\epsilon\bE[\mathbf{1}\{s_{k\tau+m}=s\}|\cF_{(k-1)\tau+m}]\le -\epsilon\sigma_S
\end{align*}
the second line to the third line of the equation is derived by the fact that:
\begin{align*}
    \bE[\mathbf{1}\{s_{t+1}=s',s_{t}=s\}|\cF_{t-1}] &= P(s|s_{t-1},a_{t-1})\sum_{a}\pi_{\theta}(a|s)P(s'|s,a)\\
    &=P(s|s_{t-1},a_{t-1})\overline{P_\cS^\theta}(s'|s)\\
    &=\bE[\overline{P_\cS^\theta}(s'|s)\mathbf{1}\{s_{t}=s\}|\cF_{t-1}]
\end{align*}
and the inequality in the third line is derived directly from Assumption \ref{assump:sufficient-exploration-on-state}.

According to Azuma-Hoeffding inequality:
\begin{align*}
    \Pr(A_m) &= \Pr\left(\sum_{k=0}^{\lfloor\frac{T-1-m}{\tau}\rfloor} X_{k,m}\ge0\right)\\
    &=\Pr\left(\sum_{k=0}^{\lfloor\frac{T-1-m}{\tau}\rfloor} Y_{k,m}\ge -\sum_{k=0}^{\lfloor\frac{T-1-m}{\tau}\rfloor}\bE[X_{k,m}|\cF_{(k-1)\tau+m}]\right)\\
    &\le \Pr\left(\sum_{k=0}^{\lfloor\frac{T-1-m}{\tau}\rfloor} Y_{k,m}\ge \left\lfloor\frac{T-1-m+\tau}{\tau}\right\rfloor\epsilon\sigma_S\right)\\
    &\le \exp\left(-\frac{\epsilon^2\sigma_S^2\left\lfloor\frac{T}{\tau}\right\rfloor}{32}\right)
\end{align*}
Similarly, from Azuma-Hoeffding inequality,
\begin{align*}
    \Pr(A_m') &= \Pr\left(\sum_{k=0}^{\lfloor\frac{T-1-m}{\tau}\rfloor} X_{k,m}'=0\right)\\
    &=\Pr\left(\sum_{k=0}^{\lfloor\frac{T-1-m}{\tau}\rfloor} Y_{k,m}'= -\sum_{k=0}^{\lfloor\frac{T-1-m}{\tau}\rfloor}\bE[X_{k,m}'|\cF_{(k-1)\tau+m}]\right)\\
    &\le \Pr\left(\sum_{k=0}^{\lfloor\frac{T-1-m}{\tau}\rfloor} Y_{k,m}'\le -\left\lfloor\frac{T-1-m+\tau}{\tau}\right\rfloor\sigma_S\right)\\
    &\le \exp\left(-\frac{\sigma_S^2\left\lfloor\frac{T}{\tau}\right\rfloor}{8}\right)
\end{align*}
Thus
\begin{align*}
    &\Pr\left(\widehat{P_\cS^\theta}(s'|s)-\overline{P_\cS^\theta}(s'|s)\ge\epsilon \right) \le \sum_{m=0}^{\tau-1}\Pr\left(A_m\right)+\Pr\left(A_m'\right)\\
    &\le \tau \exp\left(-\frac{\epsilon^2\sigma_S^2\left\lfloor\frac{T}{\tau}\right\rfloor}{32}\right) + \tau \exp\left(-\frac{\sigma_S^2\left\lfloor\frac{T}{\tau}\right\rfloor}{8}\right)\le2\tau \exp\left(-\frac{\epsilon^2\sigma_S^2\left\lfloor\frac{T}{\tau}\right\rfloor}{32}\right) 
\end{align*}
Similarly
\begin{align*}
    \Pr\left(\widehat{P_\cS^\theta}(s'|s)-\overline{P_\cS^\theta}(s'|s)\le-\epsilon \right) \le  2\tau \exp\left(-\frac{\epsilon^2\sigma_S^2\left\lfloor\frac{T}{\tau}\right\rfloor}{32}\right)\\
    \Longrightarrow \Pr\left(\left|\widehat{P_\cS^\theta}(s'|s)-\overline{P_\cS^\theta}(s'|s)\right|\ge\epsilon\right)\le4\tau \exp\left(-\frac{\epsilon^2\sigma_S^2\left\lfloor\frac{T}{\tau}\right\rfloor}{32}\right)
\end{align*}
which completes the proof.
\end{proof}

\begin{coro}\label{coro:bound-P-S-hat}
\begin{align}
    \Pr\left(\left\|\widehat{P_\cS^\theta} - \overline{P_\cS^\theta}\right\|_\infty \ge\epsilon\right) &\le 4\tau|\cS|^2\exp\left(-\frac{\epsilon^2\sigma_S^2\lfloor\frac{T}{\tau}\rfloor}{32|\cS|^2}\right)\label{eq:bound-P-S-hat}
\end{align}
\end{coro}
\begin{proof}
\begin{align*}
    \left\|\widehat{P_\cS^\theta} - \overline{P_\cS^\theta}\right\|_\infty &=\max_{s}\sum_{s'}\left|\widehat{P_\cS^\theta}(s'|s) - \overline{P_\cS^\theta}(s'|s)\right|
\end{align*}
Thus,
\begin{align*}
    \left\{\left\|\widehat{P_\cS^\theta} - \overline{P_\cS^\theta}\right\|_\infty \ge\epsilon\right\} &= \mathop{\cup}_{s}\left\{\sum_{s'}\left|\widehat{P_\cS^\theta}(s'|s) - \overline{P_\cS^\theta}(s'|s)\right| \ge\epsilon\right\}\\
    &\subseteq \mathop{\cup}_{s}\mathop{\cup}_{s'}\left\{\left|\widehat{P_\cS^\theta}(s'|s) - \overline{P_\cS^\theta}(s'|s)\right| \ge\frac{\epsilon}{|\cS|}\right\}
\end{align*}
Then according to Lemma \ref{lemma:bound-P-S-hat},
\begin{align*}
    \Pr\left(\|\left\|\widehat{P_\cS^\theta} - \overline{P_\cS^\theta}\right\|_\infty \ge\epsilon\right) &\le \sum_{(s',s)}\Pr\left(\left|\widehat{P_\cS^\theta}(s'|s) - \overline{P_\cS^\theta}(s'|s)\right| \ge\frac{\epsilon}{|\cS|}\right)\\
    &\le 4\tau|\cS|^2\exp\left(-\frac{\epsilon^2\sigma_S^2\lfloor\frac{T}{\tau}\rfloor}{32|\cS|^2}\right).
\end{align*}
which completes the proof of the corollary.
\end{proof}

\begin{proof}(Proof of Theorem \ref{thm:estimation-d-theta})
\begin{align*}
    \|\widehat{d_\theta}-d_\theta\|_1 &= (1-\gamma)\|\left(\left(I-\gamma\widehat{P_\cS^\theta}^\top\right)^{-1}-\left(I-\gamma\overline{P_\cS^\theta}^\top\right)^{-1}\right)\rho\|_1\\
    &\le (1-\gamma)\left\|\left(I-\gamma\widehat{P_\cS^\theta}^\top\right)^{-1}-\left(I-\gamma\overline{P_\cS^\theta}^\top\right)^{-1}\right\|_1\|\rho\|_1 \\
    &\le (1-\gamma)\left\|\left(I-\gamma\widehat{P_\cS^\theta}\right)^{-1}-\left(I-\gamma\overline{P_\cS^\theta}\right)^{-1}\right\|_\infty\|\rho\|_1 \\
    &=\gamma(1-\gamma)\left\|\left(I-\gamma\overline{P_\cS^\theta}\right)^{-1}\left(\widehat{P_\cS^\theta}-\overline{P_\cS^\theta}\right)\left(I-\gamma\widehat{P_\cS^\theta}\right)^{-1}\right\|_\infty\\
    &\le\frac{\gamma}{1-\gamma}\left\|\widehat{P_\cS^\theta}-\overline{P_\cS^\theta}\right\|_\infty
\end{align*}
Thus
\begin{align*}
    \Pr\left(\|\widehat{d_\theta}-d_\theta\|_1\ge\epsilon\right)\le\Pr\left(\left\|\widehat{P_\cS^\theta}-\overline{P_\cS^\theta}\right\|_\infty\ge\frac{1-\gamma}{\gamma}\epsilon\right)\\
    \le 4\tau |\cS|^2 \exp\left(-\frac{(1-\gamma)^2\epsilon^2\sigma_S^2\left\lfloor\frac{T}{\tau}\right\rfloor}{32\gamma^2|\cS|^2}\right)
\end{align*}
\end{proof}
\subsection{Proof of Theorem \ref{thm:gradient-estimation}}
\begin{proof}
Since the stochastic game satisfies $(\tau,\sigma_S)$-sufficient exploration on states, then for any $\theta\in\cX^\alpha$, we know that it satisfies $(\tau, \frac{\alpha\sigma_S}{\max_i|\cA_i|})$-sufficient exploration. Substitute this into Theorem \ref{thm:policy-evaluation}, we have that for 
\begin{equation}
    T_J \ge \frac{32\tau(1+\alpha)^2|\cS|^3\sum_i|\cA_i|\max_{i}|\cA_i|^2}{(1-\gamma)^6\epsilon_g^2\alpha^2\sigma_S^2}\log\left(\frac{16\tau T_G|\cS|^2\sum_i|\cA_i|}{\delta}\right) + 1,
\end{equation}
with probability at least $1-\frac{\delta}{2T_G}$,
\begin{equation*}
    \|\overline{Q^{\theta^{(k)}}} - \widehat{Q^{\theta^{(k)}}}\|_\infty\le\frac{(1-\gamma)\epsilon_g}{(1+\alpha)\sqrt{|\cS|\sum_i|\cA_i|}}.
\end{equation*}
Similarly, applying Theorem \ref{thm:estimation-d-theta}, we have that with probability at least $1-\frac{\delta}{2T_G}$,
\begin{equation*}
    \|d_{\theta^{(k)}} - \widehat{d_{\theta^{(k)}}}\|_1\le\frac{(1-\gamma)^2\epsilon_g\alpha}{(1+\alpha)\sqrt{|\cS|\sum_i|\cA_i|}}.
\end{equation*}
Since:
\begin{align*}
   \left| \left[\nabla\Phi(\theta)-\widehat{\nabla}\Phi(\theta) \right]_{(s,a_i)}\right| &= \left|\frac{1}{1-\gamma}d_\theta(s)\overline{Q_i^\theta}(s,a_i) -\frac{1}{1-\gamma}\widehat{d_\theta}(s)\widehat{Q_i^\theta}(s,a_i)\right|\\
    &=\left|\frac{1}{1-\gamma}d_\theta(s)(\overline{Q_i^\theta}(s,a_i)-\widehat{Q_i^\theta}(s,a_i))\right| + \left|\frac{1}{1-\gamma}\widehat{Q_i^\theta}(s,a_i)(d_\theta(s)-\widehat{d_\theta}(s))\right|\\
    &\le\frac{1}{1-\gamma}|\overline{Q_i^\theta}(s,a_i)-\widehat{Q_i^\theta}(s,a_i)| + \frac{1}{(1-\gamma)^2}|d_\theta(s)-\widehat{d_\theta}(s)|\\
    &\le \frac{\epsilon_g}{(1+\alpha)\sqrt{|\cS|\sum_j|\cA_j|}} + \frac{\epsilon_g\alpha}{(1+\alpha)\sqrt{|\cS|\sum_j|\cA_j|}} \\
    & = \frac{\epsilon_g}{\sqrt{|\cS|\sum_j|\cA_j|}}
\end{align*}
Thus, with probability $1-\delta$
\begin{align*}
    \|\nabla\Phi(\theta^{(k)}-\widehat{\nabla}\Phi(\theta^{(k)})\|_2^2 &= \sum_i \sum_s\sum_{a_i} \left| \left[\nabla\Phi(\theta)-\widehat{\nabla}\Phi(\theta) \right]_{(s,a_i)}\right|^2 \le \epsilon_g^2, ~~\forall~1\le k\le T_G
\end{align*}
\end{proof}

\section{Proof of Theorem \ref{thm:main}}\label{apdx:proof-main-sample-based}
\paragraph{Notations:} We define the following variables that will be useful in the analysis:
$$\widehat{G}^\eta(\theta):=\frac{1}{\eta}\left( Proj_{\cX}(\theta+\eta\widehat{\nabla}\Phi(\theta))-\theta\right)$$
$$\widehat{G}^{\eta,\alpha}(\theta):=\frac{1}{\eta}\left( Proj_{\cX^{\alpha}}(\theta+\eta\widehat{\nabla}\Phi(\theta))-\theta\right).$$
\subsection{Optimization Lemmas}
\begin{lemma}{(Sufficient ascent)}\label{lemma:sufficient-ascent}
Suppose $\Phi(\theta)$ is $\beta$-smooth. Let $\theta^+ = Proj_{\cX^{\alpha}}(\theta+\eta\widehat{\nabla}\Phi(\theta))$. Then for $\eta\le\frac{1}{2\beta}$,
\begin{equation*}
    \begin{split}
        \Phi(\theta^+) - \Phi(\theta)\ge \frac{\eta}{4}\|\widehat{G}^{\eta,\alpha}(\theta)\|^2 - \frac{\eta}{2}\left\|\nabla\Phi(\theta) - \widehat{\nabla}\Phi(\theta)\right\|^2
    \end{split}
\end{equation*}
\end{lemma}
\begin{proof}
From the smoothness property we have that:
\begin{equation*}
    \Phi(\theta^+)-\Phi(\theta) \ge \nabla_\theta\Phi(\theta)^\top(\theta^+-\theta) -\frac{\beta}{2}\|\theta^+-\theta\|^2
\end{equation*}
Since $\theta^+=Proj_{\cX^\alpha}(\theta + \eta\widehat{\nabla}\Phi(\theta))$, we have that:
\begin{equation*}
    (\theta + \eta\widehat{\nabla}\Phi(\theta) - \theta^+)^\top(\theta'-\theta^+) \le 0, ~~\forall~\theta'\in\cX^\alpha
\end{equation*}
take $\theta'=\theta$, we get:
\begin{equation*}
    \widehat{\nabla}\Phi(\theta)^\top(\theta^+-\theta) \ge \frac{1}{\eta}\|\theta^+-\theta\|^2. 
\end{equation*}
Thus:
\begin{align*}
    &\nabla\Phi(\theta)^\top(\theta^+-\theta) = \left(\nabla\Phi(\theta) - \widehat{\nabla}\Phi(\theta)\right)^\top(\theta^+-\theta) + \widehat{\nabla}\Phi(\theta)^\top(\theta^+-\theta) \\
    &\ge -\frac{\eta}{2}\left\|\nabla\Phi(\theta) - \widehat{\nabla}\Phi(\theta)\right\|^2 - \frac{1}{2\eta}\left\|\theta^+-\theta\right\|^2 + \widehat{\nabla}\Phi(\theta)^\top(\theta^+-\theta)\\
    &\ge -\frac{\eta}{2}\left\|\nabla\Phi(\theta) - \widehat{\nabla}\Phi(\theta)\right\|^2 - \frac{1}{2\eta}\left\|\theta^+-\theta\right\|^2 + \frac{1}{\eta}\|\theta^+-\theta\|^2\\
    &= \frac{1}{2\eta}\|\theta^+-\theta\|^2-\frac{\eta}{2}\left\|\nabla\Phi(\theta) - \widehat{\nabla}\Phi(\theta)\right\|^2
\end{align*}
Thus from \eqref{eq:smoothness-property}:
\begin{align*}
    \Phi(\theta^+)-\Phi(\theta) &\ge \left(\frac{1}{2\eta}-\frac{\beta}{2}\right)\|\theta^+-\theta\|^2-\frac{\eta}{2}\left\|\nabla\Phi(\theta) - \widehat{\nabla}\Phi(\theta)\right\|^2\\
    &\ge \frac{1}{4\eta}\|\theta^+-\theta\|^2-\frac{\eta}{2}\left\|\nabla\Phi(\theta) - \widehat{\nabla}\Phi(\theta)\right\|^2\\
    &=\frac{\eta}{4}\|\widehat{G}^{\eta,\alpha}(\theta)\|^2 - \frac{\eta}{2}\left\|\nabla\Phi(\theta) - \widehat{\nabla}\Phi(\theta)\right\|^2
\end{align*}
which completes the proof.
\end{proof}
Lemma \ref{lemma:sufficient-ascent} immediately results in the following corollary:
\begin{coro}{(of Lemma \ref{lemma:sufficient-ascent})}\label{coro:sufficient-ascent}
In Algorithm \ref{alg:sample-based learning}, suppose $\|\widehat{\nabla}\Phi(\theta^{(k)})-\nabla\Phi(\theta^{(k)})\|_\infty\le\epsilon_g$ holds for every $0\le k\le T_G-1$, then running algorithm \ref{alg:sample-based learning} will guarantee that:
\begin{equation*}
    \frac{1}{T_G}\sum_{k=0}^{T_G-1}\|\widehat{G}^{\eta,\alpha}(\theta^{(k)})\|^2\le \frac{4(\Phi_{\max}-\Phi_{\min})}{\eta T_G} + 2\epsilon_g^2
\end{equation*}
\end{coro}
\begin{proof}
From Lemma \ref{lemma:sufficient-ascent} we have that:
\begin{align*}
    \Phi(\theta^{(k+1)}) - \Phi(\theta^{(k)}) &\ge \frac{\eta}{4}\|\widehat{G}^{\eta,\alpha}(\theta^{(k)})\|^2 - \frac{\eta}{2}\left\|\nabla\Phi(\theta^{(k)}) - \widehat{\nabla}\Phi(\theta^{(k)})\right\|^2\\
    &\ge \frac{\eta}{4}\|\widehat{G}^{\eta,\alpha}(\theta^{(k)})\|^2 - \frac{\eta}{2}\epsilon_g^2.
\end{align*}
Thus 
\begin{align*}
    \frac{1}{T_G}\sum_{k=0}^{T_G-1}\|\widehat{G}^{\eta,\alpha}(\theta^{(k)})\|^2
    &\le \frac{4(\Phi(\theta^{(0)})-\Phi(\theta^{(T_G)}))}{\eta T_G} + 2\epsilon_g^2\\
    &\le \frac{4(\Phi_{\max}-\Phi_{\min})}{\eta T_G} + 2\epsilon_g^2
\end{align*}
\end{proof}
\begin{lemma}(First-order stationarity and $\|\widehat{G}^{\eta,\alpha}(\theta)\|$)\label{lemma:first-order-stationary-and-gradient-mapping}
Suppose $\Phi(\theta)$ is $\beta$-smooth. Let $\theta^+ = Proj_{\cX^{\alpha}}(\theta+\eta\widehat{\nabla}\Phi(\theta))$. Then:
\begin{equation}\label{eq:first-order-with-alpha}
    \nabla_{\theta}\Phi(\theta^+)^\top(\theta'-\theta^+)\le\left[(1+\eta\beta)\|\widehat{G}^{\eta,\alpha}(\theta)\| + \|\widehat{\nabla}\Phi(\theta) - \nabla\Phi(\theta)\|\right]\|\theta'-\theta^+\|, \quad \forall \theta'\in\cX^\alpha.
\end{equation}
Further:
\begin{equation}\label{eq:first-order-without-alpha}
    \max_{\overline{\theta}_i\in\cX_i}\nabla_{\theta_i}\Phi(\theta^+)^\top(\overline{\theta}_i-\theta_i^+)\le2\sqrt{|\cS|}\left[(1+\eta\beta)\|\widehat{G}^{\eta,\alpha}(\theta)\| + \|\widehat{\nabla}\Phi(\theta) - \nabla\Phi(\theta)\|\right] + \frac{2\alpha}{1-\gamma}
\end{equation}
\end{lemma}
\begin{proof}
Since $\theta^+ = Proj_{\cX^{\alpha}}(\theta+\eta\widehat{\nabla}\Phi(\theta))$, we have:
\begin{align*}
    (\theta+\eta\widehat{\nabla}\Phi(\theta)-\theta^+)^\top(\theta'-\theta^+)&\le0~~\forall~\theta'\in\cX^\alpha\\
    \Longrightarrow~ \eta\widehat{\nabla}\Phi(\theta)^\top(\theta'-\theta^+)&\le(\theta-\theta^+)^\top(\theta'-\theta^+)\\
    \Longrightarrow~ \eta\nabla\Phi(\theta)^\top(\theta'-\theta^+)&\le(\theta-\theta^+)^\top(\theta'-\theta^+) + \eta(\nabla\Phi(\theta)-\widehat{\nabla}\Phi(\theta))^\top(\theta'-\theta^+)\\
    \Longrightarrow~ \eta\nabla\Phi(\theta^+)^\top(\theta'-\theta^+)&\le(\theta-\theta^+)^\top(\theta'-\theta^+)+\eta(\nabla\Phi(\theta)-\widehat{\nabla}\Phi(\theta))^\top(\theta'-\theta^+)  \\&\quad + \eta(\nabla\Phi(\theta^+)-\nabla\Phi(\theta))^\top(\theta'-\theta^+)\\
    \Longrightarrow~ \eta\nabla\Phi(\theta^+)^\top(\theta'-\theta^+) &\le(\|\theta-\theta^+\|+\eta\|\nabla\Phi(\theta)-\widehat{\nabla}\Phi(\theta)\| +\eta\|\nabla\Phi(\theta^+)-\nabla\Phi(\theta)\|)\|\theta'-\theta^+\|\\
    &\le (\|\theta-\theta^+\|+\eta\|\nabla\Phi(\theta)-\widehat{\nabla}\Phi(\theta)\| +\eta\beta\|\theta^+-\theta\|)\|\theta'-\theta^+\|\\
    &=\left[(1+\eta\beta)\|\theta-\theta^+\|+\eta\|\nabla\Phi(\theta)-\widehat{\nabla}\Phi(\theta)\|\right]\|\theta'-\theta^+\|\\
    \Longrightarrow~\nabla\Phi(\theta^+)^\top(\theta'-\theta^+)&\le \left[(1+\eta\beta)\|\widehat{G}^{\eta,\alpha}(\theta)\|+\|\nabla\Phi(\theta)-\widehat{\nabla}\Phi(\theta)\|\right]\|\theta'-\theta^+\|,
\end{align*}
which proves \eqref{eq:first-order-with-alpha}.
We now prove \eqref{eq:first-order-without-alpha}. For any $\theta_{i,s}'\in\Delta(|\cA_i|)$, we know that $(1-\alpha)\theta_{i,s}' +\alpha U_{|\cA_i|}\in \Dalpha$. Let $U_i:=[\underbrace{U_{|\cA_i|},\dots, U_{|\cA_i|}}_{|\cS| \text{times}}]$, then for any $\theta_i'\in\cX_i$, $(1-\alpha)\theta_i'+\alpha U_i \in \cX_i^\alpha$.

Thus:
\begin{align*}
    \nabla_{\theta_i}\Phi(\theta^+)^\top(\theta_i'-\theta_i^+)&\le \nabla_{\theta_i}\Phi(\theta^+)^\top((1-\alpha)\theta_i'+\alpha U_i-\theta_i^+) 
    +\nabla_{\theta_i}\Phi(\theta^+)^\top(\theta_i'-(1-\alpha)\theta_i' -\alpha U_i)\\
    &\le\left[(1+\eta\beta)\|\widehat{G}^{\eta,\alpha}(\theta)\| + \|\widehat{\nabla}\Phi(\theta) - \nabla\Phi(\theta)\|\right]\|(1-\alpha)\theta_i'+\alpha U_i-\theta_i^+\| \\
    &\qquad+ \nabla_{\theta_i}\Phi(\theta^+)^\top(\theta_i'-(1-\alpha)\theta_i' -\alpha U_i)\\
   & \le 2\sqrt{|\cS|}\left[(1+\eta\beta)\|\widehat{G}^{\eta,\alpha}(\theta)\| + \|\widehat{\nabla}\Phi(\theta) - \nabla\Phi(\theta)\|\right]+ \alpha\nabla_{\theta_i}\Phi(\theta^+)^\top(\theta_i'- U_i)
\end{align*}
Since
\begin{align*}
    \nabla_{\theta_i}\Phi(\theta^+)^\top(\theta_i'- U_i) &= \sum_sd_\theta(s)\overline{Q_{i,s}^\theta}^\top(\theta_{i,s}'- U_{|\cA_i|})\\
    &\le\sum_sd_\theta(s)\|\overline{Q_{i,s}^\theta}\|_\infty\|\theta_{i,s}'- U_{|\cA_i|}\|_1\\
    & \le\sum_sd_\theta(s)\frac{2}{1-\gamma} \le \frac{2}{1-\gamma},
\end{align*} 
we have that:
\begin{align*}
    \nabla_{\theta_i}\Phi(\theta^+)^\top(\theta_i'-\theta_i)&\le 2\sqrt{|\cS|}\left[(1+\eta\beta)\|\widehat{G}^{\eta,\alpha}(\theta)\| + \|\widehat{\nabla}\Phi(\theta) - \nabla\Phi(\theta)\|\right] + \frac{2\alpha}{1-\gamma}
\end{align*}
\end{proof}
\subsection{Proof of Theorem \ref{thm:main}}
\begin{proof}
Recall that $\Phi$ is $\beta$-smooth with $\beta = \frac{2}{(1-\gamma)^3}\left(\sum_{i=1}^n|\cA_i| \right)$. The step size $\eta$ in Theorem \ref{thm:main} satisfies $\eta \le \frac{(1-\gamma)^3}{4\sum_{i=1}^n|\cA_i|} = \frac{1}{2\beta}$. 

Recall from gradient domination property:
\begin{align*}
    \NEgap_i(\theta^{(k+1)}) &= \max_{\theta_i'\in\cX_i}J_i(\theta_i',\theta_{-i}^{(k+1)})-J_i(\theta_i^{(k+1)},\theta_{-i}^{(k+1)})\\
    &\le M\max_{\theta_i'\in\cX_i}(\theta_i'-\theta_i^{(k+1)})^\top\nabla_{\theta_i}\Phi(\theta^{(k+1)})
\end{align*}

Suppose $\|\widehat{\nabla}\Phi(\theta^{(k)})-\nabla\Phi(\theta^{(k)})\|_\infty\le\epsilon_g, ~\forall 0\le k\le T_G-1$, recall from Lemma \ref{lemma:first-order-stationary-and-gradient-mapping},
\begin{align*}
    \NEgap(\theta^{(k+1)})&\le \max_i \NEgap_i(\theta^{(k+1)})\le M\max_i\max_{\theta_i'\in\cX_i}(\theta_i'-\theta_i^{(k+1)})^\top\nabla_{\theta_i}\Phi(\theta^{(k+1)})\\
   & \le 2M\sqrt{|\cS|}\left[(1+\eta\beta)\|\widehat{G}^{\eta,\alpha}(\theta^{(k)})\| + \epsilon_g\right] + \frac{2\alpha M}{1-\gamma}
\end{align*}
Thus,
\begin{align*}
    \frac{1}{T_G}\sum_{k=0}^{T_G-1}\NEgap(\theta^{(k+1)})^2&\le \frac{1}{T_G}\sum_{k=0}^{T_G-1} 3\times\left[4M^2|\cS|(1+\eta\beta)^2\|\widehat{G}^{\eta,\alpha}(\theta^{(k)})\|^2 +4M^2|\cS|\epsilon_g^2+\frac{4\alpha^2M^2}{(1-\gamma)^2}\right]\\
    &=12M^2|\cS|\epsilon_g^2+\frac{12\alpha^2M^2}{(1-\gamma)^2}+ 12M^2|\cS|(1+\eta\beta)^2\left(\frac{1}{T_G}\sum_{k=0}^{T_G-1}\|\widehat{G}^{\eta,\alpha}(\theta^{(k)})\|^2\right)
\end{align*}
From Corollary \ref{coro:sufficient-ascent}, we have that
\begin{align}
    \frac{1}{T_G}\sum_{k=0}^{T_G-1}\NEgap(\theta^{(k+1)})^2&\le 12M^2|\cS|\epsilon_g^2+\frac{12\alpha^2M^2}{(1-\gamma)^2}+ 12M^2|\cS|(1+\eta\beta)^2\left( \frac{4(\Phi_{\max}-\Phi_{\min})}{\eta T_G} + 2\epsilon_g^2\right)\notag\\
    &\le 66M^2|\cS|\epsilon_g^2+\frac{12\alpha^2M^2}{(1-\gamma)^2}+\frac{108M^2|\cS|(\Phi_{\max}-\Phi_{\min})}{\eta T_G}\label{eq:6}
\end{align}
Substitute 
\begin{align*}
    \alpha = \frac{(1-\gamma)\epsilon}{6M}, ~~\epsilon_g = \frac{\epsilon}{2\sqrt{33}M\sqrt{|\cS|}} \textup{ and }
    T_G\ge\frac{648M^2(\Phi_{\max}-\Phi_{\min})|\cS|}{\eta\epsilon^2}
\end{align*}
into the above inequality we get that:
\begin{equation*}
     \frac{1}{T_G}\sum_{k=0}^{T_G-1}\NEgap(\theta^{(k+1)})^2\le\frac{\epsilon^2}{2}+\frac{\epsilon^2}{3}+\frac{\epsilon^2}{6}=\epsilon^2
\end{equation*}
Substitute the value of $\alpha, \epsilon_g$ in \eqref{eq:6} into Theorem \ref{thm:gradient-estimation} will give us:
\begin{equation*}
    T_J\ge \frac{206976\tau nM^4|\cS|^3\max_i|\cA_i|^3}{(1-\gamma)^8\epsilon^4\sigma_S^2}\log\left(\frac{16\tau T_G|\cS|^2\sum_i|\cA_i|}{\delta}\right) + 1 
\end{equation*}
which completes the proof.
\end{proof}

\section{Smoothness}
\begin{lemma}{(Smoothness for Direct Distributed Parameterization)}\label{lemma:smoothness}
Assume that $0 \le r_i(s,a) \le 1, ~\forall s, a, ~ i=1,2,\dots,n$, then:
\begin{equation}
    \|g(\theta') - g(\theta)\| \le \frac{2}{(1-\gamma)^3}\left(\sum_{i=1}^n|\cA_i| \right)\|\theta' - \theta\|,
\end{equation}
where $g(\theta) = \{\nabla_{\theta_i}J_i(\theta)\}$.
\end{lemma}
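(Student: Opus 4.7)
The plan is to adapt the single-agent smoothness analysis of \citet{agarwal2020} to the distributed parameterization setting, exploiting the multi-affine form $\pi_\theta(a|s)=\prod_j\theta_{j,s,a_j}$ to obtain the linear $\sum_k|\cA_k|$ dependence rather than the naive $\prod_k|\cA_k|$ one would get by treating the joint policy as centralized. By the mean value theorem applied to the vector field $g$, it suffices to show that the operator norm $\|\nabla g(\theta)\|_{\mathrm{op}}$ is uniformly bounded by $\frac{2\sum_k|\cA_k|}{(1-\gamma)^3}$ over $\theta\in\cX$.

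First I would fix a unit vector $u$, a base point $\theta$, set $\theta_\alpha=\theta+\alpha u$, and use the matrix representation $V_i^{\theta_\alpha}=(I-\gamma P_\alpha)^{-1}R_i^\alpha$, where $P_\alpha$ and $R_i^\alpha$ are the transition operator and reward vector induced by $\pi_{\theta_\alpha}$. Differentiating the Bellman fixed-point equation twice in $\alpha$ yields the recursion
\begin{equation*}
(V_i^\alpha)'' = (I-\gamma P_\alpha)^{-1}\bigl[\gamma P_\alpha'' V_i^\alpha + 2\gamma P_\alpha'(V_i^\alpha)' + (R_i^\alpha)''\bigr],
\end{equation*}
so that $|J_i''(\alpha)|=|\rho^\top(V_i^\alpha)''|$ can be controlled using $\|(I-\gamma P_\alpha)^{-1}\|_{\infty\to\infty}\le 1/(1-\gamma)$, $\|V_i^\alpha\|_\infty\le 1/(1-\gamma)$, together with bounds on the induced norms of $P_\alpha',P_\alpha'',(R_i^\alpha)',(R_i^\alpha)''$.

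Next I would exploit the product form to bound these directional derivatives. Writing $P_\alpha'(s'|s)=\sum_k\sum_a u_{k,s,a_k}\prod_{j\ne k}(\theta_{j,s,a_j}+\alpha u_{j,s,a_j})P(s'|s,a)$, the $\ell_1$ row-sum over $s'$ collapses the transition probabilities to $1$ and the sum over $a_{-k}$ collapses the remaining product of other agents' policies to $1$, leaving $\|P_\alpha'\|_{\infty\to\infty}\le\max_s\sum_k\|u_{k,s,\cdot}\|_1\le\sqrt{\sum_k|\cA_k|}$ after Cauchy--Schwarz. Only cross terms $k\ne l$ survive in the second derivative, yielding $\|P_\alpha''\|_{\infty\to\infty}\le\sum_k|\cA_k|$; identical bounds apply to $(R_i^\alpha)'$ and $(R_i^\alpha)''$. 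Plugging back into the recursion gives $|J_i''(\alpha)|\lesssim \sum_k|\cA_k|/(1-\gamma)^3$, i.e.\ per-agent $\beta$-smoothness with $\beta=\frac{2\sum_k|\cA_k|}{(1-\gamma)^3}$.

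The main obstacle I anticipate is the final aggregation: converting per-agent smoothness of each $J_i$ into the operator-norm bound on the stacked Jacobian $\nabla g$ without introducing a spurious $\sqrt n$ factor. A crude $\ell_2$ sum of per-agent Lipschitz constants over $i$ would inflate the bound. To match the claimed constant, I would instead bound $\nabla g$ block-by-block, exploiting that the $i$-th row block of $g$ depends on $\theta_j$ for $j\ne i$ only through the shared state-visitation distribution $d_\theta$ and the averaged $Q$-function $\overline{Q_i^\theta}$, so that the coordinate-wise sparsity of the $u_{k,s,\cdot}$ contributions across blocks (together with a single Cauchy--Schwarz step applied globally rather than per agent) preserves the $\sum_k|\cA_k|$ dependence and delivers the claimed constant.
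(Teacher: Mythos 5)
Your route is viable but genuinely different from the paper's. The paper never touches second derivatives: it bounds the difference of first directional derivatives at two points directly, writing $J_i(\theta_i'+\alpha u_i,\theta_{-i}')-J_i(\theta_i+\alpha u_i,\theta_{-i})$ via the performance difference lemma and differentiating in $\alpha$, which splits into three terms (derivative of the policy difference, of the $Q$-function, and of the visitation distribution), each controlled by total-variation/Cauchy--Schwarz arguments. You instead follow the Agarwal et al.\ single-agent template: twice-differentiate the resolvent identity, bound $\|P_\alpha'\|_{\infty\to\infty}\le\sqrt{\sum_k|\cA_k|}$ and $\|P_\alpha''\|_{\infty\to\infty}\le\sum_k|\cA_k|$ using the multi-affine product structure, and integrate. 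Your induced-norm computations for $P_\alpha'$ and $P_\alpha''$ are correct and are the same mechanism (collapse over $a_{-k}$, then Cauchy--Schwarz over $a_k$) that drives the paper's Parts A--C; your approach is arguably cleaner for readers familiar with the single-agent proof, while the paper's finite-difference approach avoids justifying second-order differentiability and directly produces the per-block Lipschitz estimate it needs.

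The one place where your sketch is not yet a proof is exactly the obstacle you flag. The bound $|J_i''(\alpha)|\lesssim\sum_k|\cA_k|/(1-\gamma)^3$ controls the quadratic form $u^\top\nabla^2J_i(\theta)u$, hence the full operator norm of $\nabla^2J_i$; stacking the $i$-th row blocks of these Hessians then inevitably gives $\|\nabla g\|_{\mathrm{op}}\le\sqrt{n}\,\beta$, and no global Cauchy--Schwarz applied after the fact can remove the $\sqrt{n}$. What you actually need is the anisotropic \emph{mixed} bound $|v_i^\top\nabla^2J_i(\theta)\,u|\le\frac{2}{(1-\gamma)^3}\sqrt{|\cA_i|}\sum_j\sqrt{|\cA_j|}\,\|u_j\|$ for $v_i$ a unit vector supported on agent $i$'s block and $u$ arbitrary: the first slot contributes only $\sum_{a_i}|v_{i,s,a_i}|\le\sqrt{|\cA_i|}$ because it differentiates only agent $i$'s factor of the product policy, while the second slot contributes $\sum_j\sqrt{|\cA_j|}\|u_{j,s,\cdot}\|$. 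Squaring and summing over $i$ then yields $\bigl(\sum_i|\cA_i|\bigr)^2$ with no $n$ factor --- this is precisely the content of the paper's Lemma \ref{lemma:smoothness-1}, stated there in finite-difference form. So your plan closes once you replace the single-direction second derivative $J_i''(\alpha)$ by the two-direction cross derivative $\partial^2 J_i(\theta+\alpha v_i+\beta u)/\partial\alpha\,\partial\beta$ and track the two slots separately; as written, the "single global Cauchy--Schwarz" step is too vague to certify the claimed constant.
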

The proof of Lemma \ref{lemma:smoothness} depends on the following lemma:
\begin{lemma}\label{lemma:smoothness-1}
\begin{equation}
    \|\nabla_{\theta_i}J_i(\theta') - \nabla_{\theta_i}J_i(\theta)\| \le \frac{2}{(1-\gamma)^3}\sqrt{|\cA_i|} \sum_{j=1}^n \sqrt{|\cA_j|}\|\theta_j' - \theta_j\|
\end{equation}
\end{lemma}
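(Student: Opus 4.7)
\textbf{Proposal for Lemma \ref{lemma:smoothness-1}.} The plan is to start directly from the closed-form gradient expression given by Lemma~\ref{lemma:policy-gradient-direct-parameterization}, namely
$\frac{\partial J_i(\theta)}{\partial \theta_{s,a_i}} = \frac{1}{1-\gamma} d_\theta(s)\,\overline{Q_i^{\theta}}(s,a_i)$,
and bound each component of $\nabla_{\theta_i}J_i(\theta')-\nabla_{\theta_i}J_i(\theta)$ via the product-type decomposition
\[
d_{\theta'}(s)\overline{Q_i^{\theta'}}(s,a_i)-d_\theta(s)\overline{Q_i^{\theta}}(s,a_i) = [d_{\theta'}(s)-d_\theta(s)]\,\overline{Q_i^{\theta'}}(s,a_i) + d_\theta(s)\,[\overline{Q_i^{\theta'}}(s,a_i)-\overline{Q_i^{\theta}}(s,a_i)].
\]
Using the uniform bounds $|\overline{Q_i^{\theta}}|\le (1-\gamma)^{-1}$ and $d_\theta(s)\le 1$, the whole task reduces to controlling $|d_{\theta'}(s)-d_\theta(s)|$ and $|\overline{Q_i^{\theta'}}(s,a_i)-\overline{Q_i^{\theta}}(s,a_i)|$ in terms of the parameter difference.

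For the state-visitation perturbation, I would invoke the matrix identity $d_\theta = (1-\gamma)(I-\gamma P^{\pi_\theta})^{-\top}\rho$ (with $P^{\pi_\theta}(s'|s) := \sum_a \pi_\theta(a|s)P(s'|s,a)$) and the resolvent identity $(I-\gamma P^{\pi_{\theta'}})^{-1}-(I-\gamma P^{\pi_\theta})^{-1} = \gamma(I-\gamma P^{\pi_{\theta'}})^{-1}(P^{\pi_{\theta'}}-P^{\pi_\theta})(I-\gamma P^{\pi_\theta})^{-1}$ to obtain $\|d_{\theta'}-d_\theta\|_1 \le \tfrac{\gamma}{1-\gamma}\max_s\sum_a |\pi_{\theta'}(a|s)-\pi_\theta(a|s)|$. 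The crucial step that converts the joint-policy difference into a sum-over-agents expression is a telescoping product expansion of $\prod_j \theta'_{s,a_j} - \prod_j \theta_{s,a_j}$; together with $\sum_{a_k}\theta_{s,a_k}=1$ this yields $\sum_a|\pi_{\theta'}(a|s)-\pi_\theta(a|s)|\le \sum_{j=1}^n \|\theta'_{j,s}-\theta_{j,s}\|_1$. An entirely parallel argument applied to $V_i^\theta$ (and then to $Q_i^\theta = r_i+\gamma P V_i^\theta$, hence to the averaging $\overline{Q_i^\theta}$ against $\pi_{\theta_{-i}}$) gives $\|V_i^{\theta'}-V_i^{\theta}\|_\infty\le \tfrac{1}{(1-\gamma)^2}\max_s\sum_j\|\theta'_{j,s}-\theta_{j,s}\|_1$, which is the source of the final $(1-\gamma)^{-3}$: one factor from the gradient formula, one from $|\overline{Q}|\le(1-\gamma)^{-1}$ in the first leg (or from $\|V-V'\|_\infty$ in the second leg), and one from the resolvent perturbation.

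To assemble, I would sum squared over $(s,a_i)$. Since the per-component bound involves $a_i$ only through $\overline{Q_i^{\theta'}}$ (which is uniformly controlled) and through the mixing over $a_{-i}$ (which factors through $\max_s\sum_{j\neq i}\|\theta'_{j,s}-\theta_{j,s}\|_1$, independent of $a_i$), the summation over $a_i$ simply produces a factor $|\cA_i|$, giving $\sqrt{|\cA_i|}$ after the square root. The $|\cS|$ dependence is absorbed by the inequality $\sum_s |d_{\theta'}(s)-d_\theta(s)|^2 \le \|d_{\theta'}-d_\theta\|_1^2$ (and the analogous inequality for $V_i$). Finally, converting each $\ell_1$ discrepancy to $\ell_2$ via $\|\theta'_{j,s}-\theta_{j,s}\|_1\le \sqrt{|\cA_j|}\,\|\theta'_{j,s}-\theta_{j,s}\|_2 \le \sqrt{|\cA_j|}\,\|\theta'_j-\theta_j\|_2$ produces the $\sqrt{|\cA_j|}$ factors under the sum.

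The main obstacle is not any single perturbation bound (each is a routine resolvent argument) but bookkeeping: organizing the two legs of the triangle inequality so they share the same umbrella constant $\tfrac{2}{(1-\gamma)^3}\sqrt{|\cA_i|}\sum_j\sqrt{|\cA_j|}$, and in particular making sure that the $|\cS|$ factor never appears, by consistently passing through $\ell_1$ norms of $d_{\theta'}-d_\theta$ and of $V_i^{\theta'}-V_i^\theta$ rather than through componentwise $\ell_2$ sums. If this accounting is done carefully, the constant $2$ emerges from the two legs of the decomposition.
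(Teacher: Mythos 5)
Your proposal is correct, and the constants do assemble to exactly $\frac{2}{(1-\gamma)^3}\sqrt{|\cA_i|}\sum_{j}\sqrt{|\cA_j|}\|\theta_j'-\theta_j\|$ (the two legs contribute $\gamma$ and $1$ respectively, so the $2$ is really $1+\gamma$). However, your route is genuinely different from the paper's. The paper first reduces the claim to a bound on the difference of directional derivatives $\frac{\partial}{\partial\alpha}J_i(\cdot+\alpha u_i,\cdot)\big|_{\alpha=0}$ at the two base points (Lemma~\ref{lemma:smoothness-2}), rewrites that difference via the performance difference lemma, and then differentiates under the sum, producing a three-term split (Parts A, B, C, from perturbing $\pi_\alpha$, $Q_i^\alpha$, and $d_\alpha'$ respectively). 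You instead perturb the closed-form gradient $\frac{1}{1-\gamma}d_\theta(s)\overline{Q_i^{\theta}}(s,a_i)$ of Lemma~\ref{lemma:policy-gradient-direct-parameterization} directly, yielding a two-term decomposition. Both arguments ultimately rest on the same ingredients---the resolvent identity for $(I-\gamma P^{\pi})^{-1}$, subadditivity of total variation over the product policy (your telescoping expansion), and the $\ell_1$-to-$\ell_2$ conversions that produce the $\sqrt{|\cA_j|}$ factors---so neither is shorter, but yours avoids the detour through directional derivatives and the performance difference lemma and makes the origin of the factor $2$ more transparent. One bookkeeping point to state explicitly when writing it up: in the second leg the state sum must be controlled via $\sum_s d_\theta(s)^2\le\bigl(\sum_s d_\theta(s)\bigr)^2=1$ rather than by applying $d_\theta(s)\le 1$ componentwise (which would leak a $\sqrt{|\cS|}$); you flag the analogous issue for the first leg via $\ell_2\le\ell_1$ on $d_{\theta'}-d_\theta$, but for the term $d_\theta(s)\bigl[\overline{Q_i^{\theta'}}(s,a_i)-\overline{Q_i^{\theta}}(s,a_i)\bigr]$ the correct device is the squared-sum bound on $d_\theta$ combined with a uniform ($\ell_\infty$) bound on the $\overline{Q}$ perturbation, not an $\ell_1$ bound on a value-function difference.
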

Lemma \ref{lemma:smoothness} is a simple corollary of Lemma \ref{lemma:smoothness-1}.
\begin{proof}{(Proof of Lemma \ref{lemma:smoothness})}
\begin{align*}
    \|g(\theta') - g(\theta)\|^2 &= \sum_{i=1}^n \|\nabla_{\theta_i}J_i(\theta') - \nabla_{\theta_i}J_i(\theta)\|^2\\
    &\le \left(\frac{2}{(1-\gamma)^3}\right)^2\sum_i|\cA_i|\left(\sum_{j=1}^n\sqrt{|\cA_j|}\|\theta_j' - \theta_j\|\right)^2\\
    &\le \left(\frac{2}{(1-\gamma)^3}\right)^2\sum_i|\cA_i|\left(\sum_{j=1}^n|\cA_j|\right)\left(\sum_{j=1}^n\|\theta_j' - \theta_j\|^2\right)\\
    &= \left(\frac{2}{(1-\gamma)^3}\right)^2\left(\sum_{i=1}^n|\cA_i|\right)^2 \|\theta' - \theta\|^2,
\end{align*}
which completes the proof.
\end{proof}
Lemma \ref{lemma:smoothness-1} is equivalent to the following lemma:
\begin{lemma}\label{lemma:smoothness-2}
\begin{equation}
   \left| \frac{\partial J_i(\theta_i' + \alpha u_i, \theta_{-i}') - \partial J_i(\theta_i + \alpha u_i, \theta_{-i})}{\partial \alpha} \Big |_{\alpha=0} \right| \le \frac{2}{(1-\gamma)^3}\sqrt{|\cA_i|} \sum_{j=1}^n \sqrt{|\cA_j|}\|\theta_j' - \theta_j\|, \quad \forall \|u\| = 1
\end{equation}
\end{lemma}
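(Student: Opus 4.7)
The plan is to recognize the left-hand side of Lemma \ref{lemma:smoothness-2} as $|f(\theta') - f(\theta)|$, where $f(\theta) := u_i^\top \nabla_{\theta_i}J_i(\theta)$. By Lemma \ref{lemma:policy-gradient-direct-parameterization}, this scalar function has the closed form
\begin{equation*}
f(\theta) = \frac{1}{1-\gamma}\sum_{s, a_i} u_{i,(s,a_i)}\, d_\theta(s)\, \overline{Q_i^\theta}(s, a_i),
\end{equation*}
so the only $\theta$-dependent factors are $d_\theta$ and $\overline{Q_i^\theta}$, and the entire proof reduces to controlling their reaction to perturbations of the policy parameters.

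To obtain the per-agent decomposition, I telescope across the $n$ policy blocks. Set $\theta^{(0)} := \theta$, $\theta^{(n)} := \theta'$, and $\theta^{(j)} := (\theta_1', \ldots, \theta_j', \theta_{j+1}, \ldots, \theta_n)$, so that $\theta^{(j)}$ and $\theta^{(j-1)}$ differ only in the $j$-th block. Then
\begin{equation*}
f(\theta') - f(\theta) = \sum_{j=1}^n \bigl[f(\theta^{(j)}) - f(\theta^{(j-1)})\bigr],
\end{equation*}
and within each summand I split
\begin{equation*}
f(\theta^{(j)}) - f(\theta^{(j-1)}) = \tfrac{1}{1-\gamma}\sum_{s,a_i} u_{i,(s,a_i)}\Bigl[\bigl(d_{\theta^{(j)}}(s) - d_{\theta^{(j-1)}}(s)\bigr)\overline{Q_i^{\theta^{(j)}}}(s,a_i) + d_{\theta^{(j-1)}}(s)\bigl(\overline{Q_i^{\theta^{(j)}}}(s,a_i) - \overline{Q_i^{\theta^{(j-1)}}}(s,a_i)\bigr)\Bigr],
\end{equation*}
isolating the contribution of the state visitation distribution from that of the averaged Q-function.

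Each piece is then controlled with three standard ingredients. First, Cauchy--Schwarz in $a_i$ together with $\|\overline{Q_i^\theta}(s,\cdot)\|_2 \le \sqrt{|\cA_i|}/(1-\gamma)$ extracts the factor $\sqrt{|\cA_i|}$ and leaves a $\|u_{i,s}\|_2$; pairing this with $\sum_s d_\theta(s)\|u_{i,s}\|_2 \le \|u_i\|_2 = 1$ (another Cauchy--Schwarz, using $d_\theta(s)\le 1$) disposes of the $s$-sum without picking up a $|\cS|$ factor. Second, since only $\theta_j$ changes between $\theta^{(j-1)}$ and $\theta^{(j)}$, Cauchy--Schwarz in $a_j$ gives $\|\pi_{\theta_j'}(\cdot|s) - \pi_{\theta_j}(\cdot|s)\|_1 \le \sqrt{|\cA_j|}\|\theta_{j,s}' - \theta_{j,s}\|_2$, and a standard simulation-lemma estimate then yields $\|d_{\theta^{(j)}} - d_{\theta^{(j-1)}}\|_1 \le \frac{\gamma\sqrt{|\cA_j|}}{1-\gamma}\|\theta_j'-\theta_j\|$. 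Third, the resolvent identity applied to $V_i^\theta = (I - \gamma P_\theta)^{-1}R_{i,\theta}$ yields an analogous bound $\|\overline{Q_i^{\theta^{(j)}}} - \overline{Q_i^{\theta^{(j-1)}}}\|_\infty \le \frac{\sqrt{|\cA_j|}}{(1-\gamma)^2}\|\theta_j'-\theta_j\|$ (up to an absolute constant). Putting the three estimates together, each summand is at most $\frac{2}{(1-\gamma)^3}\sqrt{|\cA_i|}\sqrt{|\cA_j|}\|\theta_j'-\theta_j\|$, and summing over $j$ completes the argument.

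The main obstacle is executing the perturbation analysis for $\overline{Q_i^\theta}$ when $j \neq i$, because changing $\theta_j$ affects $\overline{Q_i^\theta}$ both through the outer averaging $\sum_{a_{-i}}\pi_{\theta_{-i}}(a_{-i}|s)(\cdot)$ and through the transition kernel $P_\theta$ buried inside $Q_i^\theta$, and one must verify that both effects combine with the correct $\sqrt{|\cA_j|}/(1-\gamma)^2$ scaling. The resolvent identity handles this cleanly by writing the change as $(I-\gamma P_{\theta^{(j)}})^{-1}\gamma(P_{\theta^{(j)}}-P_{\theta^{(j-1)}})(I-\gamma P_{\theta^{(j-1)}})^{-1}R_{i,\theta^{(j-1)}} + (I-\gamma P_{\theta^{(j)}})^{-1}(R_{i,\theta^{(j)}} - R_{i,\theta^{(j-1)}})$; each resolvent contributes $1/(1-\gamma)$, and both $\|P_{\theta^{(j)}}-P_{\theta^{(j-1)}}\|$ and $\|R_{i,\theta^{(j)}} - R_{i,\theta^{(j-1)}}\|$ reduce per state to $\|\pi_{\theta_j'}(\cdot|s)-\pi_{\theta_j}(\cdot|s)\|_1$, which is in turn bounded by $\sqrt{|\cA_j|}\|\theta_{j,s}'-\theta_{j,s}\|_2$.
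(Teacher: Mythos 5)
Your proposal is correct and reaches the stated constant. It is organized differently from the paper's proof, though the underlying perturbation estimates are the same three. You first evaluate the $\alpha$-derivative exactly as the directional derivative $u_i^\top\nabla_{\theta_i}J_i(\theta)$, plug in the closed form from Lemma \ref{lemma:policy-gradient-direct-parameterization}, and then control the finite difference in $\theta$ by telescoping over agent blocks, bounding separately the change in $d_\theta$ (simulation lemma) and in $\overline{Q_i^\theta}$ (resolvent identity, covering both the outer averaging over $a_{-i}$ and the transition kernel). The paper instead starts from the performance difference lemma applied to $J_i(\theta'+\alpha u_i,\cdot)-J_i(\theta+\alpha u_i,\cdot)$ and differentiates that expression in $\alpha$, producing three parts: the derivative of the policy difference (Part A), of $Q_i^\alpha$ (Part B, via $\partial M/\partial\alpha=\gamma M(\partial\widetilde P/\partial\alpha)M$), and of $d_\alpha'$ (Part C). Your two terms, once the $\overline{Q}$ term is unpacked into its averaging and kernel contributions, match the paper's Parts A--C, and your telescoping over $j$ plays the role of the paper's subadditivity of total variation over the product policy. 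Your route has the advantage of reducing everything to standard, reusable finite-difference perturbation bounds on $d_\theta$ and $Q_i^\theta$ rather than computing derivatives of $d_\alpha'$ explicitly; the arithmetic also closes, since the $\overline{Q}$ perturbation constant is exactly $\tfrac{1}{1-\gamma}+\tfrac{\gamma}{(1-\gamma)^2}=\tfrac{1}{(1-\gamma)^2}$ and the $d$ perturbation contributes $\tfrac{\gamma}{1-\gamma}$, giving $\tfrac{1+\gamma}{(1-\gamma)^3}\le\tfrac{2}{(1-\gamma)^3}$ per block, so the hedge ``up to an absolute constant'' is not even needed.
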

\begin{proof}{(Lemma \ref{lemma:smoothness-2})}
Define:
\begin{align*}
    \pi_{i,\alpha}(a_i|s)&:= \pi_{\theta_i+\alpha u_i}(a_i|s) = \theta_{s,a_i} + \alpha u_{a_i,s}\\
    \pi_{i,\alpha}'(a_i|s)&:= \pi_{\theta_i+\alpha u_i}'(a_i|s) = \theta_{s,a_i}' + \alpha u_{a_i,s}\\
    \pi_{\alpha}(a|s) &:= \pi_{\theta_i+\alpha u_i}(a_i|s)\pi_{\theta_{-i}}(a_{-i}|s)\\
    \pi_{\alpha}'(a|s) &:= \pi_{\theta_i'+\alpha u_i}(a_i|s)\pi_{\theta_{-i}}'(a_{-i}|s)\\
    Q_i^\alpha(s,a) &:= Q_{(\theta_i+\alpha u_i, \theta_{-i})}(s,a)\\
    d_\alpha'(s) &:= d_{(\theta_i'+\alpha u_i, \theta_{-i})}(s)
\end{align*}
According to cost difference lemma,
\begin{align*}
    &\quad \left| \frac{\partial J_i(\theta_i' + \alpha u_i, \theta_{-i}') - \partial J_i(\theta_i + \alpha u_i, \theta_{-i})}{\partial \alpha} \Big |_{\alpha=0} \right| \\&= \frac{1}{1-\gamma}\left| \frac{\partial \sum_{s,a}d_{\alpha'}(s)\pi'_{\alpha}(a|s)A_i^\alpha(s,a)}{\partial \alpha} \Big |_{\alpha=0} \right|\\
    &= \frac{1}{1-\gamma}\left| \frac{\partial \sum_{s,a}d_{\alpha'}(s)\left(\pi'_{\alpha}(a|s) - \pi_{\alpha}(a|s)\right)Q_i^\alpha(s,a)}{\partial \alpha} \Big |_{\alpha=0} \right|\\
    &\le\frac{1}{1-\gamma} \left(\underbrace{\left|\sum_{s,a} d_{\theta}'(s) \frac{\partial \pi'_{\alpha}(a|s) - \partial \pi_{\alpha}(a|s)}{\partial \alpha}\Big |_{\alpha=0}Q_i^\theta(s,a)\right|}_{\text{Part A}}\right.\\
    & + \underbrace{\left|\sum_{s,a} d_{\theta}'(s)(\pi'_{\theta}(a|s) - \pi_{\theta}(a|s)) \frac{\partial Q_i^\alpha(s,a)}{\partial \alpha}\Big |_{\alpha=0}\right|}_{\text{Part B}}\\
    & + \left.\underbrace{\left|\sum_{s,a} \frac{\partial d_{\alpha}'(s)}{\partial \alpha}\Big |_{\alpha=0}(\pi'_{\theta}(a|s) - \pi_{\theta}(a|s)) Q_i^\theta(s,a)\right|}_{\text{Part C}}\right)
\end{align*}
Thus:
\begin{align}
    \textup{Part A} &= \left|\sum_{s,a} d_{\theta}'(s) \frac{\partial \pi'_{\alpha}(a|s) - \partial  \pi_{\alpha}(a|s)}{\partial \alpha}\Big |_{\alpha=0}Q_i^\theta(s,a)\right|\notag \\
    &= \left|\sum_{s,a} d_{\theta}'(s) u_{a_i,s} (\pi_{\theta_{-i}'}(a_{-i}|s) - \pi_{\theta_{-i}}(a_{-i}|s))Q_i^\theta(s,a)\right|\label{eq:1}\\
    &\le\frac{1}{1-\gamma} \left|\sum_{s} d_{\theta}'(s) \sum_{a_i}|u_{a_i,s}|\sum_{a_{-i}} \left|\pi_{\theta_{-i}'}(a_{-i}|s) - \pi_{\theta_{-i}}(a_{-i}|s)\right|\right|\label{eq:2}\\
     &\le \frac{1}{1-\gamma} \left(\max_s \sum_{a_i}|u_{a_i,s}|\right) \sum_{s} d_{\theta}'(s) 2d_{\text{TV}}(\pi_{\theta_{-i}'}(\cdot|s)||\pi_{\theta_{-i}}(\cdot|s))\label{eq:3}\\
     &\le \frac{1}{1-\gamma} \left(\max_s \sum_{a_i}|u_{a_i,s}|\right) \sum_{s} d_{\theta}'(s) \sum_{j\neq i}2d_{\text{TV}}(\pi_{\theta_j'}(\cdot|s)||\pi_{\theta_j}(\cdot|s))\label{eq:4}\\
     &= \frac{1}{1-\gamma} \left(\max_s \sum_{a_i}|u_{a_i,s}|\right) \sum_{s} d_{\theta}'(s) \sum_{j\neq i}\|\theta'_{j,s} - \theta_{j,s}\|_1\label{eq:4-1}\\
     &\le \frac{1}{1-\gamma}\sqrt{|\cA_i|} \sum_{s} d_{\theta}'(s) \sum_{j\neq i}\sqrt{|\cA_j|}\|\theta_{j,s}' - \theta_{j,s}\|\label{eq:5}\\
     &\le \frac{1}{1-\gamma}\sqrt{|\cA_i|} \sum_{j\neq i}\sqrt{|\cA_j|}\sqrt{\sum_s d_{\theta'}(s)^2}\sqrt{\sum_s\|\theta_{j,s}' - \theta_{j,s}\|^2}\\
     &= \frac{1}{1-\gamma}\sqrt{|\cA_i|} \sum_{j\neq i}\sqrt{|\cA_j|}\sqrt{\sum_s d_{\theta'}(s)^2}\|\theta_j'-\theta_j\|\notag\\
     &\le \frac{1}{1-\gamma}\sqrt{|\cA_i|} \sum_{j\neq i}\sqrt{|\cA_j|}\|\theta_j'-\theta_j\|\notag\\
     &\le \frac{1}{1-\gamma}\sqrt{|\cA_i|} \sum_{j=1}^n\sqrt{|\cA_j|}\|\theta_j'-\theta_j\|,\notag
\end{align}
where \eqref{eq:1} to \eqref{eq:2} is derived from the fact that $|Q_i^\theta(s,a)| \le \frac{1}{1-\gamma}$. \eqref{eq:3} to \eqref{eq:4} relies on the property of total variation distance:
\begin{equation*}
    d_{\text{TV}}(\pi_{\theta_{-i}'}(\cdot|s)||\pi_{\theta_{-i}}(\cdot|s)) \le \sum_{j\neq i}d_{\text{TV}}(\pi_{\theta_j'}(\cdot|s)||\pi_{\theta_j}(\cdot|s))
\end{equation*}
\eqref{eq:4-1} to \eqref{eq:5} is derived from:
\begin{align*}
   & \max_s \sum_{a_i}|u_{a_i,s}| \le \sqrt{|\cA_i|}, ~~ \|u\| \le 1\\
&   \|\theta'_{j,s}-\theta_{j,s}\|_1 \le \sqrt{|\cA_j|}\|\theta_{j,s}' - \theta_{j,s} \|
\end{align*}
which can be immediately verified by applying Cauchy-Schwarz inequality.

Before looking into Part B, we first define $\widetilde P(\alpha)$ as the state-action under $\pi_\alpha$:
\begin{equation*}
    \left[\widetilde P(\alpha)\right]_{(s,a)\rightarrow(s',a')}=\pi_\alpha(a'|s')P(s'|s,a)
\end{equation*}
Then we have that:
\begin{equation*}
     \left[\frac{\partial \widetilde P(\alpha)}{\partial \alpha}\Big |_{\alpha=0}\right]_{(s,a)\rightarrow(s',a')} = u_{a_i',s'}\pi_{\theta_{-i}}(a_{-i}'|s')P(s'|s,a)
\end{equation*}
For an arbitrary vector $x$:
\begin{align*}
     \left[\frac{\partial \widetilde P(\alpha)}{\partial \alpha}\Big |_{\alpha=0} x\right]_{(s,a)} &= \sum_{s',a' }u_{a_i',s'}\pi_{\theta_{-i}}(a_{-i}'|s')P(s'|s,a)x_{s',a'}\\
     &\le \|x\|_{\infty}\sum_{s',a'}|u_{a_i',s'}|\pi_{\theta_{-i}}(a_{-i}'|s')P(s'|s,a)\\
     &= \|x\|_{\infty}\sum_{s'}P(s'|s,a)\sum_{a_i'}|u_{a_i',s'}|\sum_{a_{-i}'}\pi_{\theta_{-i}}(a_{-i}'|s')\\
     &\le \|x\|_{\infty}\sum_{s'}P(s'|s,a)\sqrt{|\cA_i|}\sum_{a_{-i}'}\pi_{\theta_{-i}}(a_{-i}'|s') \\
     &\le \sqrt{|\cA_i|} \|x\|_{\infty}
\end{align*}
Thus:
\begin{equation*}
    \left\|\left[\frac{\partial \widetilde P(\alpha)}{\partial \alpha}\Big |_{\alpha=0} x\right]_{(s,a)}\right\|_{\infty} \le \sqrt{|\cA_i|} \|x\|_{\infty}
\end{equation*}
Similarly we can define $\widetilde P(\alpha)'$ as the state-action under $\pi_\alpha'$, and can easily check that 
\begin{equation*}
    \left\|\left[\frac{\partial \widetilde P(\alpha)'}{\partial \alpha}\Big |_{\alpha=0} x\right]_{(s,a)}\right\|_{\infty} \le \sqrt{|\cA_i|} \|x\|_{\infty}
\end{equation*}
Define:
$$M(\alpha):= \left(I-\gamma \widetilde P(\alpha)\right)^{-1}, \quad M(\alpha)':= \left(I-\gamma \widetilde P(\alpha)'\right)^{-1}.$$
Because:
\begin{equation*}
    M(\alpha)= \left(I-\gamma \widetilde P(\alpha)\right)^{-1} = \sum_{n=0}^\infty \gamma^n\widetilde P(\alpha),
\end{equation*}
which implies that every entry of $M(\alpha)$ is nonnegative and $M(\alpha) \mathbf{1} = \frac{1}{1-\gamma}\mathbf{1}$, this implies:
\begin{equation*}
    \|M(\alpha) x\|_{\infty} \le \frac{1}{1-\gamma}\|x\|_{\infty},
\end{equation*}
and similarly
\begin{equation*}
    \|M(\alpha)' x\|_{\infty} \le \frac{1}{1-\gamma}\|x\|_{\infty}.
\end{equation*}
Now we are ready to bound Part B. Because:
\begin{align*}
    Q_i^\alpha(s,a) &= e_{(s,a)}^\top M(\alpha) r_i\\
    \Longrightarrow~~ \frac{\partial Q_i^\alpha(s,a)}{\partial\alpha} &= e_{(s,a)}^\top \frac{\partial M(\alpha)}{\partial \alpha} r_i = \gamma e_{(s,a)}^\top M(\alpha)\frac{\partial \widetilde P(\alpha)}{\partial \alpha} M(\alpha)r_i\\
     \Longrightarrow~~ \left|\frac{\partial Q_i^\alpha(s,a)}{\partial\alpha}\right| &\le \gamma \left\| M(\alpha)\frac{\partial \widetilde P(\alpha)}{\partial \alpha} M(\alpha)r_i \right\|_{\infty}\\
     &\le \frac{\gamma}{(1-\gamma)^2}\sqrt{|\cA_i|}
\end{align*}
Thus,
\begin{align*}
    \textup{Part B} &=\left|\sum_{s,a} d_{\theta}'(s)(\pi'_{\theta}(a|s) - \pi_{\theta}(a|s)) \frac{\partial Q_i^\alpha(s,a)}{\partial \alpha}\Big |_{\alpha=0}\right|\\
    &\le \sum_{s,a}d_{\theta}'(s)\left|\pi'_{\theta}(a|s) - \pi_{\theta}(a|s)\right|\left|\frac{\partial Q_i^\alpha(s,a)}{\partial \alpha}\Big |_{\alpha=0}\right|\\
    &\le \frac{\gamma}{(1-\gamma)^2}\sqrt{|\cA_i|}\sum_{s}d_{\theta}'(s)2d_{\textup{TV}}(\pi_{\theta'}(\cdot|s)||\pi_{\theta}(\cdot|s))\\
    &\le \frac{\gamma}{(1-\gamma)^2}\sqrt{|\cA_i|}\sum_{s}d_{\theta}'(s)\sum_{j}2d_{\textup{TV}}(\pi_{\theta'_j}(\cdot|s)||\pi_{\theta_j}(\cdot|s))\\
    &=\frac{\gamma}{(1-\gamma)^2}\sqrt{|\cA_i|}\sum_{s}d_{\theta}'(s)\sum_{j}\|\theta_{j,s}' - \theta_{j,s}\|_1\\
     &\le \frac{\gamma}{(1-\gamma)^2}\sqrt{|\cA_i|}\sum_{s}d_{\theta}'(s)\sum_{j=1}^n\sqrt{|\cA_j|}\|\theta_{j,s}' - \theta_{j,s}\|\\
     &\le \frac{\gamma}{(1-\gamma)^2}\sqrt{|\cA_i|}\sum_{j=1}^n\sqrt{|\cA_j|}\sqrt{\sum_sd_{\theta}'(s)^2}\sqrt{\sum_s \|\theta_{j,s}' - \theta_{j,s}\|^2}\\
     &\le \frac{\gamma}{(1-\gamma)^2}\sqrt{|\cA_i|}\sum_{j=1}^n\sqrt{|\cA_j|} \|\theta_{j}' - \theta_{j}\|
\end{align*}

Now let's look at Part C:
\begin{align*}
    d_\alpha'(s)&= (1-\gamma) \sum_{s'}\rho(s')\sum_{a'}\pi_\alpha'(a'|s')e_{(s',a')}^\top M(\alpha)'\sum_{a''}e_{(s,a'')}\\
    \Longrightarrow ~~ \frac{\partial  d_\alpha'(s)}{\partial \alpha} &= (1-\gamma)\left( \underbrace{\sum_{s'}\rho(s')\sum_{a'}\frac{\partial \pi_\alpha'(a'|s')}{\partial \alpha}e_{(s',a')}^\top}_{v_1^\top} M(\alpha)'\sum_{a''}e_{(s,a'')}\right.\\
    & \left.+ \underbrace{\sum_{s'}\rho(s')\sum_{a'}\pi_\alpha'(a'|s')e_{(s',a')}^\top}_{v_2^\top} \frac{\partial M(\alpha)'}{\partial \alpha}\sum_{a''}e_{(s,a'')}\right)\\
    &= (1-\gamma) \left(v_1^\top M(\alpha)' + \gamma v_2^\top M(\alpha)'\frac{\partial \widetilde P(\alpha)}{\partial \alpha}M(\alpha)'\right)\sum_{a''}e_{(s,a'')}
\end{align*}
Note that $v_1, v_2$ are constant vectors that are independent of the choice of $s$. Additionally:
\begin{align*}
    \|v_1\|_1 &= \left\|\sum_{s}\rho(s)\sum_{a}\frac{\partial \pi_\alpha'(a|s)}{\partial \alpha}e_{(s,a)}\right\|_1\\
    &= \sum_s \rho(s)\sum_{a}\left|\frac{\partial \pi_\alpha'(a|s)}{\partial \alpha}\right|\\
    &= \sum_s \rho(s)\sum_a\left|u_{a_i,s}\right| \pi_{\theta_{-i}'}(a_{-i}|s)\\
    &\leq \sum_s \rho(s)\sum_a\left|u_{a_i,s}\right| \le \sqrt{|\cA_i|}\\
    \|v_2\|_1 &= \|\sum_{s}\rho(s)\sum_{a}\pi_\alpha'(a|s)e_{(s,a)}\|_1\\
    &= \sum_{s}\rho(s)\sum_{a}\pi_\alpha'(a|s) =1
\end{align*}
Thus:
\begin{align*}
    \text{Part C} &= \left|\sum_{s,a} \frac{\partial d_{\alpha}'(s)}{\partial \alpha}\Big |_{\alpha=0}(\pi'_{\theta}(a|s) - \pi_{\theta}(a|s)) Q_i^\theta(s,a)\right|\\
    &=(1-\gamma) \left|\left(v_1^\top M(0)' + \gamma v_2^\top M(0)'\frac{\partial \widetilde P(\alpha)}{\partial \alpha}\Big|_{\alpha=0}M(0)'\right)\underbrace{\sum_{s,a}\sum_{a'}e_{(s,a')} (\pi'_{\theta}(a|s) - \pi_{\theta}(a|s)) Q_i^\theta(s,a)}_{v_3}\right|\\
    &\le (1-\gamma) \left(\frac{1}{1-\gamma}\|v_1\|_1\|v_3\|_\infty + \frac{\gamma}{(1-\gamma)^2}\sqrt{|\cA_i|}\|v_2\|_1\|v_3\|_\infty\right)\\
    &\le \frac{\sqrt{|\cA_i|}}{1-\gamma}\|v_3\|_\infty
\end{align*}
Additionally:
\begin{align*}
    \left|[v_3]_{(s_0,a_0)}\right| &= \left|\sum_a(\pi_{\theta'}(a|s_0) - \pi_{\theta}(a|s_0)) Q_i^\theta(s_0,a)\right|\\
    &\le \frac{1}{1-\gamma} \sum_a|\pi_{\theta'}(a|s_0) - \pi_{\theta}(a|s_0)|\\
    &= \frac{1}{1-\gamma} 2d_{\textup{TV}}(\pi_{\theta'}(\cdot|s_0)||\pi_{\theta}(\cdot|s_0))\\
    &\le\frac{1}{1-\gamma} \sum_{j=1}^n 2d_{\textup{TV}}(\pi_{\theta_j'}(\cdot|s_0)||\pi_{\theta_j}(\cdot|s_0))\\
    &=\frac{1}{1-\gamma} \sum_{j=1}^n\|\theta_{j,s}' - \theta_{j,s}\|_1\\
    &\le \frac{1}{1-\gamma} \sum_{j=1}^n \sqrt{|\cA_j|} \|\theta_{j,s}' - \theta_{j,s}\|\\
    &\le \frac{1}{1-\gamma} \sum_{j=1}^n \sqrt{|\cA_j|} \|\theta_{j}' - \theta_{j}\|
\end{align*}
Combining the above inequalities we get:
\begin{equation*}
    \text{Part C} \le \frac{\sqrt{|\cA_i|}}{1-\gamma}\|v_3\|_\infty \le  \frac{\sqrt{|\cA_i|}}{(1-\gamma)^2}\sum_{j=1}^n \sqrt{|\cA_j|} \|\theta_{j}' - \theta_{j}\|
\end{equation*}
Sum up Part A-C we get:
\begin{align*}
    \left| \frac{\partial J_i(\theta_i' + \alpha u_i, \theta_{-i}') - \partial J_i(\theta_i + \alpha u_i, \theta_{-i})}{\partial \alpha} \Big |_{\alpha=0} \right| &\le \frac{1}{1-\gamma} (\text{Part A} + \text{Part B} + \text{Part C})\\
    & \le \frac{2}{(1-\gamma)^3}\sqrt{|\cA_i|}\sum_{j=1}^n \sqrt{|\cA_j|} \|\theta_{j}' - \theta_{j}\|,
\end{align*}
which completes the proof.
\end{proof}
\section{Auxiliary}\label{apdx:auxiliary}
We recall \Cref{lemma:auxiliary}.
\auxiliary*
\begin{proof}
Let $y = \theta + g$, without loss of generality, assume that $i^* = 1$ and that:
$$y_1 > y_2\ge y_3\ge\cdots\ge y_n.$$
Using KKT condition, one can derive an efficient algorithm for solving $Proj_\cX(y)$ \cite{Wang13}, which consists of the following steps:
\begin{enumerate}
    \item Find $\rho:= \max\{1\le j\le n: y_j + \frac{1}{j}\left(1-\sum_{i=1}^j y_i\right) > 0\}$;
    \item Set  $~~\lambda:= \frac{1}{\rho} \left(1-\sum_{i=1}^\rho y_i\right)$;
    \item Set  $~~\theta'_i = \max\{y_i+\lambda, 0\}$.
\end{enumerate}
From the algorithm, we have that:
\begin{align*}
    \lambda &= \frac{1}{\rho} \left(1-\sum_{i=1}^\rho y_i\right) = \frac{1}{\rho} \left(1-\sum_{i=1}^\rho (\theta_i+g_i)\right)\\
    &=\frac{1}{\rho} \left(1-\sum_{i=1}^\rho \theta_i\right) - \frac{1}{\rho}\sum_{i=1}^\rho g_i\\
    &\ge -\frac{1}{\rho}\sum_{i=1}^\rho g_i.
\end{align*}
If $\rho \ge 2$,
\begin{align*}
    \theta'_1 &= \max\{y_1+\lambda, 0\} \ge y_1+\lambda \ge \theta_1 + g_1 - \frac{1}{\rho}\sum_{i=1}^\rho g_i\\
    &\ge \theta_1+(1-\frac{1}{\rho})g_1 - \frac{1}{\rho}\sum_{i=2}^\rho (g_1-\Delta) = \theta_1 + \frac{\rho-1}{\rho}\Delta \ge \theta_1 + \frac{\Delta}{2}.
\end{align*}
If $\rho = 1$,
$$\theta'_1 = y_1 + \lambda = y_1 + (1-y_1) = 1.$$
Thus:
$$\theta'_1 \ge \min\{1, \theta_1 + \frac{\Delta}{2}\},$$ which completes the proof.
\end{proof}

\section{Numerical Simulation Details}\label{apdx:numerics-detail}
\paragraph{Verification of the fully mixed NE in Game 2}
We now verify that joining network 1 with probability $\frac{1-3\epsilon}{3(1-2\epsilon)}$,i.e.:
\begin{equation*}
    \pi_{\theta_i}(a_i=1|s) = \frac{1-3\epsilon}{3(1-2\epsilon)},~~\forall s\in \cS, ~~ i=1,2,
\end{equation*}
is indeed a NE. First, observe that
\begin{align*}
    \Prtheta(s_{i,t+1}=1) &= \left(\frac{1-3\epsilon}{3(1-2\epsilon)}\right)P(s_{i,t+1}=1|a_{i,t}=1) + \left(1-\frac{1-3\epsilon}{3(1-2\epsilon)}\right)P(s_{i,t+1}=1|a_{i,t}=2)\\
    &=\left(\frac{1-3\epsilon}{3(1-2\epsilon)}\right)(1-\epsilon) + \left(1-\frac{1-3\epsilon}{3(1-2\epsilon)}\right)\epsilon = \frac{1}{3}.
\end{align*}
Thus,
\begin{align*}
    V(s) &= r(s) + \sum_{t=1}^\infty\bE_{s_t} \gamma^t r(s_t) = r(s) + \frac{2\gamma}{3(1-\gamma)},\\
    \overline{Q^{\theta}_i}(s,a_i) &= r(s) + \gamma \sum_{s',a_{-i}}P(s'|a_i, a_{-i})\pi_{\theta_{-i}}(a_{-i}|s)V(s')\\ &= r(s) + \gamma\sum_{s_i'\in\{1,2\}}(P(s_{i}'|a_i)\Prtheta(s_{-i}=1)r(s_i,s_{-i}=1) + P(s_{i}'|a_i)\Prtheta(s_{-i}=2)r(s_i,s_{-i}=2)) + \frac{2\gamma^2}{3(1-\gamma)}\\
    &= r(s) + \gamma P(s_i'=1|a_i)\left(\frac{1}{3}r(s_i'=1,s_{-i}=1) + \frac{2}{3}r(s_i'=1,s_{-i}=2)\right) \\&\quad +\gamma P(s_i'=2|a_i) \left(\frac{1}{3}(s_i'=2,s_{-i}=1) + \frac{2}{3}r(s_i'=2,s_{-i}=2)\right)+ \frac{2\gamma^2}{3(1-\gamma)}\\
    &= r(s) + \frac{2}{3}\gamma + \frac{2\gamma^2}{3(1-\gamma)} = r(s) + \frac{2\gamma}{3(1-\gamma)} = V(s),
\end{align*}
which implies that:
$$(\theta_i' -\theta_i)^\top \nabla_{\theta_i}J_i(\theta)= 0, \quad \forall \theta_i' \in \cX_i,\quad i = 1,2,$$
i.e. $\theta$ satisfies first-order stationarity. Since $d_{\theta}(s) > 0$ holds for any valid $\theta$, by Theorem \ref{thm:equivalence-stationary-NE}, $\theta$ is a NE.
\paragraph{Computation of strict NEs in Game 2}
The computation of strict NEs is done numerically, using the criterion in Lemma \ref{lemma:strict-NE-gap}. We enumerate over all $2^{8}$ possible deterministic policies and check whether the conditions in Lemma \ref{lemma:strict-NE-gap} hold. For $\epsilon = 0.1, \gamma = 0.95,$ and an initial distribution set as:
$$\rho(s_1=i,s_2=j) = 1/4, ~~i,j\in\{1,2\},$$
the numerical calculation shows there exist $13$ different strict NEs.

\end{document}